\renewcommand{\arraystretch}{1.4}
\newcommand{\tv}{\tilde v}
\newcommand{\tu}{\tilde u}
\newcommand{\tx}{\tilde x}
\newcommand{\ty}{\tilde y}
\newcommand{\tm}{\tilde m}
\newcommand{\wtDelta}{\widetilde \Delta}
\newcommand{\mcm}{\mathcal M}
\newcommand{\mcb}{\mathcal B}
\newcommand{\mE}{\mathbb E}
\newcommand{\mcs}{\mathcal S}
\newcommand{\mco}{\mathcal O}
\newcommand{\mf}{\mathcal F}
\newcommand{\mcv}{\mathcal V}
\newcommand{\ltwo}[1]{\left\|#1\right\|_2}
\newcommand{\nltwo}[1]{\|#1\|_2}
\newcommand{\Bltwo}[1]{\Bigg\|#1\Bigg\|_2}
\newcommand{\lone}[1]{\left|#1\right|}
\DeclareMathOperator*{\argmin}{argmin}
\newcommand{\mR}{\mathbb{R}}
\newtheorem{theorem}{Theorem}
\newtheorem{lemma}{Lemma}
\newtheorem{assumption}{Assumption}
\newtheorem{definition}{Definition}
\newcommand{\printfnsymbol}[1]{%
	\textsuperscript{\@fnsymbol{#1}}%
}
\begin{document}

\title{Gradient Free Minimax Optimization: Variance Reduction and Faster Convergence \footnote{This is an updated version to replace the previous arXiv post titled "Enhanced First and Zeroth Order Variance Reduced Algorithms for Min-Max Optimization" on 17 Jun, 2020}}

\author[1]{Tengyu Xu, Zhe Wang, Yingbin Liang}
\author[2]{H. Vincent Poor}
\affil[1]{Department of Electrical and Computer Engineering, The Ohio State University}
\affil[2]{Department of Electrical Engineering, Princeton University}
\affil[ ]{\{xu.3260, wang.10982, liang.889\}@osu.edu;\quad poor@princeton.edu}

\date{}
\maketitle

\begin{abstract}
Many important machine learning applications amount to solving minimax optimization problems, and in many cases there is no access to the gradient information, but only the function values. In this paper, we focus on such a gradient-free setting, and consider the nonconvex-strongly-concave minimax stochastic optimization problem. In the literature, various zeroth-order (i.e., gradient-free) minimax methods have been proposed, but none of them achieve the potentially feasible computational complexity of $\mathcal{O}(\epsilon^{-3})$ suggested by the stochastic nonconvex minimization theorem. In this paper, we adopt the variance reduction technique to design a novel zeroth-order variance reduced gradient descent ascent (ZO-VRGDA) algorithm. We show that the ZO-VRGDA algorithm achieves the best known query complexity of $\mathcal{O}(\kappa(d_1 + d_2)\epsilon^{-3})$, which outperforms all previous complexity bound by orders of magnitude, where $d_1$ and $d_2$ denote the dimensions of the optimization variables and $\kappa$ denotes the condition number. In particular, with a new analysis technique that we develop, our result does not rely on a diminishing or accuracy-dependent stepsize usually required in the existing methods. To our best knowledge, this is the first study of zeroth-order minimax optimization with variance reduction. Experimental results on the black-box distributional robust optimization problem demonstrates the advantageous performance of our new algorithm.

\end{abstract}

\section{Introduction}

Minimax optimization has attracted significant growth of attention in machine learning as it captures several important machine learning models and problems including generative adversarial networks (GANs) \cite{goodfellow2014generative}, robust adversarial machine learning \cite{madry2017towards}, 
imitation learning \cite{ho2016generative}, etc. Minimax optimization typically takes the following form
\begin{flalign}\label{eq: 1}
&\min_{x\in \mR^{d_1}}\max_{y\in \mR^{d_2}}f(x,y),\text{where}\, f(x,y)\triangleq \begin{cases} \mE[F(x,y;\xi)]  \quad &\text{(online case)}\\ \textstyle{\frac{1}{n}\sum_{i=1}^{n}F(x,y;\xi_i)} & \text{(finite-sum case)} \end{cases}
\end{flalign}
where $f(x,y)$ takes the expectation form if data samples $\xi$ are taken in an online fashion, and $f(x,y)$ takes the finite-sum form if a dataset of training samples $\xi_i$ for $i=1,\ldots,n$ are given in advance.

This paper focuses on the nonconvex-strongly-concave minimax problem, in which $f(x,y)$ is nonconvex with respect to $x$ for all $y\in \mR^{d_2}$, and $f(x,y)$ is $\mu$-strongly concave with respect to $y$ for all $x\in \mR^{d_1}$. The problem then takes the following equivalent form:
\begin{flalign}\label{eq: 2}
	\min_{x\in \mR^{d_1}}\Big\{ \Phi(x)\triangleq \max_{y\in \mR^{d_2}} f(x,y) \Big\}, 
\end{flalign}
where the objective function $\Phi(\cdot)$ in \cref{eq: 2} is nonconvex in general.

\renewcommand{\arraystretch}{1.5}
\begin{table*}[!t]
	\centering
	\caption{Comparison of gradient-free algorithms for nonconvex-strongly-concave minimax problems}\small
	\begin{threeparttable}
		\begin{tabular}{|c|c|c|c|c|}
			\hline
			 Algorithm & Estimator & Stepsize & Overall Complexity  \\ \hhline{|=====|}
			ZO-min-max \cite{liu2019min} & UniGE & $\mco(\kappa^{-1}\ell^{-1})$ & $\mathcal{O}((d\epsilon^{-6})$  \\ 
			ZO-SGDA \cite{wang2020zeroth} & GauGE & $\mco(\kappa^{-4}\ell^{-1})$ & $\mathcal{O}(d\kappa^5\epsilon^{-4})$  \\ 
			ZO-SGDMSA \cite{wang2020zeroth} & GauGE & $\mco(\kappa^{-1}\ell^{-1})$ & $\mathcal{O}(d\kappa^2\epsilon^{-4}\log( \frac{1}{\epsilon} ))$  \\ 
			\cellcolor{blue!15}{ZO-VRGDA} & \cellcolor{blue!15}{GauGE} & \cellcolor{blue!15}{$\mco(\textcolor{red}{\kappa^{-1}\ell^{-1}})$} & \cellcolor{blue!15}{$\mathcal{O}(\textcolor{red}{d\kappa^{3}\epsilon^{-3}})$} \\ \hline
		\end{tabular}\label{tab:comparison}
	\begin{tablenotes}
		  \item[1] "UniGE" and "GauGE" stand for "Uniform smoothing Gradient Estimator" and "Gaussian smoothing Gradient Estimator", respectively.
		\item[2] The complexity refers to the total number of queries of the function value.
		\item[3] We include only the complexity in the online case in the table, because many previous studies did not consider the finite-sum case. We comment on the finite-sum case in \Cref{sec: 4.2}.
		\item[4] We define $d=d_1+d_2$.
	\end{tablenotes}
	\end{threeparttable}
\end{table*}

In many machine learning scenarios, minimax optimization problems need to be solved without the access of the gradient information, but only the function values, e.g., in multi-agent reinforcement learning with bandit feedback \cite{wei2017online,zhang2019multi} and robotics \cite{wang2017max,bogunovic2018adversarially}. Such scenarios have motivated the design of {\bf gradient-free (i.e., zeroth-order)} algorithms, which solve the problem by querying the function values. For nonconvex-strongly-concave minimax optimization, stochastic gradient descent (SGD) type algorithms have been proposed, which use function values to form gradient estimators in order to iteratively find the solution. In particular, \cite{liu2019min} studied a constrained problem and proposed ZO-min-max algorithm that achieves an $\epsilon$-accurate solution with the function query complexity of $\mathcal{O}((d_1+d_2)\epsilon^{-6})$. \cite{wang2020zeroth} designed ZO-SGDA and ZO-SGDMSA, and between the two algorithms ZO-SGDMA achieves the better function query complexity of $\mathcal{O}((d_1+d_2)\kappa^2\epsilon^{-4}\log(1/\epsilon))$. 

Despite the previous progress, if we view the minimax problem as the nonconvex problem in \cref{eq: 2}, the lower bound on the computational complexity suggests that zeroth-order algorithms may potentially achieve the query complexity of $\mathcal{O}((d_1+d_2)\epsilon^{-3})$. But none of the previous algorithms in the literature achieves such a desirable rate. Thus, a fundamental question to ask here is as follows.
\begin{list}{$\bullet$}{\topsep=0.ex \leftmargin=0.15in \rightmargin=0.in \itemsep =0.01in}
\item {\em Can we design a better gradient-free algorithm that outperforms all existing stochastic algorithms by orders of magnitude, and can achieve the desired query complexity of $\mathcal{O}((d_1+d_2)\epsilon^{-3})$ suggested by the lower bound of gradient-based algorithms?}
\end{list}
This paper provides an affirmative answer to the above question together with the development of novel analysis tools.

\subsection{Main Contributions}

We propose the first zeroth-order variance reduced gradient descent ascent (ZO-VRGDA) algorithm for minimax optimization. ZO-VRGDA features gradient-free designs and adopts a nested-loop structure with the recursive variance reduction method incorporated for both the inner- and outer-loop updates. In particular, the outer loop adopts zeroth-order coordinate-wise estimators for accurate gradient estimation, and the inner loop adopts zeroth-order Gaussian smooth estimators for efficient gradient estimation. This is the first gradient-free variance reduced algorithm designed for minimax optimization.


We establish the convergence rate and the function query complexity for ZO-VRGDA for nonconvex-strongly-conconve minimax optimization. For the online case, we show that ZO-VRGDA achieves the best known query complexity of $\mathcal{O}((d_1+d_2)\kappa^{3}\epsilon^{-3})$, which outperforms the existing state-of-the-art (achieved by ZO-SGDMSA \cite{wang2020zeroth}) in the case with $\epsilon\leq \kappa^{-1}$. For the finite-sum case, we show that ZO-VRGDA achieves an overall query complexity of $\mco( (d_1+d_2)(\kappa^{2}\sqrt{n}\epsilon^{-2}+n) + d_2(\kappa^2+\kappa n) \log(\kappa))$ when $n\geq \kappa^2$, and $\mco( (d_1+d_2)(\kappa^2 + \kappa n)\kappa\epsilon^{-2} )$ when $n\leq \kappa^2$. Our work provides the first convergence analysis for gradient-free variance reduced algorithms for minimax optimization.

It is also instructive to compare our result with a concurrent work \cite{huang2020accelerated}, which proposed an accelerated zeroth-order momentum descent ascent (Acc-ZOMDA) method for minimax optimization. The performance difference between our ZO-VRGDA and their Acc-ZOMDA are two folds. (a) The query complexity of our ZO-VRGDA outperforms that of Acc-ZOMDA by a factor of $ (d_1+d_2)^{1/2}$, which can be significant in large dimensional problems such as the neural network training. (b) Rigorously speaking, our result characterizes the exact convergence to an $\epsilon$-accurate stationary point, whereas the convergence metric in \cite{huang2020accelerated} does not necessarily imply convergence to a stationary point. 

From the technical standpoint, differently from the previous approach (e.g., \cite{luo2020stochastic}), we develop a new analysis framework for analyzing the recursive variance reduced algorithms for minimax problems. Specifically, the main challenge for our analysis lies in bounding two inter-connected stochastic error processes: tracking error and gradient estimation error. The previous analysis forces those two error terms to be kept at $\epsilon$-level at the cost of inefficient initialization and $\epsilon$-level small stepsize. In contrast, we develop new tools to capture the coupling of the accumulative estimation error and tracking error over the entire algorithm execution, and then establish their relationships with the accumulative gradient estimators to derive the overall convergence bound. As a result, our ZO-VRGDA can adopt a more relaxed initialization and a large constant stepsize for fast running speed, and still enjoy the theoretical convergence guarantee.

\subsection{Related Work}

Due to the vast amount of studies on minimax optimization and the variance reduced algorithms, we include below only the studies that are highly relevant to this work.

Variance reduction methods for minimax optimization are inspired by those for conventional minimization problems, including SAGA \cite{defazio2014saga,reddi2016b}, SVRG \cite{johnson2013accelerating,allen2016variance,allen2017natasha}, SARAH \cite{nguyen2017sarah,nguyen2017stochastic,nguyen2018inexact}, SPIDER \cite{fang2018spider}, SpiderBoost \cite{wang2019spiderboost}, etc. But the convergence analysis for minimax optimization is much more challenging, and is typically quite different from their counterparts in minimization problems.

For {\em strongly-convex-strongly-concave minimax optimization}, \cite{palaniappan2016stochastic} applied SVRG and SAGA to the finite-sum case and established a linear convergence rate, and \cite{chavdarova2019reducing} proposed SVRE later to obtain a better bound. When the condition number of the problem is very large, \cite{luo2019stochastic} proposed a proximal point iteration algorithm to improve the performance of SAGA. For some special cases, \cite{du2017stochastic,du2019linear} showed that the linear convergence rate of SVRG can be maintained without the strongly-convex or strongly concave assumption. \cite{yang2020global} applied SVRG to study the minimax optimization under the two-sided Polyak-Lojasiewicz condition.


{\em Nonconvex-strongly-concave minimax optimization} is the focus of this paper. As we discuss at the beginning of the introduction, the SGD-type algorithms have been developed and studied, including SGDmax \cite{jin2019minmax}, PGSMD \cite{rafique2018non}, and SGDA \cite{lin2019gradient}. 
Several variance reduction methods have also been proposed to further improve the performance, including PGSVRG \cite{rafique2018non},
the SAGA-type algorithm for minimax optimization \cite{wai2019variance}, 
and SREDA \cite{luo2020stochastic}. Particularly, SREDA has been shown in \cite{luo2020stochastic} to achieve the optimal complexity dependence on $\epsilon$.

While SGD-type zeroth-order algorithms have been studied for minimax optimization, such as \cite{menickelly2020derivative,roy2019online} for convex-concave minimax problems and \cite{liu2019min,wang2020zeroth} for nonconvex-strongly-concave minimax problems, variance reduced algorithms have not been developed for {\em zeroth-order minimax optimization} so far. This paper proposes the first such an algorithm named ZO-VRGDA for nonconvex-strongly-concave minimax optimization, and established its complexity performance that outperforms the existing comparable algorithms 
(see \Cref{tab:comparison}).

\subsection{Notations}

In this paper, we use $\ltwo{\cdot}$ to denote the Euclidean norm of vectors. For a finite set $\mathcal{S}$, we denote its cardinality as $\lone{\mathcal{S}}$. For a positive integer $n$, we denote $[n]=\{1,\cdots,n\}$.

\section{Preliminaries}

We first introduce the gradient estimator that we use to design our gradient-free algorithm, and then describe the technical assumptions that we take in our analysis.

\subsection{Zeroth-order Gradient Estimator}\label{sec:gdest}

We consider the Gaussian smoothed function \cite{nesterov2017random,ghadimi2013stochastic} defined as:
\begin{flalign*}
	f_{\mu_1}(x,y)&\coloneqq \mE_{\nu,\xi}F(x+\mu_1 \nu,y,\xi),\\
	f_{\mu_2}(x,y)&\coloneqq \mE_{\omega,\xi}F(x,y+\mu_2 \omega,\xi),
\end{flalign*}
where $\nu_i\sim N(0,\mathbf{1}_{d_1})$, $\omega_i\sim N(0,\mathbf{1}_{d_2})$ with $\mathbf{1}_{d}$ denoting the identity matrices with sizes $d\times d$.
Then, in order to approximate the gradient of $f_{\mu_1}(x,y)$ and $f_{\mu_2}(x,y)$ with respect to $x$ and $y$ based on the function values, the zeroth-order stochastic gradient estimators can be constructed as
\begin{flalign}
&G_{\mu_1}(x,y,\nu_{\mcm_1},\xi_{\mcm})=\frac{1}{\lone{\mcm}}\sum_{i\in [\lone{\mcm}]} \frac{F(x+\mu_1 \nu_i,y,\xi_i)-F(x,y,\xi_i)}{\mu_1}\nu_i,\label{eq: 4}\\
&H_{\mu_2}(x,y,\omega_{\mcm_2},\xi_{\mcm})=\frac{1}{\lone{\mcm}}\sum_{i\in [\lone{\mcm}]} \frac{F(x,y+\mu_2 \omega_i,\xi_i)-F(x,y,\xi_i)}{\mu_2}\omega_i,\label{eq: 5}
\end{flalign}
 where $\lone{\mcm}=\lone{\mcm_1}=\lone{\mcm_2}$ denote the batchsize of samples. It can be shown that $G_{\mu_1}(x,y,\nu_{\mcm_1},\xi_{\mcm})$ and $H_{\mu_2}(x,y,\omega_{\mcm_2},\xi_{\mcm})$ are unbiased estimators of the true gradient of $f_{\mu_1}(x,y)$ and $f_{\mu_2}(x,y)$ with respect to $x$ and $y$ \cite{ghadimi2013stochastic}, respectively, i.e.,
\begin{flalign*}
\mE_{\nu_{\mcm_1},\xi_{\mcm}}G_{\mu_1}(x,y,\nu_{\mcm_1},\xi_{\mcm})&=\nabla_x f_{\mu_1}(x,y),\\ \mE_{\omega_{\mcm_2},\xi_{\mcm}}H_{\mu_2}(x,y,\omega_{\mcm_2},\xi_{\mcm})&=\nabla_y f_{\mu_2}(x,y).
\end{flalign*}
These zeroth-order gradient estimators are useful for us to design a gradient-free algorithm for minimax optimization.

\subsection{Technical Assumptions}

We take the following standard assumptions for the minimax problem in \cref{eq: 1} or \cref{eq: 2}, which have also been adopted in \cite{liu2019min,wang2020zeroth,huang2020accelerated,luo2020stochastic,lin2019gradient}. We slightly abuse the notation $\xi$ below to represent the random index in both the online and finite-sum cases, where in the finite-sum case, $\mE_{\xi}[\cdot]$ is with respect to the uniform distribution over $\{\xi_1,\cdots,\xi_n\}$. 

\begin{assumption}\label{ass1}
	The function $\Phi(\cdot)$ is lower bounded, i.e., we have $\Phi^*=\inf_{x\in \mR^{d_1}}\Phi(x) > -\infty$.
\end{assumption}
\begin{assumption}\label{ass2}
	The component function $F$ has an averaged $\ell$-Lipschitz gradient, i.e., for all $(x,y)$, $(x^\prime,y^\prime)\in \mR^{d_1}\times \mR^{d_2}$, we have 
	$\mE_\xi\big[\ltwo{\nabla F(x,y;\xi)-\nabla F(x^\prime,y^\prime;\xi)}^2\big]\leq \ell^2(\ltwo{x-x^\prime}^2+\ltwo{y-y^\prime}^2)$.
\end{assumption}
\begin{assumption}\label{ass3}
	The function $f$ is $\mu$-strongly-concave in $y$ for any $x\in\mR^{d_1}$, and the component function $F$ is concave in $y$, i.e., for any $x\in \mR^{d_1}$, $y,y^\prime\in \mR^{d_2}$ and $\xi$, we have
		\begin{flalign*}
		f(x,y)\leq f(x,y^\prime) + \langle \nabla_y f(x,y^\prime), y-y^\prime \rangle - \frac{\mu}{2}\ltwo{y-y^\prime},
		\end{flalign*}
		and
		\begin{flalign*}
		F(x,y;\xi)\leq F(x,y^\prime;\xi) + \big\langle \nabla_y F(x,y^\prime;\xi), y-y^\prime \big\rangle.
		\end{flalign*}
\end{assumption}

\begin{assumption}\label{ass5}
	The gradient of each component function $F(x,y;\xi)$ has a bounded variance, i.e., there exists a constant $\sigma>0$ such that for any $(x,y)\in \mR^{d_1 \times d_2}$, we have
	\begin{flalign*}
		\mE_\xi\big[\ltwo{\nabla F(x,y;\xi)-\nabla f(x,y)}^2\big]\leq \sigma^2.
	\end{flalign*}
\end{assumption}

Note that the above variance assumption is weaker than that of Acc-ZOMDA in \cite{huang2020accelerated}, because \cite{huang2020accelerated} directly requires the variance of the zeroth-order estimator to be bounded, which is not easy to verify. In contrast, we require such a condition to hold only for the original stochastic gradient estimator, which is standard in the optimization literature and can be satisfied easily in practice.


We define $\kappa\triangleq\ell/\mu$ as the condition number of the problem throughout the paper. The following structural lemma developed in \cite{lin2019gradient} provides further information about $\Phi$ for nonconvex-strongly-concave minimax optimization.
\begin{lemma}[Lemma 3.3 of \cite{lin2019gradient}]\label{lemma1}
	Under Assumption \ref{ass2} and \ref{ass3}, the function $\Phi(\cdot)=\max_{y\in \mR^{d_2}}f(\cdot,y)$ is $(\kappa+1)\ell$-gradient Lipschitz and $\nabla \Phi(x)=\nabla_x f(x, y^*(x))$ is $\kappa$-Lipschitz, where $y^*(\cdot)=\argmin_{y\in\mR^{d_2}}f(\cdot,y)$.
\end{lemma}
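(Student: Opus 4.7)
The plan is to establish the three assertions in sequence: first the Lipschitz continuity of the maximizer map $y^*(\cdot)$, then the Danskin-type gradient formula $\nabla \Phi(x)=\nabla_x f(x,y^*(x))$, and finally the smoothness of $\Phi$. The key input throughout is that Assumption~\ref{ass2} (via Jensen's inequality applied to the averaged-Lipschitz bound) implies that $\nabla f$ itself is $\ell$-Lipschitz in both $x$ and $y$, together with the $\mu$-strong concavity in $y$ from Assumption~\ref{ass3}. Note that $y^*(x)$ is well-defined since strong concavity gives a unique maximizer for each $x$.

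\textbf{Step 1: $y^*$ is $\kappa$-Lipschitz.} For any $x_1,x_2\in\mR^{d_1}$, set $y_i=y^*(x_i)$. The first-order optimality condition gives $\nabla_y f(x_i,y_i)=0$. Using $\mu$-strong concavity of $f(x_1,\cdot)$,
\begin{equation*}
\mu\ltwo{y_1-y_2}^2 \le \langle \nabla_y f(x_1,y_1)-\nabla_y f(x_1,y_2),\, y_1-y_2 \rangle = \langle \nabla_y f(x_2,y_2)-\nabla_y f(x_1,y_2),\, y_1-y_2 \rangle,
\end{equation*}
where I used $\nabla_y f(x_1,y_1)=0=\nabla_y f(x_2,y_2)$. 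By Cauchy–Schwarz and the $\ell$-Lipschitzness of $\nabla_y f$ in $x$, the right-hand side is bounded by $\ell\ltwo{x_1-x_2}\ltwo{y_1-y_2}$, which yields $\ltwo{y^*(x_1)-y^*(x_2)}\le \kappa\ltwo{x_1-x_2}$.

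\textbf{Step 2: Danskin-type gradient formula.} Because $f(x,\cdot)$ is strongly concave, $y^*(x)$ is the unique maximizer of $f(x,\cdot)$, and $f$ is continuously differentiable by Assumption~\ref{ass2}. These are exactly the hypotheses of the standard Danskin / envelope theorem, which yields $\nabla \Phi(x)=\nabla_x f(x,y^*(x))$. Combined with Step~1, one already sees that $\Phi$ is differentiable everywhere.

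\textbf{Step 3: Smoothness of $\Phi$.} Apply the triangle inequality to the formula from Step~2:
\begin{equation*}
\ltwo{\nabla\Phi(x_1)-\nabla\Phi(x_2)} \le \ltwo{\nabla_x f(x_1,y^*(x_1))-\nabla_x f(x_2,y^*(x_1))} + \ltwo{\nabla_x f(x_2,y^*(x_1))-\nabla_x f(x_2,y^*(x_2))}.
\end{equation*}
The first term is at most $\ell\ltwo{x_1-x_2}$ by Lipschitzness of $\nabla_x f$ in $x$, and the second is at most $\ell\ltwo{y^*(x_1)-y^*(x_2)}\le \ell\kappa\ltwo{x_1-x_2}$ by Lipschitzness of $\nabla_x f$ in $y$ together with Step~1. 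Summing gives the $(\kappa+1)\ell$-Lipschitz bound.

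The main conceptual obstacle is Step~1, since it requires combining the two different assumptions (strong concavity and gradient-Lipschitzness) in the correct inner-product form; once that is in hand, Steps~2 and~3 are essentially bookkeeping. A minor technical point is justifying that Assumption~\ref{ass2}'s \emph{averaged} Lipschitz condition implies pointwise Lipschitzness of $\nabla f$ — this follows from Jensen's inequality applied to $\nabla f(x,y)=\mE_\xi[\nabla F(x,y;\xi)]$.
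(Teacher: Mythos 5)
Your proof is correct and follows the standard argument; the paper itself does not prove this lemma but imports it verbatim as Lemma 3.3 of \cite{lin2019gradient}, and your three steps (Lipschitzness of $y^*$ via strong concavity plus optimality, Danskin's theorem for the gradient formula, then the triangle inequality giving $\ell+\kappa\ell$) are essentially the proof given in that cited source. One trivial slip: for a $\mu$-strongly \emph{concave} function the monotonicity inequality reads $\mu\ltwo{y_1-y_2}^2\le\langle \nabla_y f(x_1,y_2)-\nabla_y f(x_1,y_1),\,y_1-y_2\rangle$ (your gradient difference has the opposite sign), but since you immediately bound the inner product by Cauchy--Schwarz this does not affect the conclusion.
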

We let $L\triangleq (1+\kappa)\ell$ denote the Lipschitz constant of $\nabla\Phi(x)$. 
Since $\Phi$ is nonconvex in general, it is NP-hard to find its global minimum. Our goal here is to develop a gradient-free zeroth-order stochastic gradient algorithms that output an $\epsilon$-stationary point as defined below.
\begin{definition}
	The point $\bar{x}$ is called an $\epsilon$-stationary point of the differentiable function $\Phi$ if $\ltwo{\nabla \Phi(\bar{x})}\leq \epsilon$, where $\epsilon$ is a positive constant.
\end{definition}

\section{ZO-VRGDA: Zeroth-Order Variance Reduction Algorithm}\label{sc: zosreda}

In this section, we propose a new zeroth-order variance reduced gradient descent ascent (ZO-VRGDA) algorithm to solve the minimax problem in \cref{eq: 1} or \cref{eq: 2}. ZO-VRGDA (see \Cref{al:zosredaboost}) adopts a nested-loop structure, in which the parameters $x_t$ and $y_t$ are updated in a nested loop fashion: each update of $x_t$ in the outer-loop is followed by $(m+1)$ updates of $y_t$ over one entire inner loop. ZO-VRGDA incorporates the variance reduction method for both the inner-loop and outer-loop updates, and features gradient-free designs. We next describe the ZO-VRGDA algorithm in more detail as follows.

\begin{algorithm}[tb]
	\null
	\caption{ZO-VRGDA}
	\label{al:zosredaboost}
	\small
	\begin{algorithmic}[1]
		\STATE \textbf{Input:} $x_0$, initial accuracy $\zeta$, learning rate $\alpha=\Theta(\frac{1}{\kappa \ell})$, $\beta=\Theta(\frac{1}{\ell})$, batch size $\mcs_1$, $\mcs_2$ and periods $q,m$.
		\STATE \textbf{Initialization:} $y_0=\text{ZO-iSARAH}(-f(x_0,\cdot),\zeta)$ (detailed in \Cref{al:zoisarah})
		\FOR{$t=0, 1, ..., T-1$}
		\IF {$\text{mod}(k,q)=0$} 
		\STATE draw $S_1$ samples $\{ \xi_1,\cdots,\xi_{S_1} \}$
		\STATE $v_t=\frac{1}{S_1}\sum_{i=1}^{S_1}\sum_{j=1}^{d_1}\frac{F(x_t+\delta e_j, y_t, \xi_i) - F(x_t-\delta e_j, y_t, \xi_i)}{2\delta} e_j$
		\STATE $u_t=\frac{1}{S_1}\sum_{i=1}^{S_1}\sum_{j=1}^{d_2}\frac{F(x_t, y_t+\delta e_j, \xi_i) - F(x_t, y_t-\delta e_j, \xi_i)}{2\delta} e_j$
		\STATE where $e_j$ denotes the vector with $j$-th natural unit basis vector.
		\ELSE
		\STATE $v_t=\tilde{v}_{t-1,\bar{m}_{t-1}}$, $u_t=\tilde{u}_{t-1,\bar{m}_{t-1}}$
		\ENDIF
		\STATE $x_{t+1}=x_t - \alpha v_t$
		\STATE $y_{t+1}=\text{ZO-ConcaveMaximizer}(t,m,S_{2,x},S_{2,y})$ (detailed in \Cref{al:zoconcavemaximizer})
		\ENDFOR
		\STATE \textbf{Output:} $\hat{x}$ chosen uniformly at random from $\{ x_t\}_{t=0}^{T-1}$
	\end{algorithmic}
\end{algorithm}

\begin{algorithm}[tb]
	\null
	\caption{ZO-iSARAH}
	\label{al:zoisarah}
	\small
	\begin{algorithmic}[1]
		\STATE \textbf{Input:} $\tilde{w}_0$, learning rate $\gamma>0$, inner loop size $I$, batch size $B_1$ and $B_2$
		\FOR{$t=1, 2, ..., T$}
		\STATE $w_0=\tilde{w}_{t-1}$
		\STATE draw $B_1$ samples $\{ \xi_1,\cdots,\xi_{B_1} \}$
		\STATE $v_0=\frac{1}{B_1}\sum_{i=1}^{B_1}\sum_{j=1}^{d}\frac{P(w_0+\delta e_j, \xi_i) - P(w_0-\delta e_j, \xi_i)}{2\delta} e_j$
		\STATE where $e_j$ denotes the vector with $j$-th natural unit basis vector.
		\STATE $w_1=w_0 + \gamma v_0$
		\FOR{$k=1, 2, ..., I-1$}
		\STATE Draw minibatch sample $\mcm=\{ \xi_1,\cdots,\xi_{B_2}\}$ and $\mcm_1=\{\psi_{1},\cdots, \psi_{B_2}\}$
		\STATE $v_k=v_{k-1} + \Psi_\tau(w_k,\psi_{\mcm_1},\xi_{\mcm}) - \Psi_\tau(w_{k-1},\psi_{\mcm_1},\xi_{\mcm})$
		\STATE $w_{k+1}=w_k - \gamma v_k$
		\ENDFOR
		\STATE $\tilde{w}_t$ chosen uniformly at random from $\{ w_k\}_{k=0}^{I}$
		\ENDFOR
	\end{algorithmic}
\end{algorithm}

\begin{algorithm}[tb]
	\null
	\caption{$\text{ZO-ConcaveMaximizer}(t,m,S_{2,x},S_{2,y})$}
	\label{al:zoconcavemaximizer}
	\small
	\begin{algorithmic}[1]
		\STATE \textbf{Initialization:} $\tx_{t,-1}=x_t$, $\ty_{t,-1}=y_t$, $\tx_{t,0}=x_{t+1}$, $\ty_{t,0}=y_t$, $\tv_{t,-1}=v_t$, $\tu_{t,-1}=u_t$
		\STATE Draw minibatch sample $\mcm_x=\{ \xi_1,\cdots,\xi_{S_{2,x}}\}$, $\mcm_{1,x}=\{\nu_{1},\cdots, \nu_{S_{2,x}}\}$ and $\mcm_{2,x}=\{\omega_{1},\cdots,\omega_{S_{2,x}} \}$, and $\mcm_y=\{ \xi_1,\cdots,\xi_{S_{2,y}}\}$, $\mcm_{1,x}=\{\nu_{1},\cdots, \nu_{S_{2,y}}\}$ and $\mcm_{2,y}=\{\omega_{1},\cdots,\omega_{S_{2,y}} \}$
		\STATE $\tv_{t,0}=\tv_{t,-1} + G(\tx_{t,0},\ty_{t,0},\nu_{\mcm_{1,x}},\xi_{\mcm_x}) - G(\tx_{t,-1},\ty_{t,-1},\nu_{\mcm_{1,x}},\xi_{\mcm_x})$
		\STATE $\tu_{t,0}=\tu_{t,-1} + H(\tx_{t,0},\ty_{t,0},\omega_{\mcm_{2,y}},\xi_{\mcm_y}) - H(\tx_{t,-1},\ty_{t,-1},\omega_{\mcm_{2,y}},\xi_{\mcm_y})$
		\STATE $\tx_{t,1}=\tx_{t,0}$
		\STATE $\ty_{t,1}=\ty_{t,0} + \beta \tu_{t,0}$
		\FOR{$k=1, 2, ..., m+1$}
		\STATE Draw minibatch sample $\mcm_x=\{ \xi_1,\cdots,\xi_{S_{2,x}}\}$, $\mcm_{1,x}=\{\nu_{1},\cdots, \nu_{S_{2,x}}\}$ and $\mcm_{2,x}=\{\omega_{1},\cdots,\omega_{S_{2,x}} \}$, and $\mcm_y=\{ \xi_1,\cdots,\xi_{S_{2,y}}\}$, $\mcm_{1,y}=\{\nu_{1},\cdots, \nu_{S_{2,y}}\}$ and $\mcm_{2,y}=\{\omega_{1},\cdots,\omega_{S_{2,y}} \}$
		\STATE $\tv_{t,k}=\tv_{t,k-1} + G_{\mu_1}(\tx_{t,k},\ty_{t,k},\nu_{\mcm_{1,x}},\xi_{\mcm_x}) - G_{\mu_1}(\tx_{t,k-1},\ty_{t,k-1},\nu_{\mcm_{1,x}},\xi_{\mcm_x})$
		\STATE $\tu_{t,k}=\tu_{t,k-1} + H_{\mu_2}(\tx_{t,k},\ty_{t,k},\omega_{\mcm_{2,y}},\xi_{\mcm_y}) - H_{\mu_2}(\tx_{t,k-1},\ty_{t,k-1},\omega_{\mcm_{2,y}},\xi_{\mcm_y})$
		\STATE $\tx_{t,k+1}=\tx_{t,k}$
		\STATE $\ty_{t,k+1}=\ty_{t,k} + \beta \tu_{t,k}$
		\ENDFOR
		\OUTPUT $y_{t+1}=\ty_{t,\tm_t}$ with $\tm_t$ chosen uniformly at random from $\{ 0,1,\cdots, m\}$
	\end{algorithmic}
\end{algorithm}

(a) The initialization of ZO-VRGDA (line 2 of \Cref{al:zosredaboost}) utilizes a zeroth-order algorithm ZO-iSARAH (see \Cref{al:zoisarah}), which adopts a first-order algorithm iSARAH and incorporates the zeroth-order gradient estimators, to search an initialization $y_0$ with predefined accuracy $\mE[\ltwo{\nabla_yf(x_0,y_0)}^2]\leq\zeta$. In particular, ZO-iSARAH uses a small batch of sampled function values to construct Gaussian estimators for approximating gradients (line 10 of \Cref{al:zoisarah}), which is defined as
\vspace{1cm}
\begin{flalign}
\Psi_{\tau}(w,\psi_{\mcm_1},\xi_{\mcm})=\frac{1}{\lone{\mcm}}\sum_{i\in [\lone{\mcm}]} \frac{P(w+\tau \psi_i,\xi_i)-P(w,\xi_i)}{\tau}\psi_i,\label{eq: sa2}
\end{flalign}
where $\psi_i\sim N(0,\mathbf{1}_{d})$.

(b) The outer-loop updates of $x_t$ is divided into epochs for variance reduction. Consider a certain outer-loop epoch $t=\{(n_t-1)q,\cdots,n_tq-1\}$ ($1\leq n_t<\lceil T/q \rceil$ is a positive integer). At the beginning of such an epoch, ZO-VRGDA utilizes a large batch $S_1$ of the sampled function values to construct gradient-free coordinate-wise estimators for gradient $\nabla_xf(x,y)$ and $\nabla_y f(x,y)$ (see lines 6 and 7 in \Cref{al:zosredaboost}).
Note that the coordinate-wise gradient estimator is commonly taken in the zeroth-order variance reduced algorithms such as in \cite{ji2019improved,fang2018spider} for minimization problems. The batch size $S_1$ is set to be large so that gradient estimators that recursively updated in each epoch can build on an accurate estimators ($v_t$ and $u_t$). In this way, the estimators recursively updated over the entire epoch will not deviate too much from the exact gradients.

(c) For each outer-loop iteration, an inner loop of ZO-ConcaveMaximizer (see \Cref{al:zoconcavemaximizer}) (line 13 of ZO-VRGDA) uses the small batch $S_{2,x}$ and $S_{2,y}$ of sampled function values to construct a variance reduced estimators for $\nabla_xf_{\mu_1}(x,y)$ and $\nabla_y f_{\mu_2}(x,y)$, respectively, as follows
\begin{align*}
\tv_{t,k}&=\tv_{t,k-1} + G_{\mu_1}(\tx_{t,k},\ty_{t,k},\nu_{\mcm_{1,x}},\xi_{\mcm_x}) - G_{\mu_1}(\tx_{t,k-1},\ty_{t,k-1},\nu_{\mcm_{1,x}},\xi_{\mcm_x})\\
\tu_{t,k}&=\tu_{t,k-1}+ H_{\mu_2}(\tx_{t,k},\ty_{t,k},\omega_{\mcm_{2,y}},\xi_{\mcm_y})- H_{\mu_2}(\tx_{t,k-1},\ty_{t,k-1},\omega_{\mcm_{2,y}},\xi_{\mcm_y}).
\end{align*}
where the estimators $G_{\mu}(\cdot)$ and $H_{\mu}(\cdot)$ are defined in \Cref{sec:gdest}.
These zeroth-order gradient estimators are then recursively updated through the inner loop. The batch size $S_2$ is set at the same scale as epoch length $q$, so that the accumulated error of the recursively updated estimators $\tv_{t,k}$ and $\tu_{t,k}$ can be kept at a relatively low level.

In addition to the above major gradient-free designs, ZO-VRGDA also features the following enhancements over its first-order counterpart SREDA \cite{luo2020stochastic}. (a) ZO-VRGDA relaxes the initialization requirement to be $\mE[\ltwo{\nabla_y f(x_0,y_0)}^2]\leq\kappa^{-1}$, which requires only $\mathcal{O}(\kappa\log\kappa)$ gradient estimations. This improves the computational cost by a factor of $\mathcal{\tilde{O}}(\kappa\epsilon^{-2})$. (b) ZO-VRGDA adopts a much larger and $\epsilon$-{\bf in}dependent stepsize $\alpha_t=\alpha =\mathcal{O}(1/(\kappa\ell))$ for $x_t$ so that each outer-loop update can make much bigger progress. 

\section{Convergence Analysis of ZO-VRGDA} \label{sec: 4.2}

In this section, we first present our convergence results for ZO-VRGDA and then provide a proof sketch for our analysis.

\subsection{Main Results}



In order to analyze the convergence of ZO-VRGDA, we first provide the complexity analysis for the initialization algorithm ZO-iSARAH. Since the initialization is applied to the variable $y$, with respect to which the objective function is strongly concave. Hence, the initialization is equivalent to the following standard optimization problem:
\begin{flalign}
\min_{w\in\mR^d}p(w)\triangleq \mE[P(w;\xi)], \label{eq: sa1}
\end{flalign}
where $P$ is average $\ell$-gradient Lipschitz and convex, $p$ is $\mu$-strongly convex, and $\xi$ is a random vector. 

It turns out that the convergence of the zeroth-order recursive variance reduced algorithm ZO-iSARAH has not been studied before for strongly convex optimization. We thus provide the first complexity result for ZO-iSARAH to solve the problem in \cref{eq: sa1} as follows.
\begin{theorem}\label{thm: zo-sarah}
Apply ZO-iSARAH in \Cref{al:zoisarah} to solve the strongly convex optimization problem in \cref{eq: sa1}. Set $\gamma=\Theta(1/\ell)$, $B_1=\Theta(1/\epsilon)$, $B_2=d$, $I=\Theta(\kappa)$, $T=\Theta(\log(1/\epsilon))$, $\delta=\Theta(\epsilon^{0.5}/\ell d^{0.6})$, and $\tau=\min\{ \frac{\epsilon^{0.5}}{3\ell(d+3)^{1.5}}, \sqrt{\frac{2\epsilon}{5\ell\mu d}} \}$. Then, the output of \Cref{al:zoisarah} satisfies
	\begin{flalign*}
	\mE[\ltwo{\nabla p_{\tau}(\tilde{w}_T)}^2]\leq \epsilon,
	\end{flalign*}
	with the total function query complexity given by 
	\begin{flalign*}
	T\cdot (I \cdot B_2 + d\cdot B_1) = \mathcal{O}\left( d  \left(\kappa + \frac{1}{\epsilon}\right) \log\left( \frac{1}{\epsilon} \right) \right).
	\end{flalign*}
\end{theorem}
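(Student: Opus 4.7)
The plan is to reduce the analysis to that of inexact SARAH applied to the Gaussian-smoothed objective $p_\tau(w)\triangleq\mathbb{E}_\psi[p(w+\tau\psi)]$, and then carefully account for the bias introduced by (i) the coordinate-wise estimator used at the start of every epoch and (ii) the Gaussian smoothing estimator $\Psi_\tau$ used in the inner loop. Concretely, I would first record the standard smoothing identities of Nesterov–Spokoiny: under Assumptions \ref{ass2}–\ref{ass3}, $p_\tau$ inherits $(\mu - O(\tau^2\ell d))$-strong convexity and $(\ell + O(\tau\ell d))$-smoothness, satisfies $\|\nabla p_\tau(w)-\nabla p(w)\|\le \tau\ell(d+3)^{3/2}/2$, and $\mathbb{E}\|\Psi_\tau(w,\psi,\xi)-\nabla p_\tau(w)\|^2\le \tfrac{1}{B_2}\bigl(O(\tau^2\ell^2(d+6)^3)+O((d+4))\|\nabla p(w)\|^2 + O(\sigma^2(d+4))\bigr)$. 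Similarly, the coordinate-wise finite-difference estimator in line 5 has bias $O(\delta^2\ell^2 d)$ in squared norm and variance $\sigma^2/B_1$ per coordinate.

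Next, I would analyze one inner epoch of ZO-iSARAH. Following the SARAH template, conditional on the history one has $\mathbb{E}[v_k\mid\mathcal{F}_{k-1}]=v_{k-1}+\nabla p_\tau(w_k)-\nabla p_\tau(w_{k-1})+\text{bias}$, so that the "tracking error" $\mathbb{E}\|v_k-\nabla p_\tau(w_k)\|^2$ telescopes as
\begin{equation*}
\mathbb{E}\|v_k-\nabla p_\tau(w_k)\|^2 \le \mathbb{E}\|v_0-\nabla p_\tau(w_0)\|^2 + \sum_{j=1}^{k}\mathbb{E}\|\Psi_\tau(w_j)-\Psi_\tau(w_{j-1})-(\nabla p_\tau(w_j)-\nabla p_\tau(w_{j-1}))\|^2.
\end{equation*}
Using a Lipschitz-in-$w$ bound on the smoothing estimator (which costs one factor of $d$ and a $1/B_2$ factor), each summand is controlled by $\tfrac{cd\ell^2}{B_2}\mathbb{E}\|w_j-w_{j-1}\|^2=\tfrac{cd\ell^2\gamma^2}{B_2}\mathbb{E}\|v_{j-1}\|^2$. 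Choosing $B_2=d$, $\gamma=\Theta(1/\ell)$ and $I=\Theta(\kappa)$ keeps the accumulated tracking error at a constant fraction of the gradient norm, so a standard descent-lemma computation gives the per-epoch contraction
\begin{equation*}
\mathbb{E}\|\nabla p_\tau(\tilde w_t)\|^2 \le \rho\,\mathbb{E}\|\nabla p_\tau(\tilde w_{t-1})\|^2 + C_1\,\mathbb{E}\|v_0-\nabla p_\tau(w_0)\|^2,
\end{equation*}
with some $\rho\in(0,1)$ bounded away from $1$. The initial tracking error is controlled by the large-batch coordinate estimator in line 5: it contributes $O(\sigma^2/B_1)+O(\delta^2\ell^2 d^2)$.

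Iterating the contraction $T=\Theta(\log(1/\epsilon))$ times, the geometric term in $\mathbb{E}\|\nabla p_\tau(\tilde w_0)\|^2$ decays below $\epsilon/3$, while the bias floor is driven below $\epsilon/3$ by the choices $B_1=\Theta(1/\epsilon)$ and $\delta=\Theta(\epsilon^{0.5}/(\ell d^{0.6}))$; the prescribed $\tau=\min\{\epsilon^{0.5}/(3\ell(d+3)^{1.5}),\sqrt{2\epsilon/(5\ell\mu d)}\}$ makes the smoothing bias $\|\nabla p_\tau-\nabla p\|^2$ of order $\epsilon$ as well (and is also chosen to preserve strong convexity of $p_\tau$). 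Summing yields $\mathbb{E}\|\nabla p_\tau(\tilde w_T)\|^2\le\epsilon$. Finally, the query count is immediate: each outer iteration issues $O(dB_1)$ queries for the coordinate-wise estimator plus $O(IB_2)$ queries across the inner loop, and multiplying by $T$ gives exactly $\mathcal{O}\!\left(d(\kappa+1/\epsilon)\log(1/\epsilon)\right)$.

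The main obstacle I anticipate is the inner-loop tracking-error bound: unlike first-order SARAH, the differences $\Psi_\tau(w_k)-\Psi_\tau(w_{k-1})$ are not exact gradient differences, and a naive bound inflates the variance by a factor of $d$ at each step, which would destroy the $O(\kappa)$ inner-loop length. Closing this gap requires using the \emph{same} Gaussian direction $\psi$ for both $\Psi_\tau(w_k)$ and $\Psi_\tau(w_{k-1})$ (as is done in the algorithm) and then invoking the mean-squared Lipschitz property of the smoothed stochastic gradient mapping, so that one $1/B_2=1/d$ cancels the factor of $d$. Getting this cancellation clean, together with the interplay between the three small parameters $\tau$, $\delta$, and the initialization accuracy, is where most of the careful bookkeeping lies.
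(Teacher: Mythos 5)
Your overall architecture matches the paper's: a per-epoch bound on $\mE[\ltwo{\nabla p_\tau(\tilde w_t)}^2]$ that contracts geometrically over $T=\Theta(\log(1/\epsilon))$ epochs down to a bias floor set by $B_1$, $\delta$ and $\tau$, with the same query count; you also correctly identify that reusing the same Gaussian direction at $w_k$ and $w_{k-1}$ and taking $B_2=d$ is what cancels the dimension inflation of the Gaussian estimator. However, there is a genuine gap in the inner-loop variance analysis, and it is not the obstacle you flag at the end. You telescope the tracking error and bound each increment by $\tfrac{cd\ell^2\gamma^2}{B_2}\mE[\ltwo{v_{j-1}}^2]$; with $B_2=d$ and $\gamma=\Theta(1/\ell)$ each increment is a \emph{constant} multiple of $\mE[\ltwo{v_{j-1}}^2]$, so after $k\le I=\Theta(\kappa)$ inner steps the accumulated error is $\mE[\ltwo{v_0-\nabla p_\tau(w_0)}^2]+\Theta(1)\sum_{j<k}\mE[\ltwo{v_j}^2]$. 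That sum does not telescope and is generically of order $\kappa\cdot\mE[\ltwo{v_0}^2]$ (even granting a strong-convexity decay of $\mE[\ltwo{v_j}^2]$ at rate $1-\Theta(\gamma\mu)$, the sum is still $\Theta(\kappa)\mE[\ltwo{v_0}^2]$), so it is not ``a constant fraction of the gradient norm''; absorbing it would force $\gamma=O(1/(\ell\sqrt{\kappa}))$ or $B_2=\Theta(\kappa d)$, destroying the stated complexity. This is the nonconvex SPIDER-style bound (\Cref{lemma3}) and it is insufficient here. The paper instead uses the \emph{convex} SARAH bound (\Cref{lemma16}): per-component convexity of $P(\cdot;\xi)$ (\Cref{ass3}) gives co-coercivity, $\langle\nabla h(w)-\nabla h(w^\prime),w-w^\prime\rangle\ge\tfrac{1}{\ell}\ltwo{\nabla h(w)-\nabla h(w^\prime)}^2$ (\cref{eq: 6}), under which the per-step variance increments form a telescoping series bounded by $\tfrac{\gamma\ell}{2-\gamma\ell}\mE[\ltwo{v_0}^2]$ \emph{independently of the epoch length}. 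Without invoking convexity in this form, $I=\Theta(\kappa)$ with $\gamma=\Theta(1/\ell)$ cannot be justified.

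A secondary point: the per-epoch contraction in the paper is not a last-iterate descent-lemma contraction. It is obtained by summing $\mE[\ltwo{\nabla p_\tau(w_k)}^2]$ over the epoch (\Cref{lemma15}), converting the function-value gap into $\tfrac{1}{2\mu}\mE[\ltwo{\nabla p_\tau(w_0)}^2]$ via strong convexity (\cref{eq: 8}), and exploiting that $\tilde w_t$ is drawn uniformly at random from the epoch's iterates, which produces the contraction factor $\tfrac{2}{\gamma\mu(I+1)}$ in \cref{eq: 74}. Your write-up should make explicit that the contraction applies to the randomly selected iterate rather than the last one; a last-iterate version would be a different and harder claim.
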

Since we require the initialization accuracy in \Cref{al:zosredaboost} to be $\kappa^{-1}$, \Cref{thm: zo-sarah} indicates that the total function query complexity of performing ZO-iSARAH in \Cref{al:zosredaboost} is $\mathcal{O}(d_2\kappa\log(1/\kappa))$. Ignoring the dependence on the dimension caused by zeroth-order estimator, our initialization complexity improves upon its first-order counterpart SREDA \cite{luo2020stochastic} by a factor of $\mathcal{\tilde{O}}(\kappa\epsilon^{-2})$.

We next provide our main theorem as follows, which characterizes the query complexity of ZO-VRGDA for finding a first-order stationary point of $\Phi(\cdot)$ with $\epsilon$ accuracy.
\begin{theorem}\label{thm2}
	Apply ZO-VRGDA in \Cref{al:zosredaboost} to solve the online case of the problem \cref{eq: 1}. Suppose Assumptions \ref{ass1}-\ref{ass5} hold. Consider the following hyperparamter setting:
	$\zeta=\kappa^{-1}$, $\alpha=\mathcal{O}(\kappa^{-1}\ell^{-1})$, $\beta=\mathcal{O}(\ell^{-1})$, $q=\mathcal{O}(\epsilon^{-1})$, $m=\mathcal{O}(\kappa)$, $S_1=\mathcal{O}(\sigma^2\kappa^2\epsilon^{-2})$, $S_{2,x}=\mathcal{O}(d_1\kappa\epsilon^{-1})$, $S_{2,y}=\mathcal{O}(d_2\kappa\epsilon^{-1})$, $\delta=\mathcal{O}((d_1+d_2)^{0.5}\kappa^{-1}\ell^{-1}\epsilon)$, $\mu_1=\mathcal{O}(d_1^{-1.5}\kappa^{-2.5}\ell^{-1}\epsilon)$ and $\mu_2=\mathcal{O}(d_2^{-1.5}\kappa^{-2.5}\ell^{-1}\epsilon)$. Then for $T$ to be at least at the order of $\mathcal{O}(\kappa\epsilon^{-2})$, 
	\Cref{al:zosredaboost} outputs $\hat{x}$ such that
	$$
	\mE[\ltwo{\nabla\Phi(\hat{x})}]\leq \epsilon,
	$$
	with the overall function query complexity given by
	\begin{flalign}
	&T\cdot (S_{2,x} + S_{2,y}) \cdot m + \left\lceil\frac{T}{q}\right\rceil\cdot S_1 \cdot (d_1+d_2) + T_0 \nonumber\\
	&= \mco\left(\frac{\kappa}{\epsilon^2} \cdot \frac{(d_1+d_2)\kappa}{\epsilon} \cdot \kappa \right) + \mco\left( \frac{\kappa}{\epsilon} \cdot \frac{\kappa^2}{\epsilon^2} \cdot (d_1+d_2) \right) + \mco\left( d_2 \kappa \log(\kappa) \right)\nonumber\\
	&=\mco\left( (d_1+d_2)\kappa^{3}\epsilon^{-3}\right).
	\end{flalign}
\end{theorem}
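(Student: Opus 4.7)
The plan is to combine the descent lemma for the $L$-smooth function $\Phi$ with $L=(\kappa+1)\ell$ (from Lemma \ref{lemma1}) with a careful accounting of three coupled stochastic error processes along the ZO-VRGDA trajectory: the tracking error $\mE\ltwo{y_t - y^*(x_t)}^2$, and the zeroth-order variance-reduced estimation errors $\mE\ltwo{v_t - \nabla_x f(x_t,y_t)}^2$ and $\mE\ltwo{u_t - \nabla_y f(x_t,y_t)}^2$. With the update $x_{t+1} = x_t - \alpha v_t$, the descent lemma gives
\[
\Phi(x_{t+1}) \leq \Phi(x_t) - \alpha \langle \nabla\Phi(x_t), v_t\rangle + \tfrac{L\alpha^2}{2}\ltwo{v_t}^2,
\]
and writing $\nabla \Phi(x_t) = \nabla_x f(x_t,y^*(x_t))$ lets me split the inner product into a term controlled by the estimation error of $v_t$ plus a term controlled by $\ell^2\ltwo{y_t - y^*(x_t)}^2$ via the averaged Lipschitz gradient assumption (Assumption \ref{ass2}).

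I would next bound the tracking error produced by the inner loop. The $\mu$-strong concavity of $f(x,\cdot)$ combined with inner step size $\beta=\Theta(1/\ell)$ yields a contraction per inner step with respect to $\tu_{t,k}$, and after $m=\Theta(\kappa)$ inner steps with the uniformly random output index $\tm_t$ the expected squared distance to $y^*(x_{t+1})$ satisfies a recursion of the form
\[
\mE\ltwo{y_{t+1}-y^*(x_{t+1})}^2 \leq c\,\mE\ltwo{y_t - y^*(x_t)}^2 + c'\alpha^2\mE\ltwo{v_t}^2 + (\text{bias + variance}),
\]
where $c<1$, the $\alpha^2\ltwo{v_t}^2$ term arises from the $\kappa$-Lipschitz continuity of $y^*(\cdot)$ applied to the drift $x_{t+1}-x_t=-\alpha v_t$, and the bias/variance terms are controlled by $\mu_2$ and the variance of $\tu_{t,k}$.

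The key novel step is bounding the accumulated gradient estimation errors recursively over each outer epoch of length $q$. At a snapshot iteration $t=n_tq$, the coordinate-wise zeroth-order estimator yields an error of order $\sigma^2/S_1$ plus finite-difference bias $\mco(\ell^2\delta^2(d_1+d_2))$. Each subsequent recursive update of $\tv_{t,k},\tu_{t,k}$ inside ZO-ConcaveMaximizer adds variance of order $(d_1+d_2)\ell^2(\ltwo{\tx_{t,k}-\tx_{t,k-1}}^2+\ltwo{\ty_{t,k}-\ty_{t,k-1}}^2)/S_{2,\cdot}$ plus Gaussian-smoothing bias driven by $\mu_1, \mu_2$. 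Rather than forcing every error term to sit at $\epsilon$-level as in \cite{luo2020stochastic}, I would telescope these recursions over a full epoch and match the accumulated increments $\sum_k \ltwo{\ty_{t,k}-\ty_{t,k-1}}^2 \lesssim \beta^2 \sum_k\ltwo{\tu_{t,k}}^2$ against the accumulated $\ltwo{v_t}^2$ from the outer loop, producing an epoch-level bound in which the estimation error is controlled by $\sigma^2/S_1$, the smoothing bias, and a small fraction of $\sum_t\ltwo{v_t}^2$ that can later be absorbed on the left-hand side.

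Finally, I would form a Lyapunov function $\mE[\Phi(x_t)] + C_1\mE\ltwo{y_t-y^*(x_t)}^2 + C_2\cdot(\text{accumulated estimation error})$, telescope from $t=0$ to $T-1$, and use the initialization guarantee $\mE\ltwo{y_0-y^*(x_0)}^2 \leq \zeta/\mu = \mco(1/(\mu\kappa))$ from Theorem \ref{thm: zo-sarah}. Choosing $\alpha=\Theta(\kappa^{-1}\ell^{-1})$ so that all contraction coefficients are strictly less than one and the $\ltwo{v_t}^2$ terms absorb correctly yields $\frac{1}{T}\sum_{t=0}^{T-1}\mE\ltwo{\nabla\Phi(x_t)}^2 \lesssim \frac{\kappa\ell(\Phi(x_0)-\Phi^*)}{T} + \epsilon^2$, so $T=\Theta(\kappa\epsilon^{-2})$ suffices and Jensen's inequality converts this to $\mE\ltwo{\nabla\Phi(\hat{x})}\leq\epsilon$. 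Summing the per-iteration cost $m(S_{2,x}+S_{2,y})$ over $T$, the snapshot cost $(d_1+d_2)S_1$ over $T/q$ epochs, and the $\mco(d_2\kappa\log\kappa)$ initialization cost recovers the three $\mco$-terms in the statement, summing to $\mco((d_1+d_2)\kappa^3\epsilon^{-3})$. The main obstacle is precisely this recursive accounting of the coupled inner- and outer-loop estimation errors: controlling them in terms of $\sum_t\ltwo{v_t}^2$ rather than uniformly in $t$ is what enables the $\epsilon$-independent stepsize $\alpha=\Theta(\kappa^{-1}\ell^{-1})$ and the relaxed initialization accuracy $\zeta=\kappa^{-1}$ relative to the $\mco(\epsilon^2)$ requirement used in SREDA.
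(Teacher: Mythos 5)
Your proposal is correct and follows essentially the same route as the paper: a descent lemma for the $L$-smooth $\Phi$, coupled recursions for the tracking and variance-reduced estimation errors (the paper phrases the tracking error as $\mE[\ltwo{\nabla_y f(x_t,y_t)}^2]$ and works with the Gaussian-smoothed surrogates $\Delta'_t,\delta'_t$, which is equivalent up to strong-concavity constants and \Cref{lemma18}), epoch-level telescoping of the accumulated errors against $\sum_t\mE[\ltwo{v_t}^2]$, and absorption of that sum on the left-hand side to permit the $\epsilon$-independent stepsize. Your Lyapunov-function packaging is a cosmetic reorganization of the paper's Steps 2--3 (which instead derive separate accumulated bounds for $\sum_t\Delta'_t$ and $\sum_t\delta'_t$ and substitute them into the descent inequality), and you correctly identify the central idea that enables $\alpha=\Theta(\kappa^{-1}\ell^{-1})$ and $\zeta=\kappa^{-1}$.
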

Furthermore, ZO-VRGDA can also be applied to the finite-sum case of the problem \cref{eq: 1}, by replacing the large batch sample $S_1$ used in line 6 of \Cref{al:zosredaboost} with the full set of samples. Then the following result characterizes the query complexity in such a case.
\begin{theorem}\label{cor: 0thfinitesum}
	Apply ZO-VRGDA described above to solve the finite-sum case of the problem \cref{eq: 1}. Suppose Assumptions \ref{ass1}-\ref{ass5} hold. Under appropriate parameter settings given in \Cref{sc: 0stfinitesum}, the function query complexity to attain an $\epsilon$-stationary point is $\mco( (d_1+d_2)(\sqrt{n}\kappa^{2}\epsilon^{-2} + n) + d_2(\kappa^2 + \kappa n) \log(\kappa))$ for $n\geq \kappa^2$, and $\mco( (d_1+d_2)(\kappa^2 + \kappa n)\epsilon^{-2} )$ for $n\leq \kappa^2$.
\end{theorem}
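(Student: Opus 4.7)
The plan is to reuse almost verbatim the convergence framework developed for Theorem~\ref{thm2}, making two targeted changes: the outer-loop anchor variance $\sigma^2/S_1$ is replaced by the $\mathcal{O}(\delta^2)$ zeroth-order discretization bias (since the full dataset is used to build $v_t,u_t$ at every restart), and the epoch length $q$ is decoupled from $\epsilon$. All inner-loop estimation and tracking recursions established in the proof of Theorem~\ref{thm2} remain intact, because they concern the Gaussian-smoothed estimators $G_{\mu_1}$ and $H_{\mu_2}$ whose behaviour does not depend on whether the sample index $\xi$ is drawn from a continuous distribution or uniformly from $\{\xi_1,\dots,\xi_n\}$.

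First I would rederive the telescoped descent inequality for $\Phi(x_t)$. The only line in the proof of Theorem~\ref{thm2} that uses sampling in the anchor is the bound $\mE\bigl[\ltwo{v_{nq}-\nabla_x f(x_{nq},y_{nq})}^2\bigr]\lesssim \sigma^2/S_1+\mathcal{O}(\delta^2)$; replacing the $S_1$-size minibatch with the full sample set kills the $\sigma^2/S_1$ term and leaves only the $\delta^2$ bias, which is controlled by the same choice of $\delta$ as in Theorem~\ref{thm2}. Propagating this through the coupled bounds that link the accumulated gradient-estimation error to the tracking error $\ltwo{y_t-y^*(x_t)}^2$, the overall convergence condition reduces to $q/\min(S_{2,x},S_{2,y})\lesssim \kappa^{-2}$, with $m=\Theta(\kappa)$ and stepsizes $\alpha=\Theta(\kappa^{-1}\ell^{-1})$, $\beta=\Theta(\ell^{-1})$ kept unchanged from Theorem~\ref{thm2}.

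Next I would optimize the hyperparameters against the total query budget. With $T=\Theta(\kappa\epsilon^{-2})$ outer iterations, the cost decomposes as $\lceil T/q\rceil\cdot n(d_1+d_2)$ for the full-batch coordinate-wise anchors plus $T\cdot m\cdot(S_{2,x}+S_{2,y})$ for the inner GauGE recursion. When $n\geq\kappa^2$, setting $q=\Theta(\sqrt{n}/\kappa)$, $S_{2,x}=\Theta(d_1\sqrt{n})$, $S_{2,y}=\Theta(d_2\sqrt{n})$ satisfies the convergence constraint and equalizes the two cost terms at $(d_1+d_2)\kappa^2\sqrt{n}\epsilon^{-2}$; the additive $n(d_1+d_2)$ accounts for the first mandatory full-batch pass. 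When $n<\kappa^2$ the feasibility requirement $q\geq 1$ becomes binding and forces $q=1$, refreshing the anchor every outer iteration; the inner batch can then shrink to $S_{2,x}+S_{2,y}=\Theta(d_1+d_2)$, giving $(d_1+d_2)(\kappa^2+\kappa n)\epsilon^{-2}$. The additive initialization term $d_2(\kappa^2+\kappa n)\log\kappa$ is obtained by replacing $B_1$ with $n$ in Algorithm~\ref{al:zoisarah} and rerunning the argument of Theorem~\ref{thm: zo-sarah} in the finite-sum SARAH regime with inner length $I=\Theta(\kappa)$ and $\Theta(\kappa\log\kappa)$ outer passes to reach the required accuracy $\zeta=\kappa^{-1}$.

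The main obstacle I anticipate is verifying that the coupled tracking-error recursion still closes once $q$ is allowed to grow like $\sqrt{n}/\kappa$ rather than remain tied to $\epsilon^{-1}$. The Lyapunov argument in Theorem~\ref{thm2} absorbs the per-epoch drift of $\ltwo{y_t-y^*(x_t)}^2$ against the cumulative descent of $\Phi$, and that accounting implicitly used $q\sigma^2/S_1$ being small; since the finite-sum anchor is variance-free this budget loosens considerably, but one must re-examine each step of the induction to confirm that enlarging $q$ to $\sqrt{n}/\kappa$ (potentially much bigger than $\epsilon^{-1}$) does not violate any of the auxiliary bounds that control cross-terms between the $x$- and $y$-error processes. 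Once this is verified, the arithmetic above yields the two regimes in the theorem.
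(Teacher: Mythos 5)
Your proposal follows essentially the same route as the paper's proof: replace the mini-batch anchor with the full-sample coordinate-wise estimator so that $\epsilon(S_1,\delta)$ loses its $\sigma^2/S_1$ term, keep all inner-loop recursions and the coupled error/Lyapunov argument intact, rebalance $q$ and $S_{2,x},S_{2,y}$ against the epoch-accumulation constraint (with the case split $n\gtrless\kappa^2$ arising exactly from the feasibility of $q\geq1$, and $q=1$ with $\Theta(d_i)$ inner batches in the small-$n$ regime), and account for the finite-sum ZO-iSARAH initialization separately. Your specific choices $q=\Theta(\sqrt{n}/\kappa)$, $S_{2,\cdot}=\Theta(d_\cdot\sqrt{n})$ differ from the paper's stated $q=\Theta(\sqrt{n})$, $S_{2,\cdot}=\Theta(d_\cdot\kappa\sqrt{n})$ by $\kappa$ factors, but both satisfy the governing constraint $q\,(d_1/S_{2,x}+d_2/S_{2,y})\lesssim\kappa^{-1}$ and yours in fact reproduce the claimed $(d_1+d_2)\sqrt{n}\kappa^2\epsilon^{-2}$ bound directly, so the argument is sound.
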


\Cref{thm2} and \Cref{cor: 0thfinitesum} indicate that the query complexity of ZO-VRGDA matches the optimal dependence on $\epsilon$ of the first-order algorithm for nonconvex optimization in \cite{fang2018spider}. The dependence on $d_1$ and $d_2$ typically arises in zeroth-order algorithms due to the estimation of gradients with dimensions $d_1$ and $d_2$.
Furthermore, in the online case, ZO-VRGDA outperforms the  best known query complexity dependence on $\epsilon$ among the existing zeroth-order algorithms by a factor of $\mathcal{O}(1/\epsilon)$. Including the conditional number $\kappa$ into consideration, ZO-VRGDA outperforms the best known query complexity
achieved by ZO-SGDMA in the case with $\epsilon\leq \kappa^{-1}$ (see \Cref{tab:comparison}). 

\Cref{thm2} and \Cref{cor: 0thfinitesum} provide the first convergence analysis and the query complexity for the zeroth-order variance-reduced algorithms for minimax optimization. 
Furthermore, \Cref{cor: 0thfinitesum} provides the first query complexity for the finite-sum zeroth-order minimax problems.


\subsection{Outline of Technical Proof}

Our analysis has the following two major novel developments. (a) We develop new tools to analyze the zeorth-order estimator for variance reduced minimax algorithms. (b) More importantly, differently from the previous approach (e.g., \cite{luo2020stochastic}), we develop a new analysis framework for analyzing the recursive variance reduced algorithms for minimax problems.
At a high level, the previous analysis mainly focuses on bounding two inter-related errors: {\bf tracking error} $\delta_t=\mE[\ltwo{\nabla_y f(x_t,y_t)}^2]$ that captures how well $y_t$ approximates the optimal point $y^*(x_t)$ for a given $x_t$, and {\bf gradient estimation error} $\Delta_t=\mE[\ltwo{v_t-\nabla_x f(x_t,y_t)}^2 + \ltwo{u_t-\nabla_y f(x_t,y_t)}^2]$ that captures how well the stochastic gradient estimators approximate the true gradients. In the previous analysis, those two error terms are forced to be at $\epsilon$-level at the cost of inefficient initialization and $\epsilon$-level stepsize.
In contrast, we develop tools to capture the coupling of the accumulative estimation and tracking errors over the entire algorithm execution, and then establish their relationships with the accumulative gradient estimators to derive the overall convergence bound. As a result, our ZO-VRGDA can adopt a more relaxed initialization and a large constant stepsize for fast running speed, and still enjoy the theoretical convergence guarantee.

\vspace{-2mm}
\begin{proof}[\bf Proof Sketch of \Cref{thm2}]
The proof of \Cref{thm2} consists of the following three steps. 

\noindent\textbf{Step 1:} We start from the estimation error $\Delta^\prime_t$ and tracking error $\delta^\prime_t$ defined with respect to the Gaussian smooth objective functions: $\Delta^\prime_t=\mE[\ltwo{\nabla_x f_{\mu_1}(x_t,y_t)-v_t}^2] + \mE[\ltwo{\nabla_y f_{\mu_2}(x_t,y_t)-u_t}^2]$ and $\delta^\prime_t=\mE[\ltwo{\nabla_y f_{\mu_2}(x_t,y_t)}^2]$. 
which is connected with $\Delta_t$ and $\delta_t$ via the following inequalities:
 \begin{flalign*}
 	\Delta_t &\leq 2\Delta^\prime_t + \frac{\mu^2_1}{2}\ell^2(d_1+3)^3 + \frac{\mu^2_2}{2}\ell^2(d_2+3)^3,\nonumber\\
 	\delta_t &\leq 2\delta^\prime_t + \frac{\mu^2_2}{2}\ell^2(d_2+3)^3.
\end{flalign*}
We establish the relationship between $\Delta^\prime_{t}$ and $\Delta^\prime_{t-1}$ as well as that between $\delta^\prime_{t}$ and $\delta^\prime_{t-1}$ as follows
\begin{flalign}
	\Delta^\prime_t &\leq \left(1 + \Theta(\epsilon)\right)\Delta^\prime_{t-1} + \Theta(\epsilon)\delta^\prime_{t-1} + \Theta(\kappa^{-2}\epsilon)\mE[\ltwo{v_{t-1}}^2]  +  \Theta(\kappa^{-2}\epsilon^2),\label{eq: pf1}\\
	\delta^\prime_t&\leq \frac{1}{2}\delta^\prime_{t-1} + \Theta(1)\Delta^\prime_{t-1} +  \Theta(\kappa^{-2})\mE[\ltwo{v_{t-1}}^2] +  \Theta(\kappa^{-5}\epsilon^2).\label{eq: pf2}
\end{flalign}

\noindent\textbf{Step 2:} {\bf Step 1} indicates that $\Delta^\prime_t$ and $\delta^\prime_t$ are strongly coupled with each other at each iteration. Then, we need to decouple them so that we can characterize the effect of $\Delta^\prime_t$ and $\delta^\prime_t$ on the overall convergence separately. 

We first consider the accumulation of $\Delta^\prime_t$ over one epoch. Although the value of $\Delta^\prime_t$ increases within each epoch (indicated by \cref{eq: pf1}), the accumulation of this error can still be controlled via adjusting the mini-batch sizes $S_1$, $S_2$ and epoch length $q$. Under an appropriate parameter setting, we can obtain the following bound
\begin{flalign*}
	\Delta^\prime_t &\leq 2\Delta^\prime_{\lfloor t\rfloor q} + \Theta(\epsilon)\sum_{p=\lfloor t\rfloor q}^{t-1}\delta^\prime_{t-1} + \Theta(\kappa^{-2}\epsilon)\sum_{p=\lfloor t\rfloor q}^{t-1}\mE[\ltwo{v_{t-1}}^2]  +  \Theta(\kappa^{-2}\epsilon).
\end{flalign*}
Note that $\Delta^\prime_{\lfloor t\rfloor q}$ is the estimation error of coordinate-wise estimator obtained at the beginning of each epoch, which diminishes as the batch size $S_1$ increases. Letting $S_1 = \Theta(\kappa^2/\epsilon^2)$ as specified in \Cref{thm2}, we can bound the accumulation of $\Delta^\prime_t$ over the all iterations as
\begin{flalign}
	\sum_{t=0}^{T-1}\Delta^\prime_t&\leq \Theta\left(\frac{1}{\kappa}\right) + \Theta(1)\sum_{t=0}^{T-1} \delta^\prime_t + \Theta\left(\frac{1}{\kappa^2}\right) \sum_{t=0}^{T-1} \mE[\ltwo{v_t}^2]  + \Theta(\kappa^{-1}).\label{eq: pf3}
\end{flalign}
Moreover, based on the contraction property of $\delta^\prime_t$ provided in \cref{eq: pf2}, we derive the following bound for the accumulation of $\delta^\prime_t$:
\begin{flalign}
	\sum_{t=0}^{T-1}\delta^\prime_t&\leq 2\delta^\prime_0 + \Theta(1)\sum_{t=0}^{T-1}\Delta^\prime_t + \Theta\left( \frac{1}{\kappa^2}\right)\sum_{t=0}^{T-1} \mE[\ltwo{v_t}^2] +\Theta(\kappa^{-4}).\label{eq: pf4}
\end{flalign}
Combining \cref{eq: pf3} and \cref{eq: pf4}, the upper bounds for $\sum_{t=0}^{T-1}\Delta^\prime_t$ and $\sum_{t=0}^{T-1}\delta^\prime_t$ can then be derived separately as
\begin{flalign}
	\sum_{t=0}^{T-1}\Delta^\prime_t \leq\Theta\left(\frac{1}{\kappa}\right) + \Theta(1)\delta^\prime_0 + \Theta\left(\frac{1}{\kappa^2}\right)\sum_{t=0}^{T-1}\mE[\ltwo{v_t}^2],\label{eq: pf5}\\
	\sum_{t=0}^{T-1}\delta^\prime_t \leq \Theta\left(\frac{1}{\kappa}\right) + \Theta(1)\delta^\prime_0 + \Theta\left(\frac{1}{\kappa^2}\right)\sum_{t=0}^{T-1}\mE[\ltwo{v_t}^2].\label{eq: pf6}
\end{flalign}

\noindent\textbf{Step 3:} Note that \cref{eq: pf5} and \cref{eq: pf6} alone are not sufficient to guarantee the boundness of accumulation errors $\sum_{t=0}^{T-1}\Delta^\prime_t$ and $\sum_{t=0}^{T-1}\delta^\prime_t$, as the upper bounds in \cref{eq: pf5} and \cref{eq: pf6} depend on an unknown error term $\sum_{t=0}^{T-1}\mE[\ltwo{v_t}^2]$. To handle this issue, we utilize the Lipschitz property of $\Phi(x)$ given in Assumption \ref{ass2} to obtain the following bound
\begin{flalign}
\left(\frac{\alpha}{2}-\frac{L\alpha^2}{2}\right)\sum_{t=0}^{T-1}\mE[\ltwo{v_t}^2] \leq \Phi(x_0) - \mE[\Phi(x_T)] + 2\alpha\kappa^2\sum_{t=0}^{T-1}\delta^\prime_t + 2\alpha\sum_{t=0}^{T-1}\Delta^\prime_t + T \Theta(\epsilon^{-2}\kappa^4).\label{eq: pf7}
\end{flalign}
Substituting \cref{eq: pf5} and \cref{eq: pf6} into \cref{eq: pf7} and subtracting the residual terms on both sides yield the following bound
\begin{flalign}
	\sum_{t=0}^{T-1}\mE[\ltwo{v_t}^2]\leq \Theta(L(\Phi(x_0)-\Phi^*)) + \Theta(\kappa).\label{eq: pf9}
\end{flalign}
The upper bounds of $\sum_{t=0}^{T-1}\Delta^\prime_t$ and $\sum_{t=0}^{T-1}\delta^\prime_t$ can then be obtained by substituting \cref{eq: pf9} into \cref{eq: pf5} and \cref{eq: pf6}.
 
To establish the convergence rate for $\mE[\ltwo{\nabla\Phi(\hat{x})}^2]=\frac{1}{T}\sum_{t=0}^{T}\mE[\ltwo{\nabla\Phi({x}_t)}^2]$, we note that
\begin{flalign}
\sum_{t=0}^{T-1}\mE[\ltwo{\nabla\Phi(x_t)}^2] \leq 6\kappa^2\sum_{t=0}^{T-1}\delta^\prime_t + 6\sum_{t=0}^{T-1}\Delta^\prime_t + 3\sum_{t=0}^{T-1}\mE[\ltwo{v_t}^2] + \Theta(\kappa^{-2}).\label{eq: pf8}
\end{flalign}
Substituting the bounds on $\sum_{t=0}^{T-1}\mE[\ltwo{v_t}^2]$, $\sum_{t=0}^{T-1}\Delta^\prime_t$ and $\sum_{t=0}^{T-1}\delta^\prime_t$ into \cref{eq: pf8}, we obtain the convergence rate for ZO-VRGDA.
\end{proof}


%

\section{Experiments}\label{sc: experiment}
 Our experiments focus on two types of comparisons: (a) we compare our ZO-VRGDA with other existing zeroth-order stochastic algorithms and demonstrate the superior performance of ZO-VRGDA; (b) we compare the performance of ZO-VRGDA with different inner-loop lengths.
 
Our experiments solve a distributionally robust optimization problem, which is commonly used for studying  minimax optimization \cite{lin2019gradient,rafique2018non}. We conduct the experiments on three datasets from LIBSVM \cite{Chang_2011}. The details of the problem and the datasets  are provided in \Cref{exp_in_appendix}.

\begin{figure*}[ht]  
	\centering 
	\subfigure[Dataset: a9a]{\includegraphics[width=50mm]{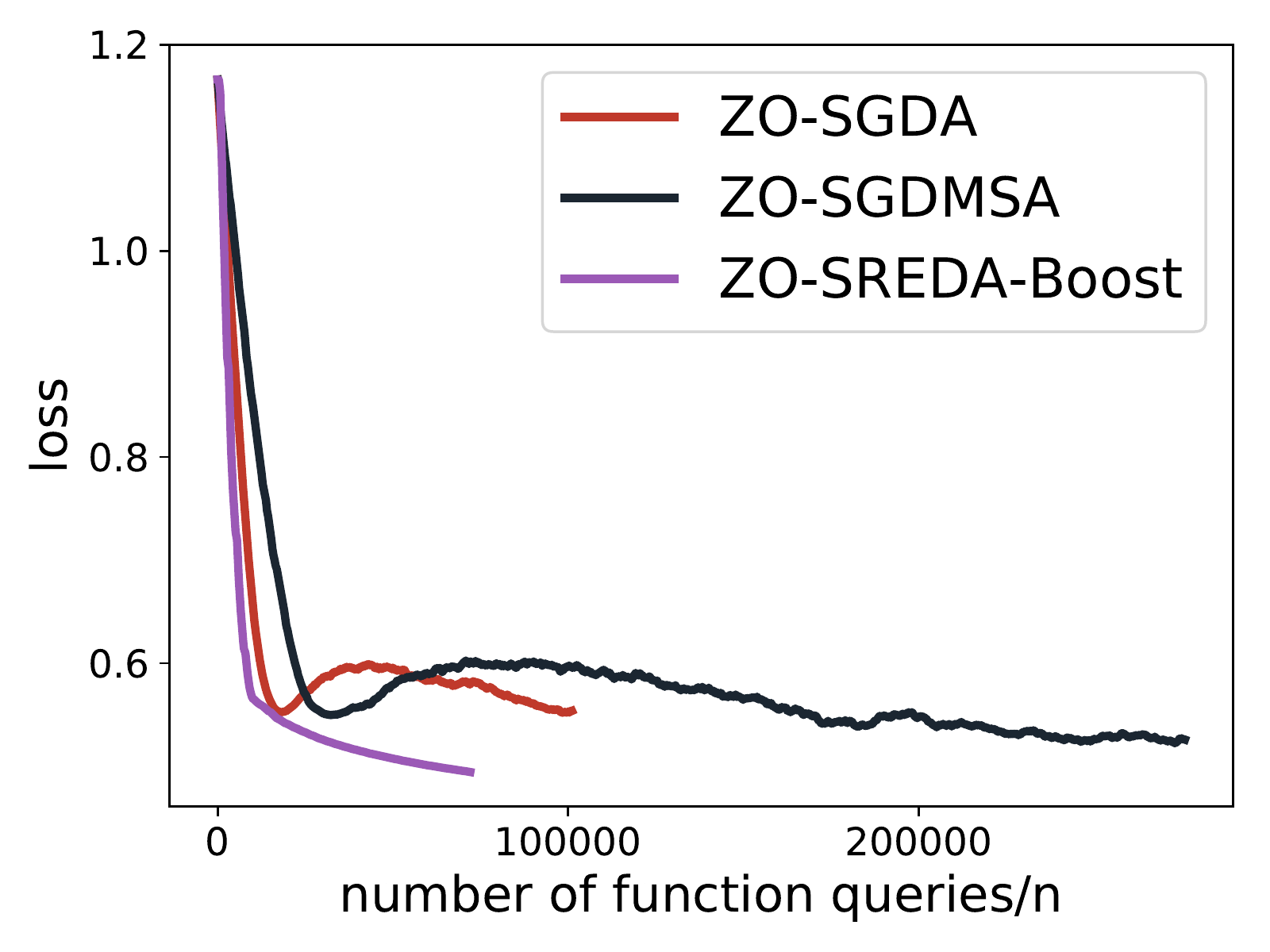}}
	\subfigure[Dataset: w8a]{\includegraphics[width=50mm]{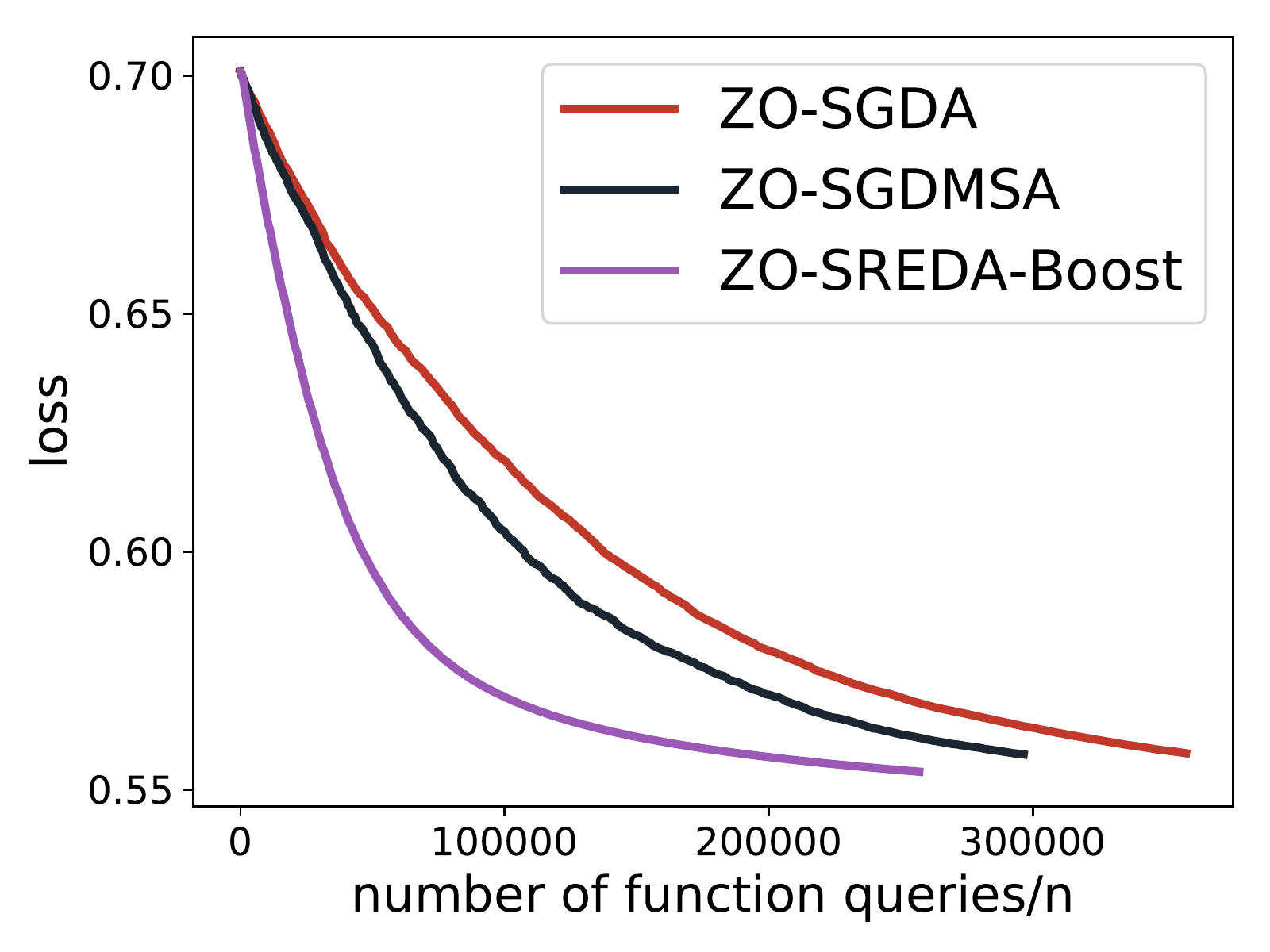}}
	\subfigure[Dataset: mushrooms]{\includegraphics[width=50mm]{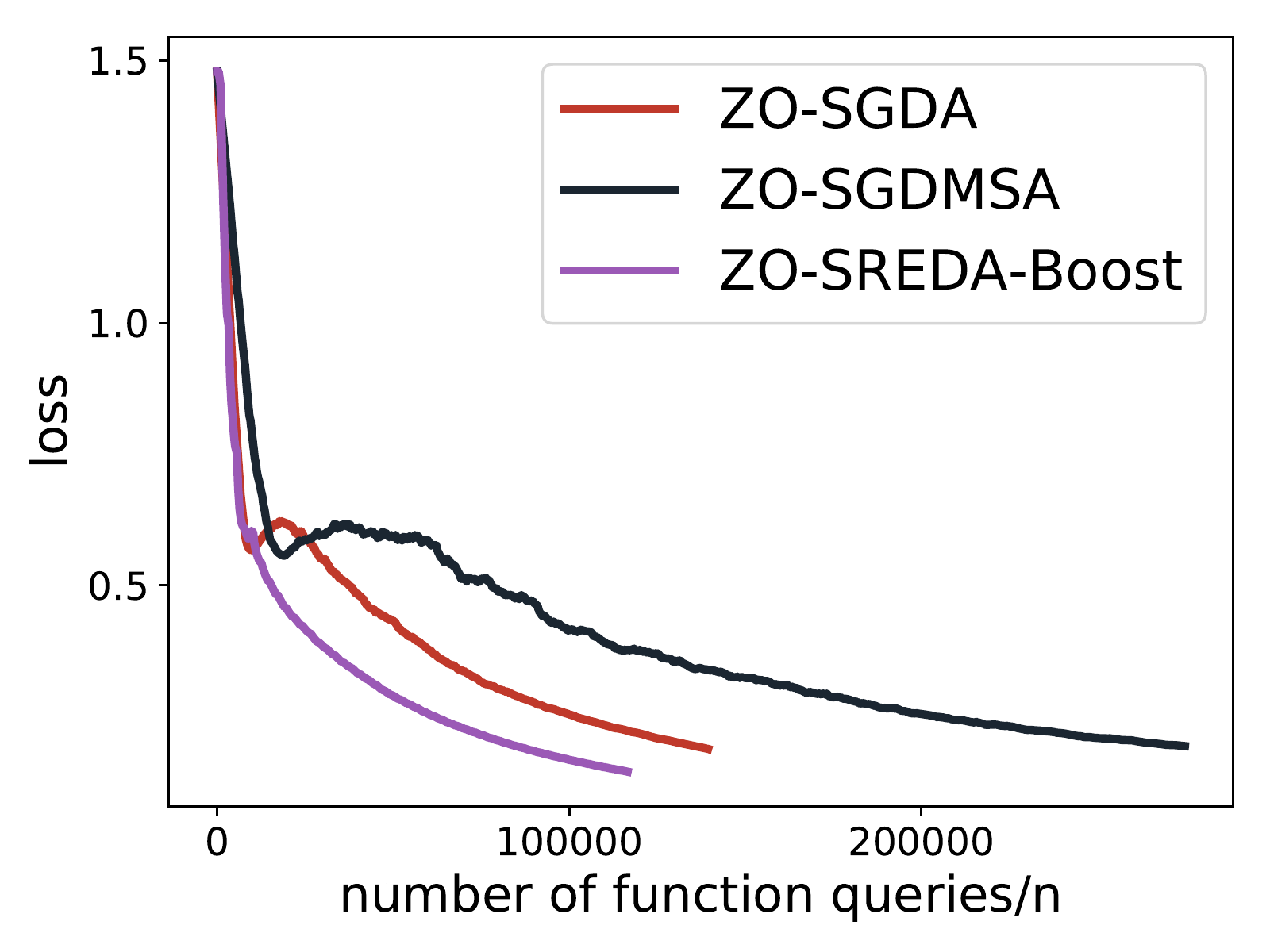}}
	\caption{\small  Comparison of function query complexity among three algorithms.}   \label{Exp_3}
\end{figure*}

{\bf Comparison among zeroth-order algorithms:} We compare the performance of our proposed ZO-VRGDA with that of two existing zeroth-order algorithms ZO-SGDA \cite{wang2020zeroth} and ZO-SGDMSA \cite{wang2020zeroth} designed for nonconvex-strongly-concave minimax problems. For ZO-SGDA and ZO-SGDMSA, as suggested by the corresponding theory, we set the mini-batch size $B = C d_1/\epsilon^2$ and $B = C d_2/\epsilon^2$ for updating the variables $x$ and $y$, respectively. For ZO-VRGDA, based on our theory, we set the mini-batch size $B = C d_1/\epsilon$ and $B = C d_2/\epsilon$ for updating the variables $x$ and $y$, and set $S_1 = n$ for the large batch, where $n$ is the number of data samples in the dataset. We set $C = 0.1$ and $\epsilon = 0.1$ for all algorithms. We further set the stepsize $\eta = 0.01$ for ZO-VRGDA and ZO-SGDMSA. Since ZO-SGDA is a two time-scale algorithm, we set $\eta = 0.01$ as the stepsize for the fast time scale, and $\eta/{\kappa^3}$ as the stepsize for slow time scale (based on the theory) where $\kappa^3=10$. 
It can be seen in  \Cref{Exp_3} that ZO-VRGDA substantially outperforms the other two algorithms in terms of the function query complexity (i.e., the running time).

\begin{figure*}[ht]  
	\centering 
	\subfigure[Dataset: a9a]{\includegraphics[width=50mm]{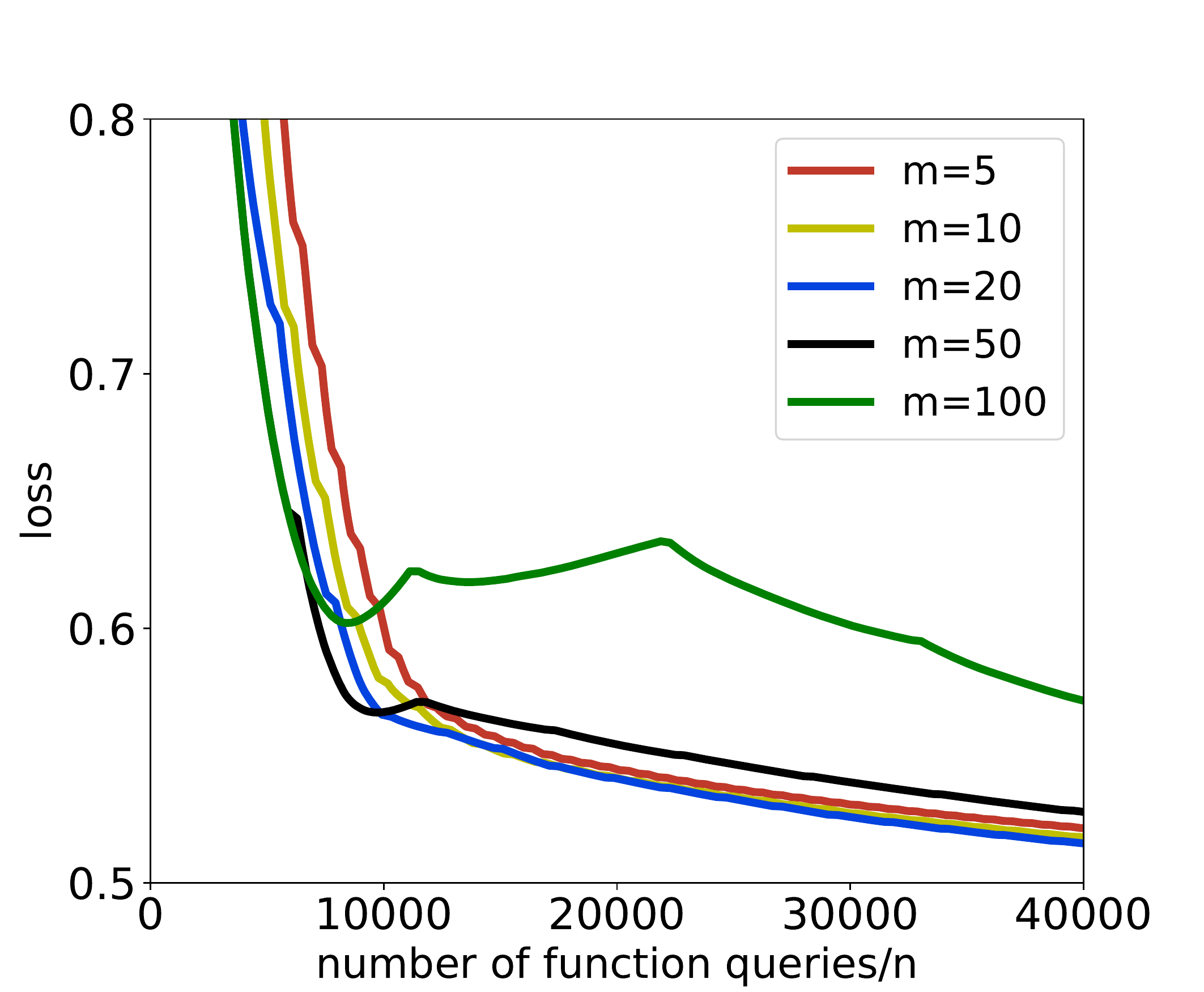}}
	\subfigure[Dataset: w8a]{\includegraphics[width=50mm]{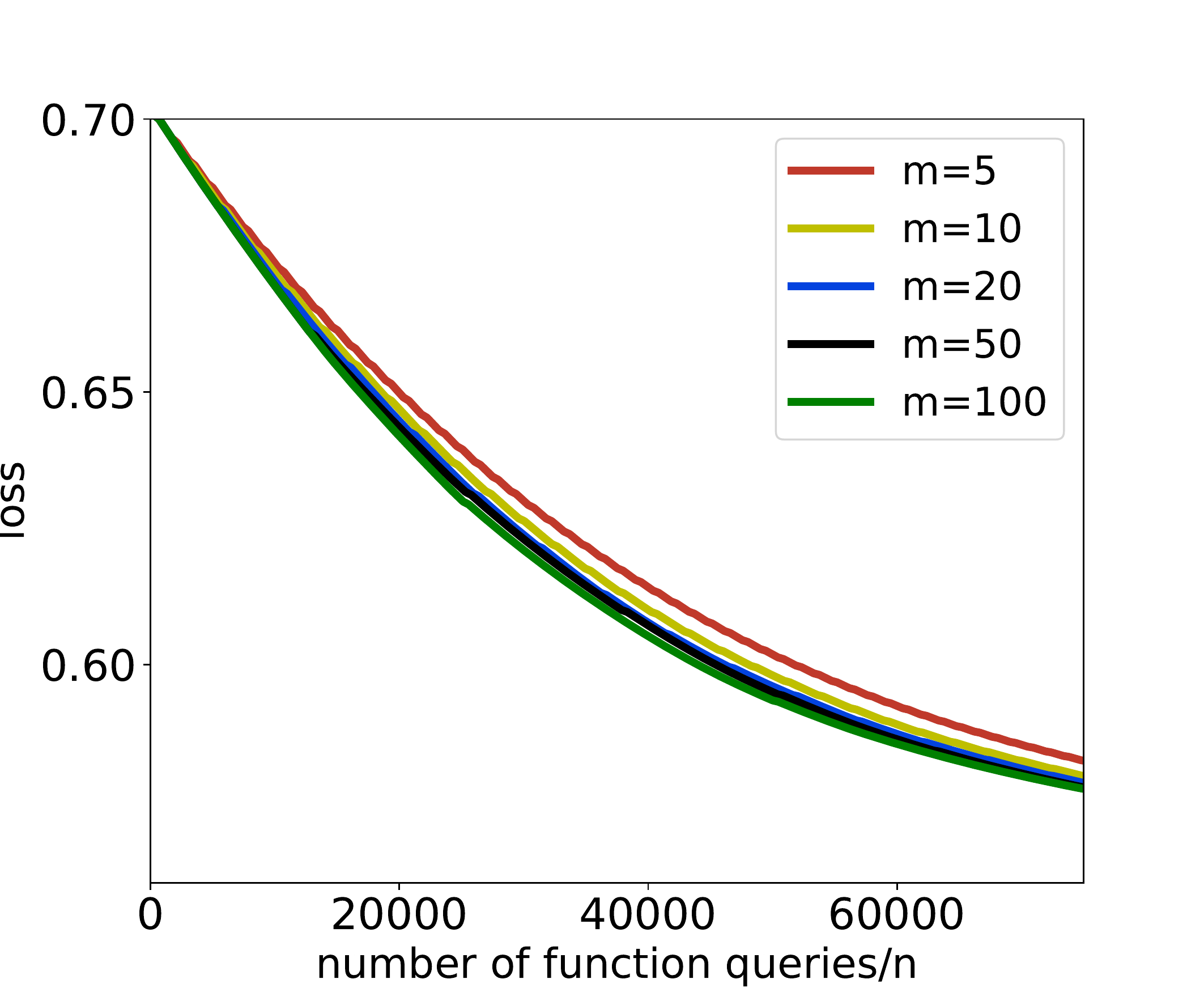}}
	\subfigure[Dataset: mushrooms]{\includegraphics[width=50mm]{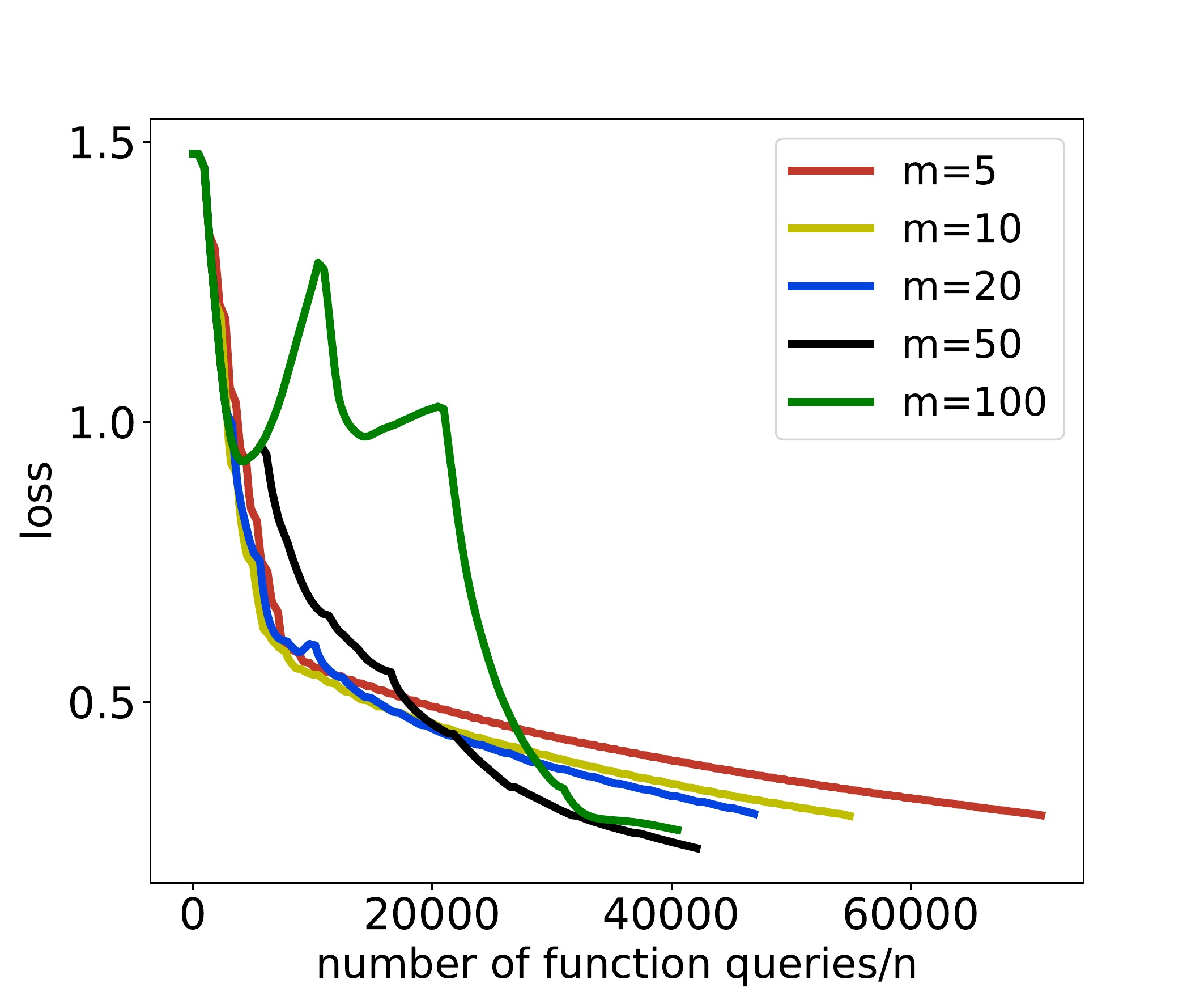}}
	\caption{\small  Comparison of ZO-VRGDA with different inner-loop lengths.}   \label{Exp_2}
\end{figure*}

{\bf Comparison among different inner-loop length: }We investigate how the inner-loop length affects the overall convergence of ZO-VRGDA. We consider the following inner loop lengths $\{5, 10, 20, 50, 100\}$.  It can be seen in \Cref{Exp_2} that ZO-VRGDA converges faster as we increase the inner-loop length $m$ initially, and then the convergence slows down as we further enlarge $m$ beyond a certain threshold. This verifies the tradeoff role that $m$ plays, i.e., larger $m$ attains a better optimized $y$ but causes more queries. \Cref{Exp_2} also illustrates that the performance of ZO-VRGDA is fairly robust to the inner-loop length as long as $m$ is not too large.


\section{Conclusion}
In this work, we have proposed the first zeroth-order variance reduced algorithm ZO-VRGDA for solving nonconvex-strongly-concave minimax optimization problems. The function query complexity of ZO-VRGDA achieves the best dependence on the target accuracy compared to previously designed gradient-free algorithms. We have also developed a novel analysis framework to characterize the convergence rate and the complexity, which we expect to be also useful for studying various other stochastic minimax problems such as proximal, momentum, and manifold optimization.

\section*{Acknowledgments} 
The work was supported in part by the U.S. National Science Foundation under the grants CCF-1761506, CCF-1801846, CCF-1801855, and CCF-190929.

\newpage
\bibliography{ref}
\bibliographystyle{apalike}

\onecolumn
\newpage
\appendix
\noindent {\Large \textbf{Supplementary Materials}}
\section{Specifications of Experiments} \label{exp_in_appendix} 
The distributionally robust optimization problem is formulated as follows:
\begin{align*}
\min_{x \in \mathcal{X}} 	\max_{y \in \mathcal{Y}} \sum_{i=1}^{n} y_i f_i(x) - r(y),
\end{align*}
where  $\mathcal{X} = \{x \in \mathbb{R}^d\}$, $\mathcal{Y} = \{y \in \mathbb{R}^n| \sum_{i=1}^{n} y_i = 1, y_i \geq 0, i = 1, \cdots n\}$, $r(y) = 10 \sum_{i=1}^{n} (y_i - 1/n)^2$, $f_i(x) = \phi(l(x))$ where $\phi(\theta) = 2\log\left(1 + \frac{\theta}{2}\right)$, $l(x; s, z) = \log(1 + \exp(-zx^\top s))$, and $(s, z)$ are the feature and label pair of a data sample. It can be seen that the problem is a minimax problem with  $d_1 = d$ and $d_2 = n$.  Since the distributionally robust optimization aims at an unbalanced dataset, we pick the samples from the original dataset and set the ratio between the number of negative  labeled samples and the number of positive labeled samples to be $1:4$. Since the maximization over $y$ is a constrained optimization problem, we incorporate a projection step after   updates of $y$ for all algorithms.


The details of the datasets used for zeroth-order algorithms are listed in \Cref{Exp_zeroth_order_dataset}. 

\begin{table}[H]
	\caption{Datasets used for zeroth-order algorithms} \label{Exp_zeroth_order_dataset}
	\centering
	\begin{tabular}{|l|l|l|l|}
		\toprule
		Datasets	& \# of samples & \# of features  & \# Pos: \# Neg \\	
		mushrooms& 200 &112  & 1:4  \\
		w8a	& 100 &300  &  1:4  \\ 
		a9a	& 150 &123  &  1:4  \\
		\bottomrule
	\end{tabular}
\end{table}

\section{Technical Lemmas}

\subsection{Preliminary Lemmas}
We first provide useful inequalities in convex optimization \cite{nesterov2013introductory,polyak1963gradient} and auxiliary lemmas from \cite{fang2018spider,luo2020stochastic}.
\begin{lemma}[\cite{nesterov2013introductory},\cite{polyak1963gradient}]\label{lemma12}
	Suppose $h(\cdot)$ is convex and has $\ell$-Lipschitz gradient. Then, we have
	\begin{flalign}
	\langle \nabla h(w) - \nabla h(w^\prime), w-w^\prime \rangle &\geq \frac{1}{\ell}\ltwo{\nabla h(w)-\nabla h(w^\prime)}^2.\label{eq: 6}
	\end{flalign}
\end{lemma}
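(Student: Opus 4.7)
The plan is to prove this classical co-coercivity inequality by reducing it, via an auxiliary convex function, to the standard descent lemma for $\ell$-smooth convex functions. The key idea is that for a convex $\ell$-smooth function $h$ and a fixed point $w'$, the shifted function
\[
g_{w'}(z) \triangleq h(z) - \langle \nabla h(w'),\, z \rangle
\]
is again convex and $\ell$-smooth, and by construction $\nabla g_{w'}(w') = 0$, so $w'$ is a global minimizer of $g_{w'}$.

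First I would apply the descent lemma (quadratic upper bound from $\ell$-smoothness) to $g_{w'}$ at the point $z = w - \tfrac{1}{\ell}\nabla g_{w'}(w)$, which is the natural ``one gradient step'' iterate. This gives
\[
g_{w'}\!\left(w - \tfrac{1}{\ell}\nabla g_{w'}(w)\right) \;\leq\; g_{w'}(w) - \tfrac{1}{2\ell}\|\nabla g_{w'}(w)\|^2.
\]
Since $w'$ is a global minimizer of $g_{w'}$, the left-hand side is lower bounded by $g_{w'}(w')$. Substituting back the definition of $g_{w'}$ and using $\nabla g_{w'}(w) = \nabla h(w) - \nabla h(w')$, I would rearrange to obtain
\[
\tfrac{1}{2\ell}\|\nabla h(w) - \nabla h(w')\|^2 \;\leq\; h(w) - h(w') - \langle \nabla h(w'),\, w - w' \rangle.
\]

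Then I would symmetrize by repeating the same argument with the roles of $w$ and $w'$ swapped (using the auxiliary function $g_w(z) = h(z) - \langle \nabla h(w), z\rangle$), which yields the twin inequality with $h(w)$ and $h(w')$ interchanged. Adding the two inequalities cancels the $h(w) - h(w')$ terms and combines the inner products into exactly $\langle \nabla h(w) - \nabla h(w'),\, w - w' \rangle$, producing the claimed bound with the factor $1/\ell$.

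The main (and only mild) obstacle is recognizing the right auxiliary construction; once $g_{w'}$ is in hand, every remaining step is a routine application of the descent lemma and the minimizer property, so no delicate estimation is needed. Since this lemma is cited verbatim from Nesterov and Polyak, I would in practice simply invoke those references, but the reduction above gives a self-contained three-line derivation.
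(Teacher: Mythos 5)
Your proof is correct: the auxiliary function $g_{w'}(z)=h(z)-\langle\nabla h(w'),z\rangle$ is indeed convex and $\ell$-smooth with minimizer $w'$, the descent-lemma step and the symmetrization both go through, and adding the two one-sided bounds yields exactly \cref{eq: 6}. The paper gives no proof of its own and simply cites Nesterov and Polyak; your derivation is precisely the standard argument from those references, so there is nothing to reconcile.
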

\begin{lemma}[\cite{nesterov2013introductory},\cite{polyak1963gradient}]\label{lemma2}
	Suppose $h(\cdot)$ is $\mu$-strongly convex and has $\ell$-Lipschitz gradient. Let $w^*$ be the minimizer of $h$. Then for any $w$ and $w^\prime$, we have the following inequalities hold.
	\begin{flalign}
	\langle \nabla h(w) - \nabla h(w^\prime), w-w^\prime \rangle &\geq \frac{\mu \ell}{\mu + \ell}\ltwo{w-w^\prime}^2 + \frac{1}{\mu+\ell}\ltwo{\nabla h(w) - \nabla h(w^\prime)}^2,\label{eq: 7}\\
	\ltwo{\nabla h(w)-\nabla h(w^\prime)}&\geq \mu \ltwo{w-w^\prime},\label{eq: 15}\\
	2\mu(h(w)-h(w^\prime))&\leq \ltwo{\nabla h(w)}^2.\label{eq: 8}
	\end{flalign}
\end{lemma}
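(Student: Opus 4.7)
\textbf{Plan for proving Lemma~\ref{lemma2}.} The three inequalities are standard consequences of $\mu$-strong convexity and $\ell$-gradient-Lipschitzness, and I would prove them in the order (2), (3), (1), because (2) is a quick consequence of the two basic strong convexity tangent inequalities, (3) is the Polyak--\L ojasiewicz bound obtained by one-step minimization, and (1) is the deepest of the three (Nesterov's improved co-coercivity) and will reuse Lemma~\ref{lemma12}.

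\textbf{Step 1: prove (2).} First I would write the two strong convexity inequalities:
\begin{align*}
h(w')&\ge h(w)+\langle \nabla h(w),w'-w\rangle + \tfrac{\mu}{2}\|w-w'\|_2^2,\\
h(w)&\ge h(w')+\langle \nabla h(w'),w-w'\rangle + \tfrac{\mu}{2}\|w-w'\|_2^2.
\end{align*}
Adding them cancels the $h$-values and gives $\langle \nabla h(w)-\nabla h(w'),w-w'\rangle\ge \mu\|w-w'\|_2^2$. Then Cauchy--Schwarz on the left-hand side yields $\|\nabla h(w)-\nabla h(w')\|_2\cdot\|w-w'\|_2\ge \mu\|w-w'\|_2^2$, and dividing by $\|w-w'\|_2$ (the case $w=w'$ is trivial) produces \cref{eq: 15}.

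\textbf{Step 2: prove (3).} I would use strong convexity at the base point $w$ with free endpoint $z$: $h(z)\ge h(w)+\langle\nabla h(w),z-w\rangle+\tfrac{\mu}{2}\|z-w\|_2^2$. Viewing the right-hand side as a quadratic in $z$ and minimizing (the minimizer is $z=w-\tfrac{1}{\mu}\nabla h(w)$) gives the lower bound $h(z)\ge h(w)-\tfrac{1}{2\mu}\|\nabla h(w)\|_2^2$. Taking $z=w^*$ (and reading $w'$ in the lemma statement as the minimizer, since the right-hand side is independent of $w'$) yields \cref{eq: 8}. I will flag this reading of $w'$ explicitly since otherwise the inequality would be vacuously false for arbitrary $w'$.

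\textbf{Step 3: prove (1) via the Nesterov shift trick.} Define $g(w)\triangleq h(w)-\tfrac{\mu}{2}\|w\|_2^2$. Strong convexity of $h$ makes $g$ convex, and a direct check shows $\nabla g$ is $(\ell-\mu)$-Lipschitz (this requires $\ell\ge\mu$, which is automatic). Applying Lemma~\ref{lemma12} to $g$ with Lipschitz constant $\ell-\mu$ gives
\begin{equation*}
\langle \nabla h(w)-\nabla h(w')-\mu(w-w'),\,w-w'\rangle \ge \tfrac{1}{\ell-\mu}\|\nabla h(w)-\nabla h(w')-\mu(w-w')\|_2^2.
\end{equation*}
Expanding the square on the right, moving the $\mu\|w-w'\|_2^2$ term to the left, and collecting $\langle \nabla h(w)-\nabla h(w'),w-w'\rangle$ on one side gives, after routine algebra, the target coefficients $\mu\ell/(\mu+\ell)$ and $1/(\mu+\ell)$. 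The degenerate case $\ell=\mu$ (where $h$ is quadratic and the shift argument divides by zero) I would dispatch separately: in that case $\nabla h(w)-\nabla h(w')=\mu(w-w')$, and both sides of \cref{eq: 7} equal $\mu\|w-w'\|_2^2$, so the inequality holds with equality.

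\textbf{Main obstacle.} The only nontrivial step is (1): the bookkeeping in expanding $\|\nabla h(w)-\nabla h(w')-\mu(w-w')\|_2^2$ and identifying the coefficients $\tfrac{\mu\ell}{\mu+\ell}$ and $\tfrac{1}{\mu+\ell}$ is what makes this Nesterov's sharpened co-coercivity rather than a naive combination. I expect most of the writing to be in carefully executing this expansion and in justifying the shift function's convexity and $(\ell-\mu)$-Lipschitz gradient; the degenerate case $\ell=\mu$ needs a one-line separate treatment.
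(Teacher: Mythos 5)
The paper gives no proof of this lemma: it is quoted verbatim from \cite{nesterov2013introductory} and \cite{polyak1963gradient} as a standard fact, so there is no in-paper argument to compare against. Your proposal is the standard textbook derivation and it is correct: summing the two strong-convexity tangent inequalities plus Cauchy--Schwarz gives \cref{eq: 15}; minimizing the quadratic minorant over the free endpoint gives \cref{eq: 8}; and applying the co-coercivity of \Cref{lemma12} to the shifted function $g(w)=h(w)-\tfrac{\mu}{2}\ltwo{w}^2$ (convex, with $(\ell-\mu)$-Lipschitz gradient) and expanding $\ltwo{u-\mu v}^2$ with $u=\nabla h(w)-\nabla h(w')$, $v=w-w'$ does yield $(\ell+\mu)\langle u,v\rangle\geq \ltwo{u}^2+\mu\ell\ltwo{v}^2$, which is \cref{eq: 7}; your separate treatment of $\ell=\mu$ is also right. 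The only point to correct is your flag in Step 2: the inequality $2\mu(h(w)-h(w'))\leq\ltwo{\nabla h(w)}^2$ is \emph{not} false for arbitrary $w'$ --- your own bound $h(z)\geq h(w)-\tfrac{1}{2\mu}\ltwo{\nabla h(w)}^2$ holds for every $z$, so substituting $z=w'$ proves the stated inequality for all $w'$ directly (any $w'$ satisfies $h(w')\geq h(w^*)$, which only makes the left-hand side smaller). No special reading of $w'$ as the minimizer is needed, although that is indeed how the paper invokes \cref{eq: 8} later (in the proof of \Cref{lemma17}).
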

\begin{lemma}[\cite{fang2018spider}, Lemma 2]\label{lemma5}
	Suppose Assumption \ref{ass5} hold. For any $(x,y)\in \mR^{d_1}\times\mR^{d_2}$ and sample batch $\{ \xi_1,\cdots,\xi_S\}$, let $v=\frac{1}{S}\sum_{i=1}^{S}\nabla_xF(x,y,\xi_i)$ and $u=\frac{1}{S}\sum_{i=1}^{\mcs}\nabla_yF(x,y,\xi_i)$. We have
	\begin{flalign*}
	\mE[\ltwo{v-\nabla_x f(x,y)}^2] + \mE[\ltwo{u-\nabla_y f(x,y)}^2] \leq \frac{\sigma^2}{S}.
	\end{flalign*}
\end{lemma}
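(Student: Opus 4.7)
The plan is to reduce both expectations to the basic ``variance of an empirical mean'' identity and then invoke Assumption \ref{ass5}. First, I would set $X_i \triangleq \nabla_x F(x,y;\xi_i) - \nabla_x f(x,y)$ and $Y_i \triangleq \nabla_y F(x,y;\xi_i) - \nabla_y f(x,y)$. Each $X_i$ and $Y_i$ is zero-mean because $\mE_\xi[\nabla F(x,y;\xi)] = \nabla f(x,y)$ by definition of $f$, and moreover independence of the draws $\xi_1, \ldots, \xi_S$ makes $\{X_i\}_{i=1}^S$ a collection of independent zero-mean vectors, and likewise for $\{Y_i\}_{i=1}^S$.

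Second, I would rewrite $v - \nabla_x f(x,y) = \frac{1}{S}\sum_{i=1}^S X_i$ and $u - \nabla_y f(x,y) = \frac{1}{S}\sum_{i=1}^S Y_i$, then expand each squared norm. The independence and zero-mean properties eliminate the cross terms $\mE[\langle X_i, X_j\rangle]$ for $i \neq j$, so $\mE[\ltwo{v - \nabla_x f(x,y)}^2] = \frac{1}{S}\mE[\ltwo{X_1}^2]$ and, identically, $\mE[\ltwo{u - \nabla_y f(x,y)}^2] = \frac{1}{S}\mE[\ltwo{Y_1}^2]$.

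Finally, I would add the two identities and observe that $\mE[\ltwo{X_1}^2] + \mE[\ltwo{Y_1}^2] = \mE[\ltwo{\nabla F(x,y;\xi_1) - \nabla f(x,y)}^2]$, since the squared norm of the full gradient splits orthogonally into the squared norms of its $x$- and $y$-components. Assumption \ref{ass5} caps the right-hand side by $\sigma^2$, which yields the claimed bound $\sigma^2/S$. There is essentially no obstacle here: the only point worth flagging is that $v$ and $u$ share the same sample batch and are therefore statistically dependent, but this is immaterial because the argument only invokes independence \emph{within} each empirical mean, never between $v$ and $u$.
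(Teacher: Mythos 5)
Your proof is correct and is exactly the standard argument behind this lemma, which the paper imports from \cite{fang2018spider} without reproving: i.i.d.\ zero-mean deviations kill the cross terms in each empirical mean, and the squared norm of the full gradient deviation decomposes as the sum of the $x$- and $y$-block deviations, so Assumption \ref{ass5} bounds the combined variance by $\sigma^2/S$ with no extra factor of $2$. Your closing remark about the shared sample batch is also the right thing to flag, and you resolve it correctly.
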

\begin{lemma}[\cite{fang2018spider}, Lemma 1]\label{lemma3}
	Let $\mcv_t$ be an estimator of $\mcb(z_t)$ as
	\begin{flalign*}
	\mcv_t=\mcb_{\mcs_*}(z_t) - \mcb_{\mcs_*}(z_{t-1}) + \mcv_{t-1},
	\end{flalign*}
	where $\mcb_{\mcs_*}=\frac{1}{\lone{\mcs_*}}\sum_{\mcb_i\in \mcs_*}\mcb_i$ satisfies
	\begin{flalign*}
	\mE[\mcb_i(z_t)-\mcb_i(z_{t-1}) | z_0,\cdots,z_{t-1} ]=\mE[\mcv_t-\mcv_{t-1}|z_0,\cdots,z_{t-1}].
	\end{flalign*}
	For all $k=1,\cdots,K$, we have
	\begin{flalign*}
	\mE[\ltwo{\mcv_t-\mcv_{t-1} - (\mcb_{\mcs_*}(z_t) - \mcb_{\mcs_*}(z_{t-1})) }^2]\leq \frac{1}{\mcs_*}\mE[\ltwo{\mcb_{i}(z_t) - \mcb_{i}(z_{t-1})}^2|z_0,\cdots,z_{t-1}],
	\end{flalign*}
	and
	\begin{flalign*}
	\mE[\ltwo{\mcv_t-\mcb(z_t)|z_0,\cdots,z_{t-1}}^2]\leq \ltwo{\mcv_{t-1}-\mcb(z_{t-1})}^2+\frac{1}{\lone{\mcs_*}}\mE[\ltwo{\mcb_{i}(z_t) - \mcb_{i}(z_{t-1})}^2|z_0,\cdots,z_{t-1}].
	\end{flalign*}
	Furthermore, if $\mcb_i$ is L-Lipschitz continuous in expectation, we have
	\begin{flalign*}
	\mE[\ltwo{\mcv_t-\mcb(z_t)|z_0,\cdots,z_{t-1}}^2]\leq \ltwo{\mcv_{t-1}-\mcb(z_{t-1})}^2+\frac{L^2}{\lone{\mcs_*}}\mE[\ltwo{z_{t}-z_{t-1}}^2|z_0,\cdots,z_{t-1}].
	\end{flalign*}
\end{lemma}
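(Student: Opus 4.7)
The plan is to prove the three bounds in sequence, each following from a short calculation built on top of the previous one. The overall strategy is the standard SPIDER-style bias/variance decomposition: conditioned on $z_0,\ldots,z_{t-1}$, the fresh sampling at step $t$ produces a mean-zero perturbation whose second moment is controlled by the variance of the sample mean, and the recursive formula $\mcv_t = \mcv_{t-1} + (\mcb_{\mcs_*}(z_t) - \mcb_{\mcs_*}(z_{t-1}))$ lets the old error be carried over as-is while the new error term is orthogonal to the past in the $L^2$ sense.

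For the first bound, I would fix the conditioning on $z_0,\ldots,z_{t-1}$ and read the hypothesis as saying that each indexed term $\mcb_i(z_t)-\mcb_i(z_{t-1})$ has the same conditional mean as the sample-mean increment, namely $\mcb(z_t)-\mcb(z_{t-1})$. Writing $X_i \coloneqq \mcb_i(z_t)-\mcb_i(z_{t-1}) - (\mcb(z_t)-\mcb(z_{t-1}))$, the $X_i$'s are conditionally i.i.d. and mean zero, and $\mcb_{\mcs_*}(z_t)-\mcb_{\mcs_*}(z_{t-1}) - (\mcb(z_t)-\mcb(z_{t-1})) = \frac{1}{|\mcs_*|}\sum_{i\in\mcs_*} X_i$. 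The standard variance-of-sample-mean identity then gives the $\frac{1}{|\mcs_*|}\mE\|X_i\|^2$ bound, and one more application of $\mE\|Y-\mE Y\|^2\le \mE\|Y\|^2$ replaces $\mE\|X_i\|^2$ by the stated $\mE\|\mcb_i(z_t)-\mcb_i(z_{t-1})\|^2$.

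For the second bound, the key algebraic identity is the decomposition
\begin{flalign*}
\mcv_t-\mcb(z_t) \;=\; \bigl(\mcv_{t-1}-\mcb(z_{t-1})\bigr) \;+\; \bigl[(\mcb_{\mcs_*}(z_t)-\mcb_{\mcs_*}(z_{t-1})) - (\mcb(z_t)-\mcb(z_{t-1}))\bigr],
\end{flalign*}
obtained by adding and subtracting $\mcb(z_{t-1})$ and using the recursive definition of $\mcv_t$. Conditional on $z_0,\ldots,z_{t-1}$, the first bracket is deterministic while the second has zero mean by the hypothesis, so expanding the squared norm kills the cross term. The first bracket contributes $\|\mcv_{t-1}-\mcb(z_{t-1})\|^2$ and the second bracket is exactly the quantity already controlled in part one, yielding the claimed recursion.

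The third bound is then immediate: if $\mcb_i$ is $L$-Lipschitz in expectation, then $\mE\|\mcb_i(z_t)-\mcb_i(z_{t-1})\|^2 \le L^2 \mE\|z_t-z_{t-1}\|^2$ (conditioned on the past), and substituting into the second bound gives the third. I do not anticipate any real obstacle; the only point of care is tracking the conditional-expectation notation consistently, in particular making sure that at step $t$ the freshly drawn sample $\mcs_*$ is independent of $z_0,\ldots,z_{t-1}$ so that the mean-zero argument for the cross term is actually valid. Everything else is bookkeeping on a standard variance-of-the-sample-mean calculation.
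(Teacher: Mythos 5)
Your proof is correct and is precisely the standard argument behind this result; the paper itself offers no proof of \Cref{lemma3}, importing it verbatim from Lemma 1 of \cite{fang2018spider}, whose proof proceeds by exactly your decomposition (conditionally i.i.d.\ mean-zero increments, variance of the sample mean, vanishing cross term because $\mcv_{t-1}-\mcb(z_{t-1})$ is measurable with respect to the past, then the Lipschitz-in-expectation substitution). The only point worth noting is that the first display, read literally with $\mcv_t-\mcv_{t-1}=\mcb_{\mcs_*}(z_t)-\mcb_{\mcs_*}(z_{t-1})$, is vacuously $0\le\cdot$; you correctly interpreted it as bounding the deviation of the minibatch increment from the population increment $\mcb(z_t)-\mcb(z_{t-1})$, which is the version actually needed to establish the second bound.
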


We provide the following lemmas to characterize the properties of Gaussian smoothed function and zeroth-order Gaussian gradient estimator. Consider a function $h(\cdot)$: $\mR^d\rightarrow \mR$. Let $\nu$ be a $d$-dimensional standard Gaussian random vector and $\mu>0$ be the smoothing parameter. Then a smooth approximation of $h(\cdot)$ is defined as $h_\tau(x)=\mE_\nu[h(x+\tau\nu)]$. We have the following lemmas.
\begin{lemma}[\cite{nesterov2017random}, Section 2]\label{lemma8}
	If $h(\cdot)$ is convex, then $h_\mu(\cdot)$ is also a convex function.
\end{lemma}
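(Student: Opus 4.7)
The plan is to prove convexity of $h_\tau$ directly from the definition $h_\tau(x)=\mE_\nu[h(x+\tau\nu)]$, by pushing the convexity inequality for $h$ through the expectation. The Gaussian density plays no essential role here beyond the fact that we are averaging translates of $h$; any distribution would work. So I would not invoke any property of the normal distribution itself, only linearity and monotonicity of the expectation.

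First I would fix arbitrary $x_1,x_2\in\mR^d$ and $\lambda\in[0,1]$, and rewrite
\begin{flalign*}
h_\tau(\lambda x_1 + (1-\lambda)x_2)
= \mE_\nu\bigl[h\bigl(\lambda(x_1+\tau\nu) + (1-\lambda)(x_2+\tau\nu)\bigr)\bigr],
\end{flalign*}
using the fact that $\lambda(x_1+\tau\nu)+(1-\lambda)(x_2+\tau\nu)=\lambda x_1+(1-\lambda)x_2+\tau\nu$ for every realization of $\nu$. Next I would apply the convexity of $h$ pointwise inside the expectation to obtain
\begin{flalign*}
h\bigl(\lambda(x_1+\tau\nu) + (1-\lambda)(x_2+\tau\nu)\bigr)
\leq \lambda\,h(x_1+\tau\nu) + (1-\lambda)\,h(x_2+\tau\nu).
\end{flalign*}
Finally, taking expectation over $\nu$ on both sides and using linearity yields
\begin{flalign*}
h_\tau(\lambda x_1+(1-\lambda)x_2)\leq \lambda\,h_\tau(x_1)+(1-\lambda)\,h_\tau(x_2),
\end{flalign*}
which is exactly convexity of $h_\tau$.

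There is essentially no obstacle here. The only technical subtlety worth a sentence is measurability/integrability: one should note that since $h$ is convex on $\mR^d$ it is continuous on the interior of its domain, hence measurable, so the expectation defining $h_\tau$ is well-defined (possibly $+\infty$), and the pointwise inequality inside the expectation can be integrated without issue. Since the lemma is invoked later only for smooth Lipschitz-gradient $h$, integrability is automatic in our setting and no further care is needed.
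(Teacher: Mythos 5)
Your proof is correct and is exactly the standard argument behind the cited result: the paper itself gives no proof (it simply cites \cite{nesterov2017random}), and the reference establishes convexity of the smoothed function in precisely this way, by writing $\lambda x_1+(1-\lambda)x_2+\tau\nu$ as a convex combination of $x_1+\tau\nu$ and $x_2+\tau\nu$ and pushing Jensen's inequality for $h$ through the expectation. Your remark that the Gaussian structure is irrelevant and that measurability/integrability is automatic for the smooth $h$ used in the paper is a fair and accurate aside.
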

\begin{lemma}[\cite{ghadimi2013stochastic}, Section 3.1]\label{lemma9}
	If $h(\cdot)$ has $\ell$-Lipschitz gradient, then $h_\mu(\cdot)$ also has $\ell$-Lipschitz gradient.
\end{lemma}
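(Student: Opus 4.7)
The plan is to push the Lipschitz estimate for $\nabla h$ through the Gaussian smoothing operator, using the fact that the smoothing is nothing but an expectation over a Gaussian translate. The only subtlety is to justify that the gradient commutes with this expectation.

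First, I would establish the representation
\[
\nabla h_\mu(x) \;=\; \mE_\nu[\nabla h(x+\mu\nu)],
\]
where $\nu\sim N(0,\mathbf{1}_d)$. This follows by differentiating $h_\mu(x)=\int h(x+\mu\nu)\,\phi(\nu)\,d\nu$ under the integral sign, which is permitted by the dominated convergence theorem: since $\nabla h$ is $\ell$-Lipschitz, $\|\nabla h(x+\mu\nu)\|$ grows at most linearly in $\|\nu\|$, and the Gaussian density $\phi(\nu)$ decays super-polynomially, so the integrand and its difference quotient are uniformly dominated on any bounded neighborhood of $x$ by a Gaussian-integrable function. (Equivalently, one can change variables and differentiate the smoothing kernel, which gives the same identity.)

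Second, with this representation in hand, for any $x,y\in\mR^d$ I would compute
\begin{align*}
\ltwo{\nabla h_\mu(x)-\nabla h_\mu(y)}
&= \Bltwo{\mE_\nu\bigl[\nabla h(x+\mu\nu)-\nabla h(y+\mu\nu)\bigr]}\\
&\le \mE_\nu\bigl[\ltwo{\nabla h(x+\mu\nu)-\nabla h(y+\mu\nu)}\bigr]\\
&\le \mE_\nu\bigl[\ell\,\ltwo{(x+\mu\nu)-(y+\mu\nu)}\bigr]
= \ell\,\ltwo{x-y},
\end{align*}
where the first inequality is Jensen's inequality applied to the (convex) Euclidean norm and the second is the $\ell$-Lipschitz property of $\nabla h$ applied pointwise in $\nu$. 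The $\mu\nu$ terms cancel exactly, so the Lipschitz constant is preserved with no loss.

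The main (and essentially only) obstacle is the differentiation-under-the-expectation step: one has to verify that the dominated convergence hypothesis is satisfied despite the fact that $\nabla h$ need not be bounded globally. This is handled cleanly by the linear growth bound $\ltwo{\nabla h(x+\mu\nu)}\le \ltwo{\nabla h(x)}+\ell\mu\ltwo{\nu}$ together with the Gaussian tail of $\phi$. Once this is in place, the remaining chain of inequalities is a one-line application of Jensen plus pointwise Lipschitzness, and no additional structure (convexity, boundedness of $h$, etc.) is required.
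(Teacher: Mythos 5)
Your proof is correct: the representation $\nabla h_\mu(x)=\mE_\nu[\nabla h(x+\mu\nu)]$ followed by Jensen's inequality and the pointwise Lipschitz bound is exactly the standard argument, and your justification of differentiation under the integral via the linear-growth bound on $\nabla h$ against the Gaussian tail is the right way to handle the one technical subtlety. The paper itself gives no proof of this lemma (it is imported verbatim from \cite{ghadimi2013stochastic}), and your argument coincides with the one in that reference, so there is nothing further to compare.
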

\begin{lemma}[\cite{nesterov2017random}, Theorem 1]\label{lemma10}
	If $h(\cdot)$ has $\ell$-Lipschitz gradient, then for all $x\in\mR^d$, we have $\lone{h(x)-h_\tau(x)}\leq \frac{\tau^2}{2}\ell d$.
\end{lemma}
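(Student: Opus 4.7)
The plan is to deduce this bias bound directly from the standard quadratic upper/lower bounds implied by the Lipschitz-gradient hypothesis, together with the first and second moments of the standard Gaussian. Since $h$ has $\ell$-Lipschitz gradient, the descent lemma gives, for every $x \in \mR^d$ and every $\nu \in \mR^d$,
\begin{equation*}
h(x) + \tau\langle \nabla h(x), \nu\rangle - \tfrac{\ell\tau^2}{2}\ltwo{\nu}^2 \;\leq\; h(x+\tau\nu) \;\leq\; h(x) + \tau\langle \nabla h(x), \nu\rangle + \tfrac{\ell\tau^2}{2}\ltwo{\nu}^2.
\end{equation*}
This two-sided quadratic sandwich is the only analytic input required, and it is exactly where the gradient-Lipschitz assumption enters.

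Next I would take expectations with respect to $\nu \sim N(0,\mathbf{1}_d)$ on all three sides. The linear term drops out because $\mE_\nu[\nu] = 0$, so $\mE_\nu[\langle \nabla h(x),\nu\rangle] = 0$. For the quadratic term I use the identity $\mE_\nu[\ltwo{\nu}^2] = d$, which is immediate from summing the $d$ coordinate variances of the standard Gaussian. By the definition $h_\tau(x) = \mE_\nu[h(x+\tau\nu)]$, the middle expression becomes $h_\tau(x)$, and rearranging yields
\begin{equation*}
-\tfrac{\tau^2}{2}\ell d \;\leq\; h_\tau(x) - h(x) \;\leq\; \tfrac{\tau^2}{2}\ell d,
\end{equation*}
which is precisely $\lone{h(x) - h_\tau(x)} \leq \tfrac{\tau^2}{2}\ell d$ and completes the proof.

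There is essentially no technical obstacle here; the proof is a one-line computation once the descent lemma and the Gaussian moments are in place. The only point worth stating carefully is that the descent lemma used applies pointwise to every realization of $\nu$ (not in expectation), which is what permits the pointwise sandwich to survive the expectation without losing the constant $\ell d/2$. No additional smoothness, convexity, or boundedness of $h$ is needed beyond the $\ell$-Lipschitz gradient assumption.
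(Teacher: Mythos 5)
Your proof is correct and is essentially the same argument as in the cited source (\cite{nesterov2017random}, Theorem 1); the paper itself does not reproduce a proof but simply imports the result. The pointwise quadratic sandwich from the descent lemma, followed by taking the Gaussian expectation with $\mE_\nu[\nu]=0$ and $\mE_\nu[\ltwo{\nu}^2]=d$, is exactly the standard derivation and yields the stated bound.
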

\begin{lemma}[\cite{nesterov2017random}, Lemma 3]\label{lemma11}
	If $h(\cdot)$ has $\ell$-Lipschitz gradient, then $\ltwo{\nabla_xh_\tau(x)-\nabla_xh(x)}^2\leq \frac{\tau^2}{4}\ell^2(d+3)^3$.
\end{lemma}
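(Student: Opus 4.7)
The plan is to follow the classical Nesterov--Spokoiny route, using an alternative "Stein-type" representation of $\nabla h_\tau$ together with a second-order Taylor remainder bound. Writing $h_\tau(x)=\mE_\nu[h(x+\tau\nu)]$ with $\nu\sim N(0,\mathbf{1}_d)$, I would first establish the integration-by-parts identity
\[
\nabla h_\tau(x)=\mE_\nu\!\left[\frac{h(x+\tau\nu)-h(x)}{\tau}\,\nu\right],
\]
which follows from differentiating under the integral with respect to the Gaussian density and using $\mE[\nu]=0$ to subtract the constant $h(x)$ inside the expectation. This representation is the key change-of-viewpoint: it expresses $\nabla h_\tau$ purely in terms of function \emph{values} of $h$, which is exactly what makes a second-order Taylor remainder amenable to the bias calculation.

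Next I would Taylor-expand $h(x+\tau\nu)$ around $x$. The $\ell$-Lipschitz gradient assumption yields
\[
\bigl|h(x+\tau\nu)-h(x)-\tau\langle\nabla h(x),\nu\rangle\bigr|\;\leq\;\tfrac{\ell\tau^{2}}{2}\,\ltwo{\nu}^{2}.
\]
Substituting this into the representation above and using the standard Gaussian identity $\mE[\langle a,\nu\rangle\nu]=a$ gives
\[
\nabla h_\tau(x)-\nabla h(x)\;=\;\mE_\nu\!\left[\frac{r(x,\tau\nu)}{\tau}\,\nu\right],
\qquad |r(x,\tau\nu)|\leq \tfrac{\ell\tau^{2}}{2}\ltwo{\nu}^{2}.
\]
Taking norms and pushing them inside the expectation by Jensen's inequality yields
\[
\ltwo{\nabla h_\tau(x)-\nabla h(x)}\;\leq\;\tfrac{\ell\tau}{2}\,\mE_\nu\!\left[\ltwo{\nu}^{3}\right].
\]

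The remaining step is the Gaussian moment bound $\mE[\ltwo{\nu}^{3}]\leq (d+3)^{3/2}$. This is the standard Nesterov--Spokoiny estimate and is the only step that carries technical weight: one proves by induction or by direct chi-distribution computation that $\mE[\ltwo{\nu}^{p}]\leq (d+p)^{p/2}$ for $p\geq 2$, specialized to $p=3$. Plugging this in and squaring produces the claimed inequality
\[
\ltwo{\nabla h_\tau(x)-\nabla h(x)}^{2}\;\leq\;\tfrac{\tau^{2}}{4}\,\ell^{2}(d+3)^{3}.
\]

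The main obstacle is really the moment bound $\mE[\ltwo{\nu}^{3}]\leq (d+3)^{3/2}$; every other step (Stein-type rewriting, Taylor remainder, Jensen) is routine. Because this lemma is directly cited from \cite{nesterov2017random} as an off-the-shelf tool and is not re-derived in the paper, I would simply note that the moment bound is standard (e.g., it follows from $\mE[\ltwo{\nu}^{2k}]\leq \prod_{i=0}^{k-1}(d+2i)$ via the chi-squared moment recursion, with interpolation to odd $p$ by Cauchy--Schwarz) and then assemble the three displays above into the stated one-line bound.
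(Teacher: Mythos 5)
Your proposal is correct: the Stein-type representation $\nabla h_\tau(x)=\mE_\nu[\tau^{-1}(h(x+\tau\nu)-h(x))\nu]$, the second-order Taylor remainder bound from the $\ell$-Lipschitz gradient, Jensen's inequality, and the Gaussian moment estimate $\mE[\ltwo{\nu}^3]\leq(d+3)^{3/2}$ assemble exactly into the stated bound. The paper itself gives no proof of this lemma---it is imported verbatim as Lemma 3 of \cite{nesterov2017random}---and your argument is a faithful reconstruction of that source's proof, so there is nothing to add beyond noting that the moment bound you flag as the only nontrivial step is indeed inequality (17) of that reference.
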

The following lemma characterizes the estimation error of a zeroth-order coordinate-wise estimator with batch size $S_1$ in lines 6 and 7 in \Cref{al:zosredaboost}.
\begin{lemma}\label{lemma19}
	Suppose Assumption \ref{ass2} and \ref{ass5} hold. Suppose $\text{mod}(t,q)=0$, and let $\epsilon(S_1,\delta)=\mE[\ltwo{v_t-\nabla_xf_{\mu_1}(x_t,y_t)}^2] + \mE[\ltwo{u_t-\nabla_yf_{\mu_2}(x_t,y_t)}^2]$. Then, we have
	\begin{flalign*}
		\epsilon(S_1,\delta) \leq \frac{(d_1+d_2)\ell^2\delta^2}{2} + \frac{4\sigma^2}{S_1} + \frac{\mu_1^2}{2}\ell^2(d_1+3)^3 + \frac{\mu_2^2}{2}\ell^2(d_2+3)^3.
	\end{flalign*}
\end{lemma}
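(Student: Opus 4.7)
Fix $t$ with $\mathrm{mod}(t,q)=0$, so that $v_t$ and $u_t$ are the coordinate-wise central-difference estimators based on $S_1$ iid samples $\{\xi_i\}$. The plan is to bound $\mE[\ltwo{v_t-\nabla_x f_{\mu_1}(x_t,y_t)}^2]$ and $\mE[\ltwo{u_t-\nabla_y f_{\mu_2}(x_t,y_t)}^2]$ separately by the same recipe and then add them; since the two arguments are symmetric I describe only the $x$-side.

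I will isolate three sources of error: (i) the Gaussian-smoothing bias between $\nabla_x f$ and $\nabla_x f_{\mu_1}$; (ii) the central-difference bias of the coordinate-wise estimator relative to the true gradient of $F$; and (iii) the Monte-Carlo stochasticity coming from averaging only $S_1$ samples of $\xi$. Concretely, letting $\tilde v_t:=\tfrac{1}{S_1}\sum_{i=1}^{S_1}\nabla_x F(x_t,y_t;\xi_i)$ denote the ``ideal'' first-order stochastic gradient, I would write
\begin{equation*}
v_t - \nabla_x f_{\mu_1}(x_t,y_t) = (v_t-\tilde v_t) + (\tilde v_t-\nabla_x f(x_t,y_t)) + (\nabla_x f(x_t,y_t)-\nabla_x f_{\mu_1}(x_t,y_t)),
\end{equation*}
and apply a weighted AM--GM inequality whose coefficients are tuned so that the $\sigma^2/S_1$ and $\mu_i^2$ contributions come out with exactly the constants in the statement. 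Term (i) is deterministic: by Assumption~\ref{ass2} plus Jensen's inequality $\nabla_x f$ inherits an $\ell$-Lipschitz gradient, so \Cref{lemma11} yields $\ltwo{\nabla_x f-\nabla_x f_{\mu_1}}^2\le\tfrac{\mu_1^2}{4}\ell^2(d_1+3)^3$. Term (iii) is dispatched directly by \Cref{lemma5}, giving $\mE[\ltwo{\tilde v_t-\nabla_x f(x_t,y_t)}^2]\le\sigma^2/S_1$.

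The main technical work is bounding the per-sample finite-difference error $\mE[\ltwo{v_t-\tilde v_t}^2]$. By convexity of $\ltwo{\cdot}^2$ it reduces to controlling the single-sample quantity $\mE[\ltwo{v_t^{(1)}-\nabla_x F(x_t,y_t;\xi_1)}^2]$, where $v_t^{(1)}$ is the coordinate-wise central difference computed from a single sample. I would work coordinate by coordinate via the integral identity
\begin{equation*}
v_{t,j}^{(1)}-\partial_j F(x_t,y_t;\xi_1) = \tfrac{1}{2\delta}\int_{-\delta}^{\delta}\bigl[\partial_j F(x_t+s e_j,y_t;\xi_1) - \partial_j F(x_t,y_t;\xi_1)\bigr]\,ds,
\end{equation*}
apply Cauchy--Schwarz inside, commute expectation with the integral, and invoke the averaged Lipschitz bound of Assumption~\ref{ass2}, which gives $\mE[|\partial_j F(x_t+se_j,\cdot;\xi)-\partial_j F(x_t,\cdot;\xi)|^2]\le\ell^2 s^2$. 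Integrating over $s$ and summing over the $d_1$ coordinates yields a bound of order $d_1\ell^2\delta^2$ which, after the AM--GM coefficients and adding its $y$-counterpart, contributes exactly $\tfrac{(d_1+d_2)\ell^2\delta^2}{2}$ to the final bound.

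The main obstacle is that Assumption~\ref{ass2} controls the gradient-Lipschitz property of $F(\cdot,\cdot;\xi)$ only \emph{in expectation}, not pointwise, so the argument cannot substitute a pathwise bound $\ltwo{\nabla F(x+se_j,\cdot;\xi)-\nabla F(x,\cdot;\xi)}\le\ell|s|$. The fix is to push the expectation inside the integral before invoking the squared Lipschitz-in-expectation bound, and to carry out the coordinate-wise variance computation under the expectation rather than via a uniform pathwise bound on $\nabla F$. Once this is handled, adding the $x$-side and $y$-side estimates term by term gives the claimed
\begin{equation*}
\epsilon(S_1,\delta)\le\frac{(d_1+d_2)\ell^2\delta^2}{2}+\frac{4\sigma^2}{S_1}+\frac{\mu_1^2}{2}\ell^2(d_1+3)^3+\frac{\mu_2^2}{2}\ell^2(d_2+3)^3.
\end{equation*}
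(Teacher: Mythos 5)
Your argument is correct and reaches the same conclusion, but it is genuinely more self-contained than the paper's. The paper treats the bound $\mE[\ltwo{v_t-\nabla_x f(x_t,y_t)}^2]\le \tfrac{d_1\ell^2\delta^2}{2}+\tfrac{2\sigma^2}{S_1}$ as a black box imported from (B.56)--(B.57) of the SPIDER paper, and then only needs a two-term Young's inequality together with \Cref{lemma11} to convert from $\nabla_x f$ to $\nabla_x f_{\mu_1}$. You instead re-derive that imported estimate from first principles: the three-way split through the ideal stochastic gradient $\tilde v_t$, the integral representation of the central difference, Cauchy--Schwarz inside the integral, and then Fubini so that the \emph{averaged} Lipschitz condition of Assumption~\ref{ass2} can be invoked (your observation that the pathwise Lipschitz bound is unavailable, and that the expectation must be pushed inside the integral before squaring, is exactly the point that makes this derivation non-trivial). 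Your coordinate-wise computation gives $\mE[\ltwo{v_t-\tilde v_t}^2]\le \tfrac{d_1\ell^2\delta^2}{3}$, the Monte-Carlo term is handled by \Cref{lemma5} as in the paper, and the smoothing bias by \Cref{lemma11} as in the paper. What your route buys is independence from the external reference; what the paper's route buys is brevity.

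One caveat on constants: your claim that the AM--GM weights can be tuned to reproduce \emph{exactly} the stated coefficients does not go through. To land on $\tfrac{\mu_1^2}{2}\ell^2(d_1+3)^3$ you need weight $2$ on the smoothing term, and to land on $\tfrac{4\sigma^2}{S_1}$ you need weight $4$ on the Monte-Carlo term; the three reciprocal weights must sum to at most $1$, which then forces weight at least $4$ on the finite-difference term and hence a $\delta^2$ contribution larger than $\tfrac{(d_1+d_2)\ell^2\delta^2}{2}$. This is purely a bookkeeping issue --- the paper's own displayed proof likewise ends at $(d_1+d_2)\ell^2\delta^2+\tfrac{8\sigma^2}{S_1}+\cdots$, a factor of $2$ weaker than the lemma statement in the first two terms --- and since the lemma is only consumed through $\Theta(\cdot)$-level parameter choices, neither discrepancy affects anything downstream. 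But you should either present the uniform weight-$3$ version of the bound or state the result with the constants your weights actually deliver.
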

\begin{proof}
	(B.56) and (B.57) in \cite{fang2018spider} imply that
	\begin{flalign}
		\mE[\ltwo{v_t-\nabla_xf(x_t,y_t)}^2] \leq \frac{d_1\ell^2 \delta^2 }{2}+ \frac{2\sigma^2}{S_1},\label{eq: 69}
	\end{flalign}
	and
	\begin{flalign}
		\mE[\ltwo{u_t-\nabla_yf(x_t,y_t)}^2] \leq \frac{d_2\ell^2 \delta^2 }{2}+ \frac{2\sigma^2}{S_1}. \label{eq: 70}
	\end{flalign}
	Then we proceed as follows:
	\begin{flalign}
		&\mE[\ltwo{v_t-\nabla_xf_{\mu_1}(x_t,y_t)}^2] + \mE[\ltwo{u_t-\nabla_yf_{\mu_2}(x_t,y_t)}^2] \nonumber\\
		&\leq 2\mE[\ltwo{v_t-\nabla_xf(x_t,y_t)}^2] + 2\mE[\ltwo{u_t-\nabla_yf(x_t,y_t)}^2] \nonumber\\
		&\quad + 2\mE[\ltwo{\nabla_xf_{\mu_1}(x_t,y_t)-\nabla_xf(x_t,y_t)}^2] + 2\mE[\ltwo{\nabla_xf_{\mu_2}(x_t,y_t)-\nabla_yf(x_t,y_t)}^2] \nonumber\\
		&\overset{(i)}{\leq} 2\mE[\ltwo{v_t-\nabla_xf(x_t,y_t)}^2] + 2\mE[\ltwo{u_t-\nabla_yf(x_t,y_t)}^2] + \frac{\mu_1^2}{2}\ell^2(d_1+3)^3 + \frac{\mu_2^2}{2}\ell^2(d_2+3)^3 \nonumber\\
		&\overset{(ii)}{\leq} (d_1+d_2)\ell^2\delta^2 + \frac{8\sigma^2}{S_1} + \frac{\mu_1^2}{2}\ell^2(d_1+3)^3 + \frac{\mu_2^2}{2}\ell^2(d_2+3)^3,\nonumber
	\end{flalign}
	where $(i)$ follows from \Cref{lemma11}, and $(ii)$ follows from \cref{eq: 69} and \cref{eq: 70}.
\end{proof}
We denote 
\begin{flalign*}
	G_{\mu_1}(x,y,\nu_i,\xi_i)=\frac{F(x+\mu_1 \nu_i,y,\xi_i)-F(x,y,\xi_i)}{\mu_1}\nu_i
\end{flalign*}
and
\begin{flalign*}
	H_{\mu_2}(x,y,\omega_i,\xi_i)=\frac{F(x,y+\mu_2 \omega_i,\xi_i)-F(x,y,\xi_i)}{\mu_2}\omega_i
\end{flalign*}
as unbiased estimators of $\nabla_x f_{\mu_1}(x,y)$ and $\nabla_y f_{\mu_2}(x,y)$, respectively. Then we have the following lemma.
\begin{lemma}\label{lemma6}
	Suppose Assumption \ref{ass2} holds, and suppose $u_1$ and $u_2$ are standard Gaussian random vector, i.e., $\nu_i\sim N(0,\mathbf{1}_{d_1})$ and $\omega_i\sim N(0,\mathbf{1}_{d_2})$. Then, we have
	\begin{flalign*}
	\mE\left[ \ltwo{G_{\mu_1}(x,y,\nu_i,\xi_i) - G_{\mu_1}(x^\prime,y,\nu_i,\xi_i) }^2 \right]\leq 2(d_1+4)\ell^2\ltwo{x-x^\prime}^2 + 2\mu^2_1(d_1+6)^3\ell^2,\\
	\mE\left[ \ltwo{G_{\mu_1}(x,y,\nu_i,\xi_i) - G_{\mu_1}(x,y^\prime,\nu_i,\xi_i) }^2 \right]\leq 2(d_1+4)\ell^2\ltwo{y-y^\prime}^2 + 2\mu^2_1(d_1+6)^3\ell^2,
	\end{flalign*}
	and
	\begin{flalign*}
	\mE\left[ \ltwo{H_{\mu_2}(x,y,\nu_i,\xi_i) - H_{\mu_2}(x^\prime,y,\nu_i,\xi_i) }^2 \right]\leq 2(d_2+4)\ell^2\ltwo{x-x^\prime}^2 + 2\mu^2_2(d_2+6)^3\ell^2,\\
	\mE\left[ \ltwo{H_{\mu_2}(x,y,\nu_i,\xi_i) - H_{\mu_2}(x,y^\prime,\nu_i,\xi_i) }^2 \right]\leq 2(d_2+4)\ell^2\ltwo{y-y^\prime}^2 + 2\mu^2_2(d_2+6)^3\ell^2.
	\end{flalign*}
\end{lemma}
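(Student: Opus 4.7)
The plan is to prove the first inequality in full detail; the remaining three follow from completely analogous arguments with the roles of $x,y,\nu_i,\omega_i$ and $d_1,d_2$ permuted appropriately. Abbreviate $g(z)=F(z,y,\xi_i)$ for fixed $y$ and $\xi_i$. The essential step is to Taylor-expand the finite difference to first order in $\nu_i$:
\begin{flalign*}
\frac{g(x+\mu_1\nu_i)-g(x)}{\mu_1} = \nabla g(x)^\top\nu_i + \epsilon(x),
\end{flalign*}
where the remainder is $\epsilon(x):=\int_0^1[\nabla g(x+s\mu_1\nu_i)-\nabla g(x)]^\top\nu_i\,ds$. Subtracting the identity at $x'$ yields the decomposition
\begin{flalign*}
G_{\mu_1}(x,y,\nu_i,\xi_i) - G_{\mu_1}(x',y,\nu_i,\xi_i) = \bigl\{[\nabla g(x)-\nabla g(x')]^\top\nu_i + \epsilon(x)-\epsilon(x')\bigr\}\nu_i,
\end{flalign*}
which I square and control via $(a+b)^2\le 2a^2+2b^2$, treating the linear-in-$\nu_i$ piece and the remainder piece separately.

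For the linear piece, I apply the Gaussian moment identity $\mE_{\nu_i}[(v^\top\nu_i)^2\ltwo{\nu_i}^2]=(d_1+2)\ltwo{v}^2$ (provable directly by rotating $v$ to the first coordinate axis and computing $\mE[\nu_1^2\ltwo{\nu_i}^2]=3+(d_1-1)$), then take $\mE_{\xi_i}$ and invoke the averaged Lipschitz bound of Assumption \ref{ass2} to obtain $2(d_1+2)\ell^2\ltwo{x-x'}^2 \le 2(d_1+4)\ell^2\ltwo{x-x'}^2$. For the remainder piece, I further split via $(\epsilon(x)-\epsilon(x'))^2\le 2\epsilon(x)^2+2\epsilon(x')^2$ and bound each $\epsilon(\cdot)^2$ by Cauchy--Schwarz inside the integral plus Assumption \ref{ass2}, giving $\mE_{\xi_i}[\epsilon(x)^2]\le \tfrac{1}{3}\mu_1^2\ell^2\ltwo{\nu_i}^4$. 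Multiplying by $\ltwo{\nu_i}^2$ and using the standard Gaussian moment bound $\mE[\ltwo{\nu_i}^6]\le(d_1+6)^3$ (since $d_1(d_1+2)(d_1+4)\le(d_1+6)^3$) produces a bias term of the required order $\mu_1^2\ell^2(d_1+6)^3$, with the prefactor absorbed into the slack in the bound $d_1(d_1+2)(d_1+4)\le(d_1+6)^3$. The $y$-variable variant of $G_{\mu_1}$ uses the same decomposition but with $\nabla_xF(x,y,\xi_i)-\nabla_xF(x,y',\xi_i)$ as the leading vector; the two $H_{\mu_2}$ bounds are completely symmetric with $\omega_i$ and $d_2$ in place of $\nu_i$ and $d_1$.

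The main obstacle is obtaining the correct $O(d_1)$ rather than $O(d_1^2)$ dimension scaling in the Lipschitz-style term. A naive mean-value theorem applied to the entire finite difference $\tfrac{[g(x+\mu_1\nu_i)-g(x'+\mu_1\nu_i)]-[g(x)-g(x')]}{\mu_1}\nu_i$ produces a factor of $\ltwo{\nu_i}^4$, whose Gaussian expectation is $d_1(d_1+2)$, yielding a wasteful $O(d_1^2)\ell^2\ltwo{x-x'}^2$ bound that would be inadequate for the downstream complexity analysis. The Taylor-style extraction of $[(\nabla g(x)-\nabla g(x'))^\top\nu_i]\nu_i$ as the dominant term is essential because its second moment scales only as $(d_1+2)\ltwo{\nabla g(x)-\nabla g(x')}^2$, and the $O(\mu_1^2)$-order residual $(\epsilon(x)-\epsilon(x'))\nu_i$ can be bounded independently of $\ltwo{x-x'}$ and absorbed into the bias term. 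A secondary subtlety is that Assumption \ref{ass2} provides Lipschitz continuity of $\nabla_xF$ only in expectation over $\xi_i$, so the $\mE_{\xi_i}$ expectation must be taken inside both the inner product in the leading term and the integral defining $\epsilon$, rather than attempting a per-sample pathwise bound.
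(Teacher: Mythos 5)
Your decomposition is the same one the paper uses: peel off the first-order Taylor term $[(\nabla_x F(x,y,\xi_i)-\nabla_x F(x',y,\xi_i))^\top\nu_i]\nu_i$, bound its second moment by a Gaussian moment identity times the averaged Lipschitz constant from Assumption \ref{ass2}, and push the quadratic-in-$\mu_1$ remainder into the $(d_1+6)^3$ bias term via $\mE[\ltwo{\nu_i}^6]\leq(d_1+6)^3$. Your identity $\mE[(v^\top\nu_i)^2\ltwo{\nu_i}^2]=(d_1+2)\ltwo{v}^2$ is correct and even slightly sharper than the $(d_1+4)$ bound the paper imports from Nesterov, and your remark that Assumption \ref{ass2} holds only in expectation over $\xi_i$ is a legitimate point of care (the paper's own step $(i)$ silently uses the per-sample descent inequality $|F(a,b,\xi)-F(a',b,\xi)-\langle\nabla_xF(a,b,\xi),a-a'\rangle|\leq\tfrac{\ell}{2}\ltwo{a-a'}^2$).

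There is, however, one concrete slip in your remainder bound. From $2(\epsilon(x)-\epsilon(x'))^2\ltwo{\nu_i}^2\leq 4(\epsilon(x)^2+\epsilon(x')^2)\ltwo{\nu_i}^2$ together with your estimate $\mE_{\xi_i}[\epsilon(x)^2]\leq\tfrac{1}{3}\mu_1^2\ell^2\ltwo{\nu_i}^4$, you obtain a remainder contribution of $\tfrac{8}{3}\mu_1^2\ell^2\,\mE[\ltwo{\nu_i}^6]=\tfrac{8}{3}\mu_1^2\ell^2\,d_1(d_1+2)(d_1+4)$, and the claim that the extra factor $\tfrac{4}{3}$ is ``absorbed into the slack'' of $d_1(d_1+2)(d_1+4)\leq(d_1+6)^3$ is false once $d_1\geq 39$: the ratio $(d_1+6)^3/[d_1(d_1+2)(d_1+4)]$ tends to $1$, so it cannot swallow any constant exceeding $1$ uniformly in $d_1$. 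The source of the loss is using Jensen/Cauchy--Schwarz inside the $s$-integral, which produces $\int_0^1 s^2\,ds=\tfrac13$ where the paper's pointwise quadratic bound gives $\tfrac14$. The fix is immediate and preserves your ``expectation-only'' care: apply Minkowski's integral inequality to get $\big(\mE_{\xi_i}[\epsilon(x)^2]\big)^{1/2}\leq\int_0^1\big(\mE_{\xi_i}\ltwo{\nabla g(x+s\mu_1\nu_i)-\nabla g(x)}^2\big)^{1/2}\ltwo{\nu_i}\,ds\leq\tfrac{\ell\mu_1}{2}\ltwo{\nu_i}^2$, hence $\mE_{\xi_i}[\epsilon(x)^2]\leq\tfrac14\mu_1^2\ell^2\ltwo{\nu_i}^4$, which yields exactly the stated constant $2\mu_1^2(d_1+6)^3\ell^2$. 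With that one replacement your argument matches the lemma as stated; everything else, including the three symmetric variants, goes through as you describe.
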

\begin{proof}
	The proof is similar to that of Lemma 3 in \cite{fang2018spider}. Here we provide the proof for completeness.
	We will show how to upper bound the term $\mE\left[ \ltwo{G_{\mu_1}(x,y,\nu_1,\xi) - G_{\mu_1}(x^\prime,y,\nu_1,\xi) }^2 \right]$ here. Then, the upper bounds on the remaining three terms can be obtained by following similar steps. We proceed the bound as follows.
	\begin{flalign}
	&\mE\left[ \ltwo{G_{\mu_1}(x,y,\nu_i,\xi_i) - G_{\mu_1}(x,y^\prime,\nu_i,\xi_i) }^2 \right] \nonumber\\
	&=\mE\left[ \ltwo{\frac{F(x+\mu_1\nu_i,y,\xi_i) - F(x,y,\xi_i) }{\mu_1}\nu_1 - \frac{F(x+\mu_1\nu_i,y^\prime,\xi_i) - F(x,y^\prime,\xi_i)}{\mu_1}\nu_1}^2 \right] \nonumber\\
	&= \mE\Bigg[\Bltwo{\frac{F(x+\mu_1\nu_i,y,\xi_i) - F(x,y,\xi_i) - \langle \nabla_x F(x,y,\xi_i), \mu_1 \nu_i \rangle }{\mu_1}\nu_i \nonumber\\
		&\quad- \frac{F(x+\mu_1\nu_i,y^\prime,\xi_i) - F(x,y^\prime,\xi_i)- \langle \nabla_x F(x,y^\prime,\xi_i), \mu_1 \nu_i \rangle }{\mu_1}\nu_i \nonumber\\
		&\quad + \langle \nabla_x F(x,y,\xi_i) - \nabla_x F(x,y^\prime,\xi_i), \nu_i \rangle \nu_i }^2\Bigg] \nonumber\\
	&\leq 2\mE\Bigg[\Bltwo{\frac{F(x+\mu_1\nu_i,y,\xi_i) - F(x,y,\xi_i) - \langle \nabla_x F(x,y,\xi_i), \mu_1 \nu_i \rangle }{\mu_1}\nu_i \nonumber\\
		&\quad- \frac{F(x+\mu_1\nu_i,y^\prime,\xi_i) - F(x,y^\prime,\xi_i)- \langle \nabla_x F(x,y^\prime,\xi_i), \mu_1 \nu_i \rangle }{\mu_1}\nu_i}^2\Bigg] \nonumber\\
	&\quad + 2\mE\big[\ltwo{\langle \nabla_x F(x,y,\xi_i) - \nabla_x F(x,y^\prime,\xi_i), \nu_i \rangle \nu_i}^2 \big]\nonumber\\
	&\leq 4\mE\Bigg[\Bltwo{\frac{F(x+\mu_1\nu_i,y,\xi_i) - F(x,y,\xi_i) - \langle \nabla_x F(x,y,\xi_i), \mu_1 \nu_i \rangle }{\mu_1}\nu_i}^2 \Bigg] \nonumber\\
	&\quad + 4\mE\Bigg[\Bltwo{\frac{F(x+\mu_1\nu_i,y^\prime,\xi_i) - F(x,y^\prime,\xi_i)- \langle \nabla_x F(x,y^\prime,\xi_i), \mu_1 \nu_i \rangle }{\mu_1}\nu_i}^2\Bigg] \nonumber\\
	&\quad + 2\mE\big[\ltwo{\langle \nabla_x F(x,y,\xi_i) - \nabla_x F(x,y^\prime,\xi_i), \nu_i \rangle \nu_i}^2 \big] \nonumber\\
	&\leq 4\mE\Bigg[\lone{\frac{F(x+\mu_1\nu_i,y,\xi_i) - F(x,y,\xi_i) - \langle \nabla_x F(x,y,\xi_i), \mu_1 \nu_i \rangle }{\mu_1}}^2 \ltwo{\nu_i}^2 \Bigg] \nonumber\\
	&\quad + 4\mE\Bigg[\lone{\frac{F(x+\mu_1\nu_i,y^\prime,\xi_i) - F(x,y^\prime,\xi_i)- \langle \nabla_x F(x,y^\prime,\xi_i), \mu_1 \nu_i \rangle }{\mu_1}}^2 \ltwo{\nu_i}^2 \Bigg] \nonumber\\
	&\quad + 2\mE\big[\ltwo{\langle \nabla_x F(x,y,\xi_i) - \nabla_x F(x,y^\prime,\xi_i), \nu_i \rangle \nu_i }^2 \big] \nonumber\\
	&\overset{(i)}{\leq} 2\mu^2_1\ell^2\mE[\ltwo{\nu_i}^2] + 2\mE\big[\ltwo{\langle \nabla_x F(x,y,\xi_i) - \nabla_x F(x,y^\prime,\xi_i), \nu_i \rangle \nu_i }^2 \big] \nonumber\\
	&\overset{(ii)}{\leq} 2\mu^2_1\ell^2\mE[\ltwo{\nu_i}^2] + 2(d_1+4)\mE\big[\ltwo{ \nabla_x F(x,y,\xi_i) - \nabla_x F(x,y^\prime,\xi_i)  }^2 \big] \nonumber\\
	&\overset{(iii)}{\leq} 2\mu^2_1(d_1+6)^3\ell^2 + 2(d_1+4)\ell^2\mE\big[\ltwo{ y - y^\prime }^2 \big],\nonumber
	\end{flalign}
	where $(i)$ follows from the fact that for any $a,a^\prime\in \mR^{d_1}$ and $b\in \mR^{d_2}$, we have
	\begin{flalign*}
	\lone{F(a,b,\xi_i) - F(a^\prime,b,\xi_i)- \langle \nabla_x F(a,b,\xi_i), a-a^\prime \rangle}\leq \frac{\ell}{2}\ltwo{a-a^\prime}^2,
	\end{flalign*}
	because $F(a,b,\xi)$ has $\ell$-Lipschitz continuous gradient; $(ii)$ follows because
	\begin{flalign*}
	\mE[\ltwo{\langle a, \nu_i \rangle \nu_i}^2]\leq (d_1+4)\ltwo{a}^2,
	\end{flalign*}
	obtained from $(33)$ in \cite{nesterov2017random}, and $(iii)$ follows because $\mE[\ltwo{\nu_i}^2]\leq (d_1+6)^3$ in $(17)$ of \cite{nesterov2017random}.
\end{proof}

\subsection{Useful Properties for Zeroth-Order Concave Maximizer}\label{sc: resultofzoconcavemaximizer}
In this section, we show some properties for the zeroth-order concave maximizer in \Cref{al:zoconcavemaximizer}. For simplicity, for any given $t\geq 0$, we define $g_t(y)=-f(x_{t+1},y)$ and $g_{t,\mu_2}(y)=-f_{\mu_2}(x_{t+1},y)$. \Cref{lemma8} and \Cref{lemma9} imply that $g_t(\cdot)$ is $\mu$-strongly convex and has $\ell$-Lipschitz gradient, and $g_{t,\mu_2}(\cdot)$ is convex and has $\ell$-Lipschitz gradient. We also define $\ty^*_t=\argmin_y g_t(y)$. We can obtain the following two lemmas by following the same steps in \cite{luo2020stochastic}
\begin{lemma}[Lemma 9 of \cite{luo2020stochastic}]\label{lemma15}
	Consider \Cref{al:zoconcavemaximizer}. We have
	\begin{flalign*}
		\sum_{k=0}^{m}\mE[\ltwo{\nabla g_{t,\mu_2}(\ty_{t,k})}^2] \leq \frac{2}{\beta}\mE[g_{t,\mu_2}(\ty_{t,0}) - g_{t,\mu_2}(\ty_{t,m+1}) ] + \sum_{k=0}^{m}\mE[\ltwo{\nabla g_{t,\mu_2}(\ty_{t,k}) - \tu_{t,k} }^2].
	\end{flalign*}
\end{lemma}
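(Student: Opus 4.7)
The plan is a standard descent-lemma argument for gradient descent on a smooth function, adapted to the inexact setting where $\tu_{t,k}$ only approximately tracks the gradient of $g_{t,\mu_2}$. By \Cref{lemma8} and \Cref{lemma9}, $g_{t,\mu_2}$ is convex with $\ell$-Lipschitz gradient, and since the inner loop of \Cref{al:zoconcavemaximizer} fixes $\tx_{t,k}\equiv x_{t+1}$, the update $\ty_{t,k+1}=\ty_{t,k}+\beta\tu_{t,k}$ is an inexact (descent) step on $g_{t,\mu_2}$ once one absorbs the sign convention $\nabla g_{t,\mu_2}(\ty_{t,k})=-\nabla_y f_{\mu_2}(x_{t+1},\ty_{t,k})$ that $\tu_{t,k}$ is designed to approximate. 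Thus the calculation is of ``approximate gradient descent under smoothness'', which is classical in minimization.

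First, I would invoke the $\ell$-smoothness descent inequality for $g_{t,\mu_2}$,
\begin{flalign*}
g_{t,\mu_2}(\ty_{t,k+1}) \leq g_{t,\mu_2}(\ty_{t,k}) + \langle \nabla g_{t,\mu_2}(\ty_{t,k}),\, \ty_{t,k+1}-\ty_{t,k}\rangle + \frac{\ell}{2}\ltwo{\ty_{t,k+1}-\ty_{t,k}}^2,
\end{flalign*}
and plug in $\ty_{t,k+1}-\ty_{t,k}=\beta\tu_{t,k}$. Next, I would apply the polarization identity $2\langle a,b\rangle = \ltwo{a}^2+\ltwo{b}^2-\ltwo{a-b}^2$ (with the appropriate sign on $\tu_{t,k}$) to rewrite the inner-product term, and then use the stepsize condition $\beta\leq 1/\ell$ provided by \Cref{al:zosredaboost} to make the coefficient of the leftover $\ltwo{\tu_{t,k}}^2$ term nonpositive and discard it. This yields the per-iteration inequality
\begin{flalign*}
\frac{\beta}{2}\ltwo{\nabla g_{t,\mu_2}(\ty_{t,k})}^2 \leq g_{t,\mu_2}(\ty_{t,k})-g_{t,\mu_2}(\ty_{t,k+1}) + \frac{\beta}{2}\ltwo{\nabla g_{t,\mu_2}(\ty_{t,k})-\tu_{t,k}}^2,
\end{flalign*}
in which the $\ltwo{\tu_{t,k}}^2$ cross-terms have cancelled, leaving only the gradient-tracking error on the right-hand side.

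Finally, I would take expectations, multiply through by $2/\beta$, and sum the resulting inequality over $k=0,1,\ldots,m$. The function-value differences telescope to $\mE[g_{t,\mu_2}(\ty_{t,0})-g_{t,\mu_2}(\ty_{t,m+1})]$, delivering precisely the claimed bound. There is no substantive obstacle in this argument: the only care required is the sign bookkeeping between $\tu_{t,k}$ and $\nabla g_{t,\mu_2}$, and making sure the stepsize $\beta=\Theta(1/\ell)$ is indeed small enough that $\ell\beta/2\leq 1/2$, so that the $\ltwo{\tu_{t,k}}^2$ coefficient vanishes and no additional error term (beyond the gradient-tracking error already appearing in the statement) survives.
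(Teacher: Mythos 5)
Your argument is correct and is essentially the proof the paper defers to: the paper states this lemma without proof, citing Lemma 9 of \cite{luo2020stochastic}, whose derivation is exactly your descent-lemma step $g_{t,\mu_2}(\ty_{t,k+1})\leq g_{t,\mu_2}(\ty_{t,k})+\beta\langle\nabla g_{t,\mu_2}(\ty_{t,k}),\tu_{t,k}\rangle+\tfrac{\ell\beta^2}{2}\ltwo{\tu_{t,k}}^2$ followed by polarization, dropping the $\ltwo{\tu_{t,k}}^2$ term via $\ell\beta\leq 1$, and telescoping over $k=0,\dots,m$. Your remark on sign bookkeeping is apt, since the paper identifies $\nabla g_{t,\mu_2}$ with $\nabla_y f_{\mu_2}(x_{t+1},\cdot)$ up to sign inside the squared-norm error terms, which is harmless here.
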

\begin{lemma}[Lemma 11 of \cite{luo2020stochastic}]\label{lemma16}
	Consider \Cref{al:zoconcavemaximizer} with any $\beta\leq \frac{2}{\ell}$ and $k\geq 1$. We have
	\begin{flalign*}
		\mE[\ltwo{\nabla g_{t,\mu_2}(\ty_{t,k}) - \tu_{t,k} }^2]\leq \mE[\ltwo{\nabla g_{t,\mu_2}(\ty_{t,0}) - \tu_{t,0} }^2] + \frac{\ell\beta}{2-\ell\beta}\mE[\ltwo{\tu_{t,0}}^2].
	\end{flalign*}
\end{lemma}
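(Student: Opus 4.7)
My plan is to follow the Lyapunov-function approach underlying Lemma 11 of \cite{luo2020stochastic}, adapted to the Gaussian-smoothed zeroth-order setting of \Cref{al:zoconcavemaximizer}. The first step will be to establish a one-step recursion on the estimation error $E_k := \mE[\ltwo{\nabla g_{t,\mu_2}(\ty_{t,k}) - \tu_{t,k}}^2]$. Applying \Cref{lemma3} to the SARAH-style recursive construction of $\tu_{t,k}$, with the gradient operator instantiated as $H_{\mu_2}$ whose averaged Lipschitz constant is controlled by \Cref{lemma6}, and using the inner-loop displacement identity $\ty_{t,k} - \ty_{t,k-1} = \beta\,\tu_{t,k-1}$, I expect a bound of the form $E_k \leq E_{k-1} + \ell^2\beta^2\,\mE[\ltwo{\tu_{t,k-1}}^2]$, where the batch size $S_{2,y}$ absorbs the dimension factor $(d_2+4)$ arising in \Cref{lemma6}.

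Next, I would track the evolution of $\mE[\ltwo{\tu_{t,k}}^2]$ itself. Expanding $\ltwo{\tu_{t,k}}^2 = \ltwo{\tu_{t,k-1}}^2 + 2\langle \tu_{t,k-1}, \tu_{t,k} - \tu_{t,k-1}\rangle + \ltwo{\tu_{t,k} - \tu_{t,k-1}}^2$, taking conditional expectation on the past, using that in expectation $\tu_{t,k} - \tu_{t,k-1}$ reduces to $-(\nabla g_{t,\mu_2}(\ty_{t,k}) - \nabla g_{t,\mu_2}(\ty_{t,k-1}))$, and invoking the co-coercivity inequality for the convex smoothed function $g_{t,\mu_2}$ (convexity from \Cref{lemma8}, $\ell$-smoothness from \Cref{lemma9}, and then \Cref{lemma12} for co-coercivity), I would derive a contraction of the form $\mE[\ltwo{\tu_{t,k}}^2] \leq \mE[\ltwo{\tu_{t,k-1}}^2] - (2\beta - \ell\beta^2)\cdot(\text{positive quantity}) + \text{variance remainder}$, where the positive quantity is essentially the squared change in the gradient across the step.

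Finally, I would combine the two recursions through the Lyapunov functional $T_k := E_k + \lambda\,\mE[\ltwo{\tu_{t,k}}^2]$ with $\lambda = \frac{\ell\beta}{2-\ell\beta}$. The coefficient $\lambda$ is tuned precisely so that when the two recursions are added the $\mE[\ltwo{\tu_{t,k-1}}^2]$ contributions cancel, producing $T_k \leq T_{k-1}$ for every $k \geq 1$. Telescoping gives $T_k \leq T_0$, and dropping the nonnegative term $\lambda\,\mE[\ltwo{\tu_{t,k}}^2]$ delivers exactly the stated bound; the hypothesis $\beta \leq 2/\ell$ enters naturally as the range in which $\lambda$ is positive.

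The main obstacle will be the sharp bookkeeping in the second step: extracting a contraction tight enough that, when balanced against the variance-growth rate $\ell^2\beta^2$ from the first step, the Lyapunov coefficient $\frac{\ell\beta}{2-\ell\beta}$ emerges exactly rather than being merely upper-bounded. A secondary subtlety is that $H_{\mu_2}$ is unbiased for $\nabla_y f_{\mu_2}$ rather than $\nabla_y f$, but working throughout with the smoothed objective $g_{t,\mu_2}$ sidesteps this issue, provided the convexity and co-coercivity of $g_{t,\mu_2}$ are used consistently and the inner-loop operator $H_{\mu_2}$ is treated as the stochastic estimator of $\nabla_y f_{\mu_2}$ at every step of the recursion.
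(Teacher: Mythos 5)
The paper does not actually prove this lemma; it imports Lemma 11 of \cite{luo2020stochastic} and asserts the "same steps" go through, and that underlying proof is exactly the SARAH convex-case variance bound: (i) the martingale decomposition $E_k\leq E_{k-1}+\mE[\ltwo{\tu_{t,k}-\tu_{t,k-1}}^2]$, (ii) the co-coercivity inequality $\mE[\ltwo{\tu_{t,k}}^2]\leq \mE[\ltwo{\tu_{t,k-1}}^2]-(\tfrac{2}{\beta\ell}-1)\mE[\ltwo{\tu_{t,k}-\tu_{t,k-1}}^2]$, and (iii) their combination with weight $\tfrac{\ell\beta}{2-\ell\beta}=1/(\tfrac{2}{\beta\ell}-1)$, so that the \emph{squared increments} $\mE[\ltwo{\tu_{t,k}-\tu_{t,k-1}}^2]$ cancel and telescoping against $\mE[\ltwo{\tu_{t,0}}^2]-\mE[\ltwo{\tu_{t,k}}^2]$ gives the claim. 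Your skeleton (two recursions plus a Lyapunov combination with exactly this weight) is the right one, but the cancellation you describe does not close. In Step 1 you prematurely apply the Lipschitz bound to replace the increment by $\ell^2\beta^2\,\mE[\ltwo{\tu_{t,k-1}}^2]$, while in Step 2 your negative term is a multiple of the squared \emph{gradient change} $D_k=\ltwo{\nabla g_{t,\mu_2}(\ty_{t,k})-\nabla g_{t,\mu_2}(\ty_{t,k-1})}^2$. With $\lambda=\tfrac{\ell\beta}{2-\ell\beta}$ one has $\lambda(2\beta-\ell\beta^2)=\ell\beta^2$, so your monotonicity $T_k\leq T_{k-1}$ would require $\ell^2\beta^2\,\mE[\ltwo{\tu_{t,k-1}}^2]\leq \ell\beta^2\,\mE[D_k]$, i.e., a \emph{lower} bound $D_k\geq \ell\ltwo{\tu_{t,k-1}}^2$; smoothness only gives the reverse inequality $D_k\leq \ell^2\beta^2\ltwo{\tu_{t,k-1}}^2$, and $D_k$ can be arbitrarily small. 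The repair is simply not to discard the increment in Step 1: the same quantity $\mE[\ltwo{\tu_{t,k}-\tu_{t,k-1}}^2]$ must appear with a positive sign in the variance recursion and a negative sign in the second-moment recursion, and it is these terms — not the $\mE[\ltwo{\tu_{t,k-1}}^2]$ terms — that the Lyapunov weight is tuned to cancel.

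A secondary point you raise but do not resolve: in the first-order argument the co-coercivity is applied \emph{per component} to $\nabla_y F(x,\cdot\,;\xi_i)$ (concave and $\ell$-smooth by Assumption \ref{ass3}), which is what makes the increment term cancel exactly. The Gaussian estimator $H_{\mu_2}(x,\cdot,\omega_i,\xi_i)$ is not the gradient of a concave function of $y$, so co-coercivity is available only for the expectation $\nabla_y f_{\mu_2}$; the conditional variance of the increment then survives in Step 2 and, by \Cref{lemma3} and \Cref{lemma6}, contributes an additive $O\big(\mu_2^2(d_2+6)^3\ell^2/S_{2,y}\big)$ term that the stated bound does not display (compare the extra $\mu_2$-dependent terms the paper does carry in \Cref{lemma14}). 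Working "throughout with $g_{t,\mu_2}$" does not sidestep this; it is precisely the place where the zeroth-order adaptation requires either an extra error term or an extra argument.
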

The following lemma characterizes the recursion of $\mE[\ltwo{\nabla g_{t,\mu_2}(\ty_{t,\tm_t})}^2]$ within each inner loop.
\begin{lemma}\label{lemma17}
	Consider \Cref{al:zoconcavemaximizer}. For any $k\geq 1$ and $\beta\leq \frac{1}{\ell}$, we have
	\begin{flalign*}
	\mE[\ltwo{\nabla g_{t,\mu_2}(\ty_{t,\tm_t})}^2] &\leq \frac{2}{\beta\mu(m+1)}\mE[\ltwo{\nabla g_{t,\mu_2}(\ty_{t,0})}^2] + \mE[\ltwo{\nabla g_{t,\mu_2}(\ty_{t,0}) - \tu_{t,0} }^2] + \frac{\ell\beta}{2-\ell\beta}\mE[\ltwo{\tu_{t,0}}^2] \\
	&\quad  + \frac{2}{\beta(m+1)}\left(\frac{\mu_2^2}{4\mu}\ell^2(d_2+3)^3  + \mu_2^2\ell d_2 \right).
	\end{flalign*}
\end{lemma}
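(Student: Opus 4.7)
The plan is to start from the observation that since $\tm_t$ is drawn uniformly from $\{0,1,\ldots,m\}$,
\begin{flalign*}
\mE[\ltwo{\nabla g_{t,\mu_2}(\ty_{t,\tm_t})}^2] = \frac{1}{m+1}\sum_{k=0}^{m}\mE[\ltwo{\nabla g_{t,\mu_2}(\ty_{t,k})}^2],
\end{flalign*}
so it suffices to average the bound supplied by \Cref{lemma15} and then handle the resulting gradient-tracking-error sum via \Cref{lemma16}. Dividing the bound in \Cref{lemma15} by $m+1$ produces a function-value-gap term $\frac{2}{\beta(m+1)}\mE[g_{t,\mu_2}(\ty_{t,0}) - g_{t,\mu_2}(\ty_{t,m+1})]$ plus $\frac{1}{m+1}\sum_{k=0}^{m}\mE[\ltwo{\nabla g_{t,\mu_2}(\ty_{t,k}) - \tu_{t,k}}^2]$. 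For the latter, the $k=0$ summand is kept as is, while for each $k\geq 1$ the term is bounded by \Cref{lemma16} (whose hypothesis $\beta\leq 2/\ell$ is implied by $\beta\leq 1/\ell$); using $m/(m+1)\leq 1$ then yields exactly the two terms $\mE[\ltwo{\nabla g_{t,\mu_2}(\ty_{t,0}) - \tu_{t,0}}^2]$ and $\frac{\ell\beta}{2-\ell\beta}\mE[\ltwo{\tu_{t,0}}^2]$ appearing in the claim.

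The delicate step is to control the function-value gap $\mE[g_{t,\mu_2}(\ty_{t,0}) - g_{t,\mu_2}(\ty_{t,m+1})]$. Since the argmin of $g_{t,\mu_2}$ is not $\ty^*_t$ (only $g_t$ has an explicit strongly convex structure with known minimizer), the approach is to detour through $g_t$ using the smoothing inequality $\lone{g_t(y) - g_{t,\mu_2}(y)}\leq \tfrac{\mu_2^2}{2}\ell d_2$ from \Cref{lemma10} at both $\ty_{t,0}$ and $\ty_{t,m+1}$, giving
\begin{flalign*}
g_{t,\mu_2}(\ty_{t,0}) - g_{t,\mu_2}(\ty_{t,m+1}) \leq g_t(\ty_{t,0}) - g_t(\ty^*_t) + \mu_2^2\ell d_2,
\end{flalign*}
where I additionally used $g_t(\ty_{t,m+1})\geq g_t(\ty^*_t)$. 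Next, the Polyak--Lojasiewicz-type inequality \cref{eq: 8} of \Cref{lemma2}, applied to the $\mu$-strongly convex $g_t$, gives $g_t(\ty_{t,0}) - g_t(\ty^*_t) \leq \tfrac{1}{2\mu}\ltwo{\nabla g_t(\ty_{t,0})}^2$. Finally, \Cref{lemma11} together with $\ltwo{a+b}^2\leq 2\ltwo{a}^2 + 2\ltwo{b}^2$ converts $\ltwo{\nabla g_t(\ty_{t,0})}^2$ into $2\ltwo{\nabla g_{t,\mu_2}(\ty_{t,0})}^2 + \tfrac{\mu_2^2}{2}\ell^2(d_2+3)^3$. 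Multiplying through by $\frac{2}{\beta(m+1)}$ reproduces exactly the $\frac{2}{\beta\mu(m+1)}\mE[\ltwo{\nabla g_{t,\mu_2}(\ty_{t,0})}^2]$ coefficient and the $\frac{2}{\beta(m+1)}\left(\frac{\mu_2^2}{4\mu}\ell^2(d_2+3)^3 + \mu_2^2\ell d_2\right)$ residual stated in the lemma.

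The main obstacle I anticipate is precisely this book-keeping: one must track the smoothing errors twice (at the function-value level via \Cref{lemma10} and at the gradient level via \Cref{lemma11}), and the factor $2$ introduced by $\ltwo{a+b}^2\leq 2\ltwo{a}^2+2\ltwo{b}^2$ must combine cleanly with the $\tfrac{1}{2\mu}$ from the PL inequality so as to land on the $\frac{2}{\beta\mu(m+1)}$ coefficient asserted by the lemma (the two factors of $\tfrac{1}{2}$ and $2$ must cancel rather than compound). Apart from this constant-tracking, the proof is a direct assembly of \Cref{lemma15}, \Cref{lemma16}, \Cref{lemma2}, \Cref{lemma10} and \Cref{lemma11}, and no additional machinery is required.
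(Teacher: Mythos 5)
Your proposal is correct and follows essentially the same route as the paper's own proof: average the bound of \Cref{lemma15} over the uniformly chosen index, control the tracking-error sum via \Cref{lemma16}, and bound the function-value gap by passing to $g_t$ with \Cref{lemma10}, the PL inequality \cref{eq: 8}, and \Cref{lemma11}. The constant bookkeeping you flag ($\tfrac{1}{2\mu}$ from PL cancelling against the factor $2$ from the gradient-smoothing split) works out exactly as in the paper's Eq.~\eqref{eq: 47}.
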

\begin{proof}
	Taking summation of the result of \Cref{lemma16} over $t=\{0,\cdots m\}$ yields
	\begin{flalign}
		\sum_{k=0}^{m}\mE[\ltwo{\nabla g_{t,\mu_2}(\ty_{t,k}) - \tu_{t,k}}^2] \leq (m+1)\mE[\ltwo{\nabla g_{t,\mu_2}(\ty_{t,0}) - \tu_{t,0} }^2] + \frac{\ell\beta(m+1)}{2-\ell\beta}\mE[\ltwo{\tu_{t,0}}^2]. \label{eq: 44}
	\end{flalign}
	Combining \cref{eq: 44} with \Cref{lemma15} yields
	\begin{flalign}
		\sum_{k=0}^{m}\mE[\ltwo{\nabla g_{t,\mu_2}(\ty_{t,k})}^2] &\leq \frac{2}{\beta}\mE[g_{t,\mu_2}(\ty_{t,0}) - g_{t,\mu_2}(\ty_{t,m+1}) ] + (m+1)\mE[\ltwo{\nabla g_{t,\mu_2}(\ty_{t,0}) - \tu_{t,0} }^2] \nonumber\\
		&\quad + \frac{\ell\beta(m+1)}{2-\ell\beta}\mE[\ltwo{\tu_{t,0}}^2]. \label{eq: 45}
	\end{flalign}
	Dividing both sides of \cref{eq: 45} by $m$ and recalling the definition of $\tm_t$ in the output of \Cref{al:zoconcavemaximizer} yield
	\begin{flalign}
		\mE[\ltwo{\nabla g_{t,\mu_2}(\ty_{t,\tm_t})}^2]&\leq \frac{2}{\beta(m+1)}\mE[g_{t,\mu_2}(\ty_{t,0}) - g_{t,\mu_2}(\ty_{t,m+1}) ] + \mE[\ltwo{\nabla g_{t,\mu_2}(\ty_{t,0}) - \tu_{t,0} }^2] \nonumber\\
		&\quad + \frac{\ell\beta}{2-\ell\beta}\mE[\ltwo{\tu_{t,0}}^2]. \label{eq: 46}
	\end{flalign}
	We then bound the term $\mE[g_{t,\mu_2}(\ty_{t,0}) - g_{t,\mu_2}(\ty_{t,m+1}) ]$ as follows:
	\begin{flalign}
		&\mE[g_{t,\mu_2}(\ty_{t,0}) - g_{t,\mu_2}(\ty_{t,m+1}) ] \nonumber\\
		&=\mE[g_t(\ty_{t,0}) - g_t(\ty_{t,m+1})] + \mE[g_{t,\mu_2}(\ty_{t,0}) - g_t(\ty_{t,0}) ] + \mE[g_t(\ty_{t,m+1}) - g_{t,\mu_2}(\ty_{t,m+1}) ] \nonumber\\
		&\leq \mE[g_t(\ty_{t,0}) - g_t(\ty_{t,m+1})] + \mE[\lone{g_{t,\mu_2}(\ty_{t,0}) - g_t(\ty_{t,0})} ] + \mE[\lone{ g_{t,\mu_2}(\ty_{t,m+1}) - g_t(\ty_{t,m+1})} ] \nonumber\\
		&\overset{(i)}{\leq} \mE[g_t(\ty_{t,0}) - g_t(\ty_{t,m+1})] + \mu_2^2\ell d_2\nonumber\\
		&\leq \mE[g_t(\ty_{t,0}) - g_t(\ty^*_t)] + \mu_2^2\ell d_2\nonumber\\
		&\overset{(ii)}{\leq} \frac{1}{2\mu}\mE[\ltwo{\nabla g_t(\ty_{t,0})}^2]  + \mu_2^2\ell d_2 \nonumber\\
		&\leq \frac{1}{\mu}\mE[\ltwo{\nabla g_{t,\mu_2}(\ty_{t,0})}^2] + \frac{1}{\mu}\mE[\ltwo{\nabla g_{t,\mu_2}(\ty_{t,0}) - \nabla g_t(\ty_{t,0}) }^2]  + \mu_2^2\ell d_2 \nonumber\\
		&\overset{(iii)}{\leq} \frac{1}{\mu}\mE[\ltwo{\nabla g_{t,\mu_2}(\ty_{t,0})}^2] + \frac{\mu_2^2}{4\mu}\ell^2(d_2+3)^3  + \mu_2^2\ell d_2,\label{eq: 47}
	\end{flalign}
	where $(i)$ follows from \Cref{lemma10}, $(ii)$ follows from \cref{eq: 8} in \Cref{lemma2}, and $(iii)$ follows from \Cref{lemma11}. Substituting \cref{eq: 47} into \cref{eq: 46} yields
	\begin{flalign*}
		\mE[\ltwo{\nabla g_{t,\mu_2}(\ty_{t,\tm_t})}^2] &\leq \frac{2}{\beta\mu(m+1)}\mE[\ltwo{\nabla g_{t,\mu_2}(\ty_{t,0})}^2] + \mE[\ltwo{\nabla g_{t,\mu_2}(\ty_{t,0}) - \tu_{t,0} }^2] + \frac{\ell\beta}{2-\ell\beta}\mE[\ltwo{\tu_{t,0}}^2] \\
		&\quad  + \frac{2}{\beta(m+1)}\left(\frac{\mu_2^2}{4\mu}\ell^2(d_2+3)^3  + \mu_2^2\ell d_2 \right),
	\end{flalign*}
	which completes the proof.
\end{proof}

\begin{lemma}\label{lemma14}
	Consider \Cref{al:zoconcavemaximizer}. Let $S_{2,y}\geq 16\kappa(d_2+4)\ell\beta$ and $\beta\leq\frac{1}{6\ell}$. For any $t>0$, we have
	\begin{flalign*}
		\sum_{k=0}^{m} 	\mE[\ltwo{\tu_{t,k}}^2] \leq \frac{1}{1-b} \mE[\ltwo{\tu_{t,0}}^2] + \frac{m+1}{1-b}\left[ \frac{2\mu^2_2\ell\kappa}{\beta}(d_2+3)^3 + 7 \mu^2_2(d_2+6)^3\ell^2 \right],
	\end{flalign*}
	where $b=1 - \frac{\beta\mu\ell}{2(\mu+\ell)}$.
\end{lemma}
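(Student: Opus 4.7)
The plan is to derive a one-step contraction of the form $\mE[\ltwo{\tu_{t,k}}^2\mid\mf_{k-1}]\leq b\,\ltwo{\tu_{t,k-1}}^2+D$ for every $k\geq 1$, and then sum the resulting geometric recursion from $k=0$ to $k=m$. The crucial structural fact is that, after the very first step, the inner loop of \Cref{al:zoconcavemaximizer} keeps $\tx_{t,k}=x_{t+1}$ fixed, so for $k\geq 1$ the increment $\tu_{t,k}-\tu_{t,k-1}$ is driven purely by the $y$-movement $\ty_{t,k}-\ty_{t,k-1}=\beta\tu_{t,k-1}$. This ties both the estimator's conditional bias and its variance back to $\ltwo{\tu_{t,k-1}}^2$, which is exactly what enables a geometric contraction.

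The first step is to decompose $\mE[\ltwo{\tu_{t,k}}^2\mid\mf_{k-1}]$ via the bias/variance identity. Introducing $\bar{\Delta}_g=\nabla g_{t,\mu_2}(\ty_{t,k})-\nabla g_{t,\mu_2}(\ty_{t,k-1})$, unbiasedness of $H_{\mu_2}$ yields $\mE[\tu_{t,k}\mid\mf_{k-1}]=\tu_{t,k-1}-\bar{\Delta}_g$, so
\[
\mE[\ltwo{\tu_{t,k}}^2\mid\mf_{k-1}]=\ltwo{\tu_{t,k-1}-\bar{\Delta}_g}^2+\mathrm{Var}(\tu_{t,k}\mid\mf_{k-1}).
\]
Because $g_t$ is $\mu$-strongly convex with $\ell$-Lipschitz gradient, and Gaussian smoothing preserves both (\Cref{lemma9}, plus the fact that an average of $\mu$-strongly convex functions is $\mu$-strongly convex), the smoothed surrogate $g_{t,\mu_2}$ is also $\mu$-strongly convex with $\ell$-Lipschitz gradient. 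Applying \cref{eq: 7} to $g_{t,\mu_2}$ at $\ty_{t,k}$ and $\ty_{t,k-1}$ and using $\ty_{t,k}-\ty_{t,k-1}=\beta\tu_{t,k-1}$ gives
\[
\langle\tu_{t,k-1},\bar{\Delta}_g\rangle\geq\frac{\mu\ell\beta}{\mu+\ell}\ltwo{\tu_{t,k-1}}^2+\frac{1}{\beta(\mu+\ell)}\ltwo{\bar{\Delta}_g}^2.
\]
Expanding $\ltwo{\tu_{t,k-1}-\bar{\Delta}_g}^2$ and using $\beta\leq 1/(6\ell)\leq 2/(\mu+\ell)$ to drop the residual $\ltwo{\bar{\Delta}_g}^2$ piece yields $\ltwo{\tu_{t,k-1}-\bar{\Delta}_g}^2\leq\bigl(1-\tfrac{2\mu\ell\beta}{\mu+\ell}\bigr)\ltwo{\tu_{t,k-1}}^2$.

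For the variance term, I would invoke \Cref{lemma3} with the mapping $H_{\mu_2}$ in place of $\mcb_i$ and $\ty$ in place of $z$, and then bound each per-sample squared increment by the second inequality of \Cref{lemma6} (applied to $H_{\mu_2}$ in its $y$-argument), obtaining
\[
\mathrm{Var}(\tu_{t,k}\mid\mf_{k-1})\leq\frac{2(d_2+4)\ell^2\beta^2}{S_{2,y}}\ltwo{\tu_{t,k-1}}^2+\frac{2\mu_2^2(d_2+6)^3\ell^2}{S_{2,y}}.
\]
Under $S_{2,y}\geq 16\kappa(d_2+4)\ell\beta$, the first coefficient is at most $\mu\beta/8$, which (using $\mu+\ell\leq 2\ell$) is dominated by $\tfrac{2\mu\ell\beta}{\mu+\ell}-\tfrac{\mu\ell\beta}{2(\mu+\ell)}$. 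Combining the mean and variance bounds therefore produces the one-step contraction with factor $b=1-\tfrac{\beta\mu\ell}{2(\mu+\ell)}$ and additive constant $D\leq 2\mu_2^2(d_2+6)^3\ell^2$.

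It then remains to iterate this inequality and sum. Taking expectation gives $\mE[\ltwo{\tu_{t,k}}^2]\leq b^k\mE[\ltwo{\tu_{t,0}}^2]+D(1-b^k)/(1-b)$; summing over $k=0,\dots,m$ and bounding the two geometric sums by $1/(1-b)$ and $(m+1)/(1-b)$ respectively delivers exactly the claimed bound, since $D\leq 2\mu_2^2(d_2+6)^3\ell^2$ is comfortably dominated by $\tfrac{2\mu_2^2\ell\kappa}{\beta}(d_2+3)^3+7\mu_2^2(d_2+6)^3\ell^2$. The main obstacle is the contraction-budgeting step: one has to spend the strong-convexity contraction $2\mu\ell\beta/(\mu+\ell)$ on two things at once, namely absorbing the $\ltwo{\tu_{t,k-1}}^2$ piece of the variance and retaining a residual contraction $\mu\ell\beta/(2(\mu+\ell))$, and the batch-size condition $S_{2,y}\geq 16\kappa(d_2+4)\ell\beta$ is calibrated precisely to make this double-duty possible.
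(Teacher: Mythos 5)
Your proof is correct and in fact establishes a slightly stronger bound than the one stated. It shares the paper's overall skeleton --- a one-step geometric contraction with factor $b=1-\frac{\beta\mu\ell}{2(\mu+\ell)}$, obtained from the coercivity inequality \cref{eq: 7} for the descent part and from \Cref{lemma3} together with \Cref{lemma6} for the variance part, followed by summing the geometric recursion --- but it departs from the paper at the key step. The paper applies \cref{eq: 7} to the \emph{unsmoothed} $f(x_{t+1},\cdot)$, which forces it to convert $\nabla_y f_{\mu_2}$ into $\nabla_y f$ in both the cross term and the squared-increment term; this costs several Young's-inequality corrections of order $\mu_2^2\ell^2(d_2+3)^3$ and $\frac{\mu_2^2\ell(\mu+\ell)}{\beta\mu}(d_2+3)^3$, and requires the additional numerical condition $\frac{2}{\beta(\mu+\ell)}-6>0$ in step $(iv)$ to discard a residual $\ltwo{\nabla_y f(\tx_{t,k},\ty_{t,k})-\nabla_y f(\tx_{t,k-1},\ty_{t,k-1})}^2$ term. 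You instead observe that Gaussian smoothing preserves $\mu$-strong convexity --- the paper's Section C.2 records only convexity of $g_{t,\mu_2}$, but your justification (a translate of a $\mu$-strongly convex function is $\mu$-strongly convex, and expectations preserve strong convexity) is valid under \Cref{ass3} --- apply \cref{eq: 7} directly to $g_{t,\mu_2}$, and use the exact conditional bias--variance identity in place of the paper's cross-term expansion. This eliminates all smoothing-error corrections, needs only $\beta\le 2/(\mu+\ell)$ (implied by $\beta\le\frac{1}{6\ell}$ and $\mu\le\ell$), and yields an additive constant at most $\frac{2\mu_2^2(d_2+6)^3\ell^2}{S_{2,y}}$, which is dominated by the paper's $\frac{2\mu_2^2\ell\kappa}{\beta}(d_2+3)^3+7\mu_2^2(d_2+6)^3\ell^2$, so the stated inequality follows a fortiori. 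Your budgeting arithmetic, $\frac{2(d_2+4)\ell^2\beta^2}{S_{2,y}}\le\frac{\mu\beta}{8}\le\frac{3\mu\ell\beta}{2(\mu+\ell)}$ under $S_{2,y}\ge 16\kappa(d_2+4)\ell\beta$, checks out and plays exactly the role of the paper's step $(vi)$.
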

\begin{proof}
	The update of \Cref{al:zoconcavemaximizer} implies that
		\begin{flalign}
	&\mE[\ltwo{\tu_{t,k}}^2|\mf_{t,k}] \nonumber\\
	&= \ltwo{\tu_{t,k-1}}^2 + 2\mE[\langle \tu_{t,k-1}, H_{\mu_2}(\tx_{t,k},\ty_{t,k},\omega_{\mcm_2},\xi_{\mcm}) - H_{\mu_2}(\tx_{t,k-1},\ty_{t,k-1},\omega_{\mcm_2},\xi_{\mcm})  \rangle|\mf_{t,k}] \nonumber\\
	&\quad + \mE[\ltwo{H_{\mu_2}(\tx_{t,k},\ty_{t,k},\omega_{\mcm_2},\xi_{\mcm}) - H_{\mu_2}(\tx_{t,k-1},\ty_{t,k-1},\omega_{\mcm_2},\xi_{\mcm})}^2|\mf_{t,k}]\nonumber\\
	&= \ltwo{\tu_{t,k-1}}^2 + \frac{2}{\beta}\langle \ty_{t,k}-\ty_{t,k-1}, \nabla_y f_{\mu_2}(\tx_{t,k},\ty_{t,k}) - \nabla_y f_{\mu_2}(\tx_{t,k-1},\ty_{t,k-1})  \rangle \nonumber\\
	&\quad + \mE[\ltwo{H_{\mu_2}(\tx_{t,k},\ty_{t,k},\omega_{\mcm_2},\xi_{\mcm}) - H_{\mu_2}(\tx_{t,k-1},\ty_{t,k-1},\omega_{\mcm_2},\xi_{\mcm})}^2|\mf_{t,k}]\nonumber\\
	&= \ltwo{\tu_{t,k-1}}^2 + \frac{2}{\beta}\langle \ty_{t,k}-\ty_{t,k-1}, \nabla_y f(\tx_{t,k},\ty_{t,k}) - \nabla_y f(\tx_{t,k-1},\ty_{t,k-1})  \rangle \nonumber\\
	&\quad + \frac{2}{\beta}\langle \ty_{t,k}-\ty_{t,k-1}, \nabla_y f_{\mu_2}(\tx_{t,k},\ty_{t,k}) - \nabla_y f(\tx_{t,k},\ty_{t,k})  \rangle \nonumber\\
	&\quad + \frac{2}{\beta}\langle \ty_{t,k}-\ty_{t,k-1}, \nabla_y f(\tx_{t,k-1},\ty_{t,k-1}) - \nabla_y f_{\mu_2}(\tx_{t,k-1},\ty_{t,k-1})  \rangle \nonumber\\
	&\quad + \mE[\ltwo{H_{\mu_2}(\tx_{t,k},\ty_{t,k},\omega_{\mcm_2},\xi_{\mcm}) - H_{\mu_2}(\tx_{t,k-1},\ty_{t,k-1},\omega_{\mcm_2},\xi_{\mcm})}^2|\mf_{t,k}]\nonumber\\
	&\overset{(i)}{\leq} \ltwo{\tu_{t,k-1}}^2 - \frac{2}{\beta}\left( \frac{\mu\ell}{\mu+\ell} \ltwo{\ty_{t,k}-\ty_{t,k-1}}^2 + \frac{1}{\mu+\ell} \ltwo{\nabla_y f(\tx_{t,k},\ty_{t,k}) - \nabla_y f(\tx_{t,k-1},\ty_{t,k-1})}^2\right)\nonumber\\
	&\quad + \frac{2}{\beta}\left( \frac{\mu\ell}{4(\mu+\ell)} \ltwo{\ty_{t,k}-\ty_{t,k-1}}^2 + \frac{\mu+\ell}{\mu\ell} \ltwo{\nabla_y f_{\mu_2}(\tx_{t,k},\ty_{t,k}) - \nabla_y f(\tx_{t,k},\ty_{t,k}) }^2  \right) \nonumber\\
	&\quad + \frac{2}{\beta}\left( \frac{\mu\ell}{4(\mu+\ell)} \ltwo{\ty_{t,k}-\ty_{t,k-1}}^2 + \frac{\mu+\ell}{\mu\ell} \ltwo{\nabla_y f_{\mu_2}(\tx_{t,k-1},\ty_{t,k-1}) - \nabla_y f(\tx_{t,k-1},\ty_{t,k-1}) }^2  \right) \nonumber\\
	&\quad + \mE[\ltwo{H_{\mu_2}(\tx_{t,k},\ty_{t,k},\omega_{\mcm_2},\xi_{\mcm}) - H_{\mu_2}(\tx_{t,k-1},\ty_{t,k-1},\omega_{\mcm_2},\xi_{\mcm})}^2|\mf_{t,k}]\nonumber\\
	&\overset{(ii)}{\leq} \ltwo{\tu_{t,k-1}}^2 - \frac{\mu\ell}{\beta(\mu+\ell)} \ltwo{\ty_{t,k}-\ty_{t,k-1}}^2 - \frac{2}{\beta(\mu+\ell)} \ltwo{\nabla_y f(\tx_{t,k},\ty_{t,k}) - \nabla_y f(\tx_{t,k-1},\ty_{t,k-1})}^2  \nonumber\\
	&\quad + \frac{\mu^2_2\ell(\mu+\ell)}{\beta\mu} (d_2+3)^3 + \mE[\ltwo{H_{\mu_2}(\tx_{t,k},\ty_{t,k},\omega_{\mcm_2},\xi_{\mcm}) - H_{\mu_2}(\tx_{t,k-1},\ty_{t,k-1},\omega_{\mcm_2},\xi_{\mcm})}^2|\mf_{t,k}]\nonumber\\
	&\leq \left(1 - \frac{\beta\mu\ell}{\mu+\ell}\right) \ltwo{\tu_{t,k-1}}^2 - \frac{2}{\beta(\mu+\ell)} \ltwo{\nabla_y f(\tx_{t,k},\ty_{t,k}) - \nabla_y f(\tx_{t,k-1},\ty_{t,k-1})}^2  \nonumber\\
	&\quad + 2\mE[\nltwo{H_{\mu_2}(\tx_{t,k},\ty_{t,k},\omega_{\mcm_2},\xi_{\mcm}) - H_{\mu_2}(\tx_{t,k-1},\ty_{t,k-1},\omega_{\mcm_2},\xi_{\mcm}) \nonumber\\
		&\qquad\qquad - (\nabla_y f_{\mu_2}(\tx_{t,k},\ty_{t,k}) - \nabla_y f_{\mu_2}(\tx_{t,k-1},\ty_{t,k-1})) }^2|\mf_{t,k}]\nonumber\\
	&\quad + 2\mE[\ltwo{\nabla_y f_{\mu_2}(\tx_{t,k},\ty_{t,k}) - \nabla_y f_{\mu_2}(\tx_{t,k-1},\ty_{t,k-1})}^2|\mf_{t,k}] + \frac{\mu^2_2\ell(\mu+\ell)}{\beta\mu} (d_2+3)^3 \nonumber\\
	&\leq \left(1 - \frac{\beta\mu\ell}{\mu+\ell}\right) \ltwo{\tu_{t,k-1}}^2 - \frac{2}{\beta(\mu+\ell)} \ltwo{\nabla_y f(\tx_{t,k},\ty_{t,k}) - \nabla_y f(\tx_{t,k-1},\ty_{t,k-1})}^2  \nonumber\\
	&\quad + 2\mE[\nltwo{H_{\mu_2}(\tx_{t,k},\ty_{t,k},\omega_{\mcm_2},\xi_{\mcm}) - H_{\mu_2}(\tx_{t,k-1},\ty_{t,k-1},\omega_{\mcm_2},\xi_{\mcm}) \nonumber\\
		&\qquad\qquad - (\nabla_y f_{\mu_2}(\tx_{t,k},\ty_{t,k}) - \nabla_y f_{\mu_2}(\tx_{t,k-1},\ty_{t,k-1})) }^2|\mf_{t,k}]\nonumber\\
	&\quad + 6\mE[\ltwo{\nabla_y f(\tx_{t,k},\ty_{t,k}) - \nabla_y f(\tx_{t,k-1},\ty_{t,k-1})}^2|\mf_{t,k}]\nonumber\\
	&\quad + 6\mE[\ltwo{\nabla_y f(\tx_{t,k-1},\ty_{t,k-1}) - \nabla_y f_{\mu_2}(\tx_{t,k-1},\ty_{t,k-1})}^2|\mf_{t,k}] \nonumber\\ 
	&\quad + 6\mE[\ltwo{\nabla_y f_{\mu_2}(\tx_{t,k},\ty_{t,k}) - \nabla_y f(\tx_{t,k},\ty_{t,k})}^2|\mf_{t,k}] + \frac{\mu^2_2\ell(\mu+\ell)}{\beta\mu} (d_2+3)^3 \nonumber\\
	&\leq \left(1 - \frac{\beta\mu\ell}{\mu+\ell}\right) \ltwo{\tu_{t,k-1}}^2 - \left(\frac{2}{\beta(\mu+\ell)} - 6 \right)\ltwo{\nabla_y f(\tx_{t,k},\ty_{t,k}) - \nabla_y f(\tx_{t,k-1},\ty_{t,k-1})}^2  \nonumber\\
	&\quad + \frac{2}{S_{2,y}}\mE[\ltwo{H_{\mu_2}(\tx_{t,k},\ty_{t,k},\omega_{i},\xi_{i}) - H_{\mu_2}(\tx_{t,k-1},\ty_{t,k-1},\omega_{i},\xi_{i})}^2|\mf_{t,k}] \nonumber\\
	&\quad + 3\mu^2_2\ell^2(d_2+3)^3 + \frac{\mu^2_2\ell(\mu+\ell)}{\beta\mu} (d_2+3)^3 \nonumber\\
	&\overset{(iv)}{\leq} \left(1 - \frac{\beta\mu\ell}{\mu+\ell}\right) \ltwo{\tu_{t,k-1}}^2 + \frac{2}{S_{2,y}}\mE[\ltwo{H_{\mu_2}(\tx_{t,k},\ty_{t,k},\omega_{i},\xi_{i}) - H_{\mu_2}(\tx_{t,k-1},\ty_{t,k-1},\omega_{i},\xi_{i})}^2|\mf_{t,k}] \nonumber\\
	&\quad + 3\mu^2_2\ell^2(d_2+3)^3 + \frac{\mu^2_2\ell(\mu+\ell)}{\beta\mu} (d_2+3)^3 \nonumber\\
	&\overset{(v)}{\leq} \left(1 - \frac{\beta\mu\ell}{\mu+\ell}\right) \ltwo{\tu_{t,k-1}}^2 + \frac{2}{S_{2,y}} \left[ 2(d_2+4)\ell^2\beta^2\ltwo{\tu_{t,k-1}}^2 + 2\mu^2_2(d_2+6)^3\ell^2 \right] \nonumber\\
	&\quad + 3\mu^2_2\ell^2(d_2+3)^3 + \frac{\mu^2_2\ell(\mu+\ell)}{\beta\mu} (d_2+3)^3 \nonumber\\
	&= \left(1 - \frac{\beta\mu\ell}{\mu+\ell} + \frac{4}{S_{2,y}}(d_2+4)\ell^2\beta^2 \right) \ltwo{\tu_{t,k-1}}^2 \nonumber\\
	&\quad + \frac{4}{S_{2,y}} \mu^2_2(d_2+6)^3\ell^2 + 3\mu^2_2\ell^2(d_2+3)^3 + \frac{\mu^2_2\ell(\mu+\ell)}{\beta\mu} (d_2+3)^3 \nonumber\\
	&\overset{(vi)}{\leq} \left(1 - \frac{\beta\mu\ell}{2(\mu+\ell)} \right) \ltwo{\tu_{t,k-1}}^2 + \frac{\mu^2_2\ell(1+\kappa)}{\beta} (d_2+3)^3 + 7 \mu^2_2(d_2+6)^3\ell^2.\label{eq: 41}
	\end{flalign}
	where $(i)$ follows from \cref{eq: 7} in \Cref{lemma2} and Young's inequality, $(ii)$ follows from \Cref{lemma11}, $(iii)$ follows from Lemma 1 in \cite{fang2018spider}, $(iv)$ follows from the fact that $\frac{2}{\beta(\mu+\ell)}-6>0$, 
	$(v)$ follows from \Cref{lemma6}, and $(vi)$ follows from the fact that $\frac{4}{S_{2,y}}(d_2+4)\ell^2\beta^2\leq \frac{\beta\mu\ell}{2(\mu+\ell)}$. Taking expectation on both sides of \cref{eq: 41} and applying \cref{eq: 41} iteratively yield
	\begin{flalign}
		\mE[\ltwo{\tu_{t,k}}^2] &\leq b^k\mE[\ltwo{\tu_{t,0}}^2] + \left[ \frac{2\mu^2_2\ell\kappa}{\beta}(d_2+3)^3 + 7 \mu^2_2(d_2+6)^3\ell^2 \right]\sum_{j=0}^{k-1}b^j. \label{eq: 42}
	\end{flalign}
	Taking summation of \cref{eq: 42} over $k=\{0,\cdots m\}$ yields
	\begin{flalign*}
		\sum_{k=0}^{m} 	\mE[\ltwo{\tu_{t,k}}^2] &\leq \mE[\ltwo{\tu_{t,0}}^2] \sum_{k=0}^{m}b^k +  \left[ \frac{2\mu^2_2\ell\kappa}{\beta}(d_2+3)^3 + 7 \mu^2_2(d_2+6)^3\ell^2 \right]\sum_{k=0}^{m}\sum_{j=0}^{k-1}b^j\\
		&\leq \frac{1}{1-b} \mE[\ltwo{\tu_{t,0}}^2] + \frac{m+1}{1-b}\left[ \frac{2\mu^2_2\ell\kappa}{\beta}(d_2+3)^3 + 7 \mu^2_2(d_2+6)^3\ell^2 \right],
	\end{flalign*}
	which completes the proof.
\end{proof}

\section{Proof of \Cref{thm: zo-sarah}}\label{sc: zoisarah}

	Following steps similar to those in Lemmas \ref{lemma15}-\ref{lemma17}, at the $t$-th outer-loop iteration, we obtain the following convergence result for the inner loop:
	\begin{flalign}
	&\mE[\ltwo{\nabla p_{\tau}(\tilde{w}_t)}^2] \nonumber\\
	&\leq \frac{2}{\gamma\tau(I+1)}\mE[\ltwo{\nabla p_{\tau}(w_0)}^2] + \mE[\ltwo{\nabla p_{\tau}(w_0) - v_0 }^2] + \frac{\ell\gamma}{2-\ell\gamma}\mE[\ltwo{v_0}^2] \nonumber\\
	&\quad  + \frac{2}{\gamma(I+1)}\left(\frac{\tau^2}{4\mu}\ell^2(d+3)^3  + \tau^2\ell d \right)\nonumber\\
	&\leq \left( \frac{2}{\gamma\mu(I+1)} + \frac{2\ell\gamma}{2-\ell\gamma} \right)\mE[\ltwo{\nabla p_{\tau}(w_0)}^2] + \left( 1 + \frac{2\ell\gamma}{2-\ell\gamma} \right)\mE[\ltwo{\nabla p_{\tau}(w_0) - v_0 }^2]\nonumber\\
	&\quad + \frac{2}{\gamma(I+1)}\left(\frac{\tau^2}{4\mu}\ell^2(d+3)^3  + \tau^2\ell d \right).\label{eq: 74}
	\end{flalign}
	Then, following steps similar to those in \Cref{lemma19}, we can obtain
	\begin{flalign}
	\mE[\ltwo{\nabla p_{\tau}(w_0) - v_0 }^2] \leq \frac{2\sigma^2}{B_1} +  \frac{d\ell^2\delta^2}{2} + \frac{\tau^2}{2}\ell^2(d+3)^3.\label{eq: 75}
	\end{flalign}
	Letting $\gamma=\frac{2}{9\ell}$, $I=36\kappa-1$, substituting \cref{eq: 75} into \cref{eq: 74}, and recalling the fact that $w_I=\tilde{w}_t$ and $w_0=\tilde{w}_{t-1}$ yield
	\begin{flalign}
		\mE[\ltwo{\nabla p_{\tau}(\tilde{w}_t)}^2]\leq \frac{1}{2}\mE[\ltwo{\nabla p_{\tau}(\tilde{w}_{t-1})}^2] + \frac{5\sigma^2}{2B_1} +  \frac{5d\ell^2\delta^2}{8} + \frac{11\tau^2}{16}\ell^2(d+3)^3 + \frac{\tau^2}{4}\ell\mu d.\label{eq: 76}
	\end{flalign}
	Applying \cref{eq: 76} iteratively from $t=T$ to $0$ yields
	\begin{flalign}
			\mE[\ltwo{\nabla p_{\tau}(\tilde{w}_T)}^2] &\leq \frac{1}{2^T}\ltwo{\nabla p_{\tau}(\tilde{w}_0)}^2 +  \frac{5\sigma^2}{2B_1} \sum_{t=0}^{T-1}\frac{1}{2^t} \nonumber\\
			&\quad + \left(\frac{5d\ell^2\delta^2}{8} + \frac{11\tau^2}{16}\ell^2(d+3)^3 + \frac{\tau^2}{4}\ell\mu d\right)\sum_{t=0}^{T-1}\frac{1}{2^t}\nonumber\\
			&\leq \frac{1}{2^T}\ltwo{\nabla p_{\tau}(\tilde{w}_0)}^2 +  \frac{5\sigma^2}{B_1} + \frac{5d\ell^2\delta^2}{4} + \frac{11\tau^2}{8}\ell^2(d+3)^3 + \frac{\tau^2}{2}\ell\mu d.\label{eq: 85}
	\end{flalign}
	Letting $T=\log_2\frac{5\ltwo{\nabla p_{\tau}(\tilde{w}_0)}^2}{\epsilon}$, $B_1=\frac{25\sigma^2}{\epsilon}$, $\delta=\frac{2\epsilon^{0.5}}{5\ell d^{0.5}}$, and $\tau=\min\{ \frac{\epsilon^{0.5}}{3\ell(d+3)^{1.5}}, \sqrt{\frac{2\epsilon}{5\ell\mu d}} \}$, we have
	\begin{flalign*}
		\mE[\ltwo{\nabla p_{\tau}(\tilde{w}_T)}^2]\leq \epsilon.
	\end{flalign*}
	The total sample complexity is given by
	\begin{flalign*}
		T\cdot (I \cdot B_2 + d\cdot B_1) = \mathcal{O}\left( d  \left(\kappa + \frac{1}{\epsilon}\right) \log\left( \frac{1}{\epsilon} \right) \right).
	\end{flalign*}
	
	\textbf{Extension to the finite-sum case: }ZO-iSARAH in \Cref{al:zoisarah} is also applicable to strongly-convex optimization in the finite-sum case, where the objective function takes the form given by
	\begin{flalign}
	\min_{w\in\mR^d}p(w)\triangleq \frac{1}{n}\sum_{i=1}^{n}P(w;\xi_i). \label{eq: 84}
	\end{flalign}
	To solve the problem in \cref{eq: 84}, we slightly modify \Cref{al:zoisarah} by replacing line 5 with the full gradient. Following steps similar to those from \cref{eq: 74} to \cref{eq: 85}, we have
	\begin{flalign*}
		\mE[\ltwo{\nabla p_{\tau}(\tilde{w}_T)}^2] \leq \frac{1}{2^T}\ltwo{\nabla p_{\tau}(\tilde{w}_0)}^2 + \frac{5d\ell^2\delta^2}{4} + \frac{11\tau^2}{8}\ell^2(d+3)^3 + \frac{\tau^2}{2}\ell\mu d.
	\end{flalign*}
	Letting $T=\log_2\frac{4\ltwo{\nabla p_{\tau}(\tilde{w}_0)}^2}{\epsilon}$, $\delta=\frac{\epsilon^{0.5}}{3\ell d^{0.5}}$, and $\tau=\min\{ \frac{\epsilon^{0.5}}{3\ell(d+3)^{1.5}}, \sqrt{\frac{\epsilon}{2\ell\mu d}} \}$, we have
	\begin{flalign*}
	\mE[\ltwo{\nabla p_{\tau}(\tilde{w}_T)}^2]\leq \epsilon.
	\end{flalign*}
	The total sample complexity is given by
	\begin{flalign}
	T\cdot (I \cdot B_2 + d\cdot n) = \mathcal{O}\left( d \left(\kappa + n\right) \log\left( \frac{1}{\epsilon} \right) \right).\label{eq: 86}
	\end{flalign}

Let $P(\cdot;\xi)=-F(x_0,\cdot;\xi)$. Then we can conclude that the sample complexity for the initialization of \Cref{al:zosredaboost} is given by $\mathcal{O}\left( d_2\kappa\log\left( \kappa \right) \right)$ in the online case, and is given by $\mathcal{O}\left(d_2 (\kappa+n)\log\left( \kappa \right) \right)$ in the finite-sum case.

\section{Proof of \Cref{thm2}}\label{sc: proofofthm2}
\subsection{Proof of Supporting Lemmas}
We define $\Delta^\prime_t=\mE[\ltwo{\nabla_x f_{\mu_1}(x_t,y_t)-v_t}^2] + \mE[\ltwo{\nabla_y f_{\mu_2}(x_t,y_t)-u_t}^2]$, $\wtDelta^\prime_{t,k}=\mE[\ltwo{\nabla_x f_{\mu_1}(\tx_{t,k},\ty_{t,k})-\tv_{t,k}}^2] + \mE[\ltwo{\nabla_y f_{\mu_2}(\tx_{t,k},\ty_{t,k})-\tu_{t,k}}^2]$, and $\delta^\prime_t=\mE[\ltwo{\nabla_y f_{\mu_2}(x_t,y_t)}^2]$. In this subsection, we establish the following lemmas to characterize the relationship between $\Delta_t$ and $\Delta_t^\prime$, and $\delta_t$ and $\delta^\prime_t$, and the recursive relationship of $\Delta^\prime_t$ and $\delta^\prime_t$, which are crucial for the analysis of \Cref{thm2}.
\begin{lemma}\label{lemma18}
	Suppose Assumption \ref{ass2} holds. Then, for any $0\leq t\leq T-1$, we have
	\begin{flalign*}
	\Delta_t \leq 2\Delta^\prime_t + \frac{\mu^2_1}{2}\ell^2(d_1+3)^3 + \frac{\mu^2_2}{2}\ell^2(d_2+3)^3,
	\end{flalign*}
	and
	\begin{flalign*}
	\delta_t \leq 2\delta^\prime_t + \frac{\mu^2_2}{2}\ell^2(d_2+3)^3.
	\end{flalign*}
\end{lemma}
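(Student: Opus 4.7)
The proof is a direct consequence of the fact that the Gaussian-smoothed gradients $\nabla f_{\mu_1}$ and $\nabla f_{\mu_2}$ approximate the true gradients $\nabla f$ up to a quantifiable bias controlled by the smoothing parameters. The plan is to insert $\nabla_x f_{\mu_1}(x_t, y_t)$ (resp.\ $\nabla_y f_{\mu_2}(x_t, y_t)$) as an intermediate term, apply the elementary inequality $\|a+b\|^2 \le 2\|a\|^2 + 2\|b\|^2$, and then invoke \Cref{lemma11} to control the smoothing bias.

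Concretely, for the first bound I would begin from the identity
\[
v_t - \nabla_x f(x_t, y_t) = \bigl(v_t - \nabla_x f_{\mu_1}(x_t, y_t)\bigr) + \bigl(\nabla_x f_{\mu_1}(x_t, y_t) - \nabla_x f(x_t, y_t)\bigr),
\]
square-and-expand with $\|a+b\|^2 \le 2\|a\|^2 + 2\|b\|^2$, and take expectations to obtain
\[
\mE\bigl[\ltwo{v_t - \nabla_x f(x_t,y_t)}^2\bigr] \le 2\,\mE\bigl[\ltwo{v_t - \nabla_x f_{\mu_1}(x_t,y_t)}^2\bigr] + 2\,\mE\bigl[\ltwo{\nabla_x f_{\mu_1}(x_t,y_t) - \nabla_x f(x_t,y_t)}^2\bigr].
\]
By \Cref{lemma11} applied to $f(\,\cdot\,, y_t)$, which is $\ell$-gradient Lipschitz by Assumption~\ref{ass2}, the second term is deterministically bounded by $\frac{\mu_1^2}{2}\ell^2(d_1+3)^3$. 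The symmetric argument in the $y$ variable produces the analogous bound with $\mu_2, d_2$. Summing the two and recognizing the definitions of $\Delta_t$ and $\Delta'_t$ yields the claimed inequality.

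For the second bound I would apply exactly the same decomposition to $\nabla_y f(x_t, y_t) = \nabla_y f_{\mu_2}(x_t,y_t) + (\nabla_y f(x_t,y_t) - \nabla_y f_{\mu_2}(x_t,y_t))$, square, take expectations, and again invoke \Cref{lemma11} on the bias term to conclude $\delta_t \le 2\delta'_t + \frac{\mu_2^2}{2}\ell^2(d_2+3)^3$.

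There is no real obstacle here: the argument is entirely mechanical once one observes that $\Delta_t$ and $\Delta'_t$ differ only in whether the target gradient is the true one or the Gaussian-smoothed one, and that \Cref{lemma11} precisely quantifies that discrepancy. The only point requiring (mild) attention is applying \Cref{lemma11} with the correct Lipschitz constant for each partial gradient, which is supplied by Assumption~\ref{ass2}.
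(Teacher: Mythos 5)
Your proposal is correct and matches the paper's own proof essentially verbatim: the same insertion of the smoothed gradient as an intermediate term, the same $\|a+b\|^2 \le 2\|a\|^2 + 2\|b\|^2$ expansion, and the same invocation of \Cref{lemma11} (with its $\frac{\tau^2}{4}\ell^2(d+3)^3$ bound doubled to give the stated constants). No gaps.
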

\begin{proof}
	For the first inequality, we have
	\begin{flalign}
	\Delta_t&=\mE[\ltwo{\nabla_x f(x_t,y_t)-v_t}^2] + \mE[\ltwo{\nabla_y f(x_t,y_t)-u_t}^2]\nonumber\\
	&=\mE[\ltwo{\nabla_x f_{\mu_1}(x_t,y_t)-v_t + \nabla_x f(x_t,y_t) - \nabla_x f_{\mu_1}(x_t,y_t) }^2] \nonumber\\
	&\quad + \mE[\ltwo{\nabla_y f_{\mu_2}(x_t,y_t)-u_t + \nabla_y f(x_t,y_t) - \nabla_y f_{\mu_2}(x_t,y_t) }^2]\nonumber\\
	&\leq 2\mE[\ltwo{\nabla_x f_{\mu_1}(x_t,y_t)-v_t }^2] + 2\mE[\ltwo{\nabla_y f_{\mu_2}(x_t,y_t)-u_t  }^2] \nonumber\\
	&\quad + 2\mE[\ltwo{\nabla_x f(x_t,y_t) - \nabla_x f_{\mu_1}(x_t,y_t) }^2] + 2\mE[\ltwo{\nabla_y f(x_t,y_t) - \nabla_y f_{\mu_2}(x_t,y_t) }^2]\nonumber\\
	&\overset{(i)}{\leq} 2\Delta^\prime_t + \frac{\mu^2_1}{2}\ell^2(d_1+3)^3 + \frac{\mu^2_2}{2}\ell^2(d_2+3)^3,\nonumber
	\end{flalign}
	where $(i)$ follows from \Cref{lemma11}. For the second inequality, we have
	\begin{flalign}
	\delta_t&=\mE[\ltwo{\nabla_y f(x_t,y_t)}^2] = \mE[\ltwo{\nabla_y f_{\mu_2}(x_t,y_t) + \nabla_y f(x_t,y_t) - \nabla_y f_{\mu_2}(x_t,y_t)}^2] \nonumber\\
	&\leq 2\mE[\ltwo{\nabla_y f_{\mu_2}(x_t,y_t)}^2] + 2\mE[\ltwo{\nabla_y f(x_t,y_t) - \nabla_y f_{\mu_2}(x_t,y_t)}^2] \nonumber\\
	&\overset{(i)}{\leq} 2\delta^\prime_t + \frac{\mu^2 _2}{2}\ell^2(d_2+3)^3,\nonumber
	\end{flalign}
	where $(i)$ follows from \Cref{lemma11}.
\end{proof}
We provide the following two lemmas to characterize the relationship between $\delta^\prime_{t}$ and $\delta^\prime_{t-1}$ as well as that between $\Delta^\prime_{t}$ and $\Delta^\prime_{t-1}$.
\begin{lemma}\label{lemma7}
	Suppose Assumption \ref{ass2} holds. Then, we have
	\begin{flalign*}
	\Delta^\prime_t &\leq \left[ 1 + \frac{6\ell^2\beta^2}{1-b}\left(\frac{d_1+4}{S_{2,x}} + \frac{d_2+4}{S_{2,y}} \right) \right]\Delta^\prime_{t-1} + \frac{6\ell^2\beta^2}{1-b}\left(\frac{d_1+4}{S_{2,x}} + \frac{d_2+4}{S_{2,y}} \right)\delta^\prime_{t-1} \nonumber\\
	&\quad + 2\ell^2\alpha^2\left(\frac{d_1+4}{S_{2,x}} + \frac{d_2+4}{S_{2,y}} \right) \left( 1 + \frac{9\ell^2\beta^2}{1-b}\right)\mE[\ltwo{v_{t-1}}^2] + \pi_\Delta(d_1,d_2,\mu_1,\mu_2),
	\end{flalign*}
	where $b=1 - \frac{\beta\mu\ell}{2(\mu+\ell)}$ and
	\begin{flalign*}
	&\pi_\Delta(d_1,d_2,\mu_1,\mu_2)\nonumber\\
	&=\frac{2\ell^2\beta^2}{1-b} \left(\frac{d_1+4}{S_{2,x}} + \frac{d_2+4}{S_{2,y}} \right)\Bigg\{ 6\ell^2\left[ \frac{\mu^2_1(d_1+6)^3}{S_{2,x}} + \frac{\mu^2_2(d_2+6)^3}{S_{2,y}} \right] +  (m+1)\Big( \frac{2\mu^2_2\ell\kappa}{\beta}(d_2+3)^3  \nonumber\\
	&\quad + 7 \mu^2_2(d_2+6)^3\ell^2 \Big)  \Bigg\} + \frac{2(m+2) \mu^2_1(d_1+6)^3\ell^2}{S_{2,x}} +  \frac{2(m+2) \mu^2_2(d_2+6)^3\ell^2}{S_{2,y}}.
	\end{flalign*}
	Moreover, if we let $\beta=\frac{2}{13\ell}$, $m=104\kappa-1$, $S_{2,x}\geq 5600(d_1+4)$ and $S_{2,y}\geq 5600(d_2+4)$, then we have
	\begin{flalign*}
	\pi_\Delta(d_1,d_2,\mu_1,\mu_2) \leq \kappa^3\ell^2[\mu_1^2(d_1+6)^3 + \mu_2^2(d_2+6)^3].
	\end{flalign*}
\end{lemma}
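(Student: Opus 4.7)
The plan is to reduce the bound on $\Delta_t'$ to a controlled sum over inner-loop estimation errors from the previous outer iteration. Because $v_t=\tv_{t-1,\bar m_{t-1}}$ and $u_t=\tu_{t-1,\bar m_{t-1}}$ with $\bar m_{t-1}$ uniform on $\{0,\ldots,m\}$, taking expectation over $\bar m_{t-1}$ gives $\Delta_t'=\tfrac{1}{m+1}\sum_{k=0}^{m}\wtDelta_{t-1,k}'$, so the lemma reduces to controlling this average. For each inner step $k\geq 1$, \Cref{lemma3} (applied to the SPIDER-style update in \Cref{al:zoconcavemaximizer}) together with \Cref{lemma6} (Lipschitz-in-expectation of the Gaussian zeroth-order estimators) yields a per-step recursion
\[
\wtDelta_{t-1,k}'\leq \wtDelta_{t-1,k-1}'+2\ell^2\Bigl(\tfrac{d_1+4}{S_{2,x}}+\tfrac{d_2+4}{S_{2,y}}\Bigr)\beta^2\,\mE\ltwo{\tu_{t-1,k-1}}^2+\rho,
\]
where $\tx_{t-1,k}=\tx_{t-1,k-1}$ kills the $x$-displacement, $\ty_{t-1,k}-\ty_{t-1,k-1}=\beta\tu_{t-1,k-1}$ supplies the $y$-displacement, and $\rho=2\ell^2(\mu_1^2(d_1+6)^3/S_{2,x}+\mu_2^2(d_2+6)^3/S_{2,y})$ collects the smoothing residuals.

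The initialization step $k=0$ is handled separately: here $\tx_{t-1,0}-\tx_{t-1,-1}=-\alpha v_{t-1}$ while $\ty_{t-1,0}=\ty_{t-1,-1}$, so the same two lemmas give
\[
\wtDelta_{t-1,0}'\leq \Delta_{t-1}'+2\ell^2\alpha^2\Bigl(\tfrac{d_1+4}{S_{2,x}}+\tfrac{d_2+4}{S_{2,y}}\Bigr)\mE\ltwo{v_{t-1}}^2+\rho.
\]
Telescoping the $k\geq 1$ bound and averaging, $\Delta_t'$ is dominated by $\wtDelta_{t-1,0}'$ plus a term proportional to $\sum_{k=0}^{m}\mE\ltwo{\tu_{t-1,k}}^2$, which \Cref{lemma14} controls by $\tfrac{1}{1-b}\mE\ltwo{\tu_{t-1,0}}^2+\tfrac{m+1}{1-b}(\text{smoothing residual})$.

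Next, I route $\mE\ltwo{\tu_{t-1,0}}^2$ back to the recursion variables. The same SPIDER step from $k=-1$ to $k=0$ (using \Cref{lemma3} and \Cref{lemma6} once more) yields $\mE\ltwo{\tu_{t-1,0}}^2\leq 3\,\mE\ltwo{u_{t-1}}^2+O(\alpha^2\ell^2(d_2+4)/S_{2,y})\,\mE\ltwo{v_{t-1}}^2+\text{residual}$, and the triangle inequality gives $\mE\ltwo{u_{t-1}}^2\leq 2\delta_{t-1}'+2\Delta_{t-1}'$. Substituting back produces the claimed recursion: the coefficient $1$ on $\Delta_{t-1}'$ comes from $\wtDelta_{t-1,0}'\leq\Delta_{t-1}'+\cdots$, the extra $\Delta_{t-1}'$ inside the bracket and the $\delta_{t-1}'$ coefficient both arise from $\mE\ltwo{u_{t-1}}^2\leq 2(\delta_{t-1}'+\Delta_{t-1}')$, and $\mE\ltwo{v_{t-1}}^2$ picks up the factor $1+9\ell^2\beta^2/(1-b)$ because it enters both directly (from the $k=0$ step's $\alpha^2\ltwo{v_{t-1}}^2$) and indirectly (propagated via $\ltwo{\tu_{t-1,0}}^2$ through \Cref{lemma14}).

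The residual $\pi_\Delta$ collects all smoothing contributions: the $(m+1)\rho$ from the per-step bound, together with the additive term from \Cref{lemma14} scaled by $\ell^2\beta^2(d_i+4)/(S_{2,i}(1-b))$. For the second claim, $\beta=2/(13\ell)$ and the assumed bounds on $S_{2,i}$ give $1-b=\beta\mu\ell/(2(\mu+\ell))=\Theta(1/\kappa)$, so $\ell^2\beta^2(d_i+4)/(S_{2,i}(1-b))=O(\kappa)$ and $(m+1)/(1-b)=O(\kappa^2)$; each summand of $\pi_\Delta$ is then at most a constant multiple of $\mu_i^2(d_i+6)^3\ell^2\cdot\kappa^j$ with $j\leq 3$, and direct arithmetic gives the stated $\kappa^3$ bound. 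The main technical delicacy is separating the $k=0$ initializer from the $k\geq 1$ inner steps and correctly routing $\ltwo{\tu_{t-1,0}}^2$ through \Cref{lemma14} back to $\Delta_{t-1}'$, $\delta_{t-1}'$, and $\mE\ltwo{v_{t-1}}^2$ with the exact coefficients required; the rest is careful bookkeeping of the dimension and batch-size factors.
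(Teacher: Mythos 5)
Your skeleton is essentially the paper's: telescope the inner-loop recursion via \Cref{lemma3} and \Cref{lemma6}, handle the $k=0$ initializer separately to produce the $\Delta^\prime_{t-1}$ and $\alpha^2\mE[\ltwo{v_{t-1}}^2]$ terms, invoke \Cref{lemma14} to control $\sum_{k=0}^m \mE[\ltwo{\tu_{t-1,k}}^2]$, and close the loop by expressing $\mE[\ltwo{\tu_{t-1,0}}^2]$ through $\Delta^\prime_{t-1}$, $\delta^\prime_{t-1}$ and $\mE[\ltwo{v_{t-1}}^2]$. The one substantive deviation is in that last routing step, and as literally written it has a gap. You bound $\mE[\ltwo{\tu_{t-1,0}}^2]$ by $3\mE[\ltwo{u_{t-1}}^2]$ plus the second moment of the raw increment $H_{\mu_2}(\tx_{t-1,0},\ty_{t-1,0},\cdot)-H_{\mu_2}(\tx_{t-1,-1},\ty_{t-1,-1},\cdot)$ and assign that increment a coefficient $O(\alpha^2\ell^2(d_2+4)/S_{2,y})$. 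But the $1/S_{2,y}$ gain in \Cref{lemma3} applies only to the centered (martingale) part of the increment, i.e., to the growth of the estimation error $\wtDelta^\prime_{t-1,0}$; the full second moment of the batch-averaged increment is only bounded by $2(d_2+4)\ell^2\alpha^2\mE[\ltwo{v_{t-1}}^2]+2\mu_2^2(d_2+6)^3\ell^2$ via \Cref{lemma6}, with no $1/S_{2,y}$. Your route therefore leaves a dimension-dependent coefficient $\Theta((d_2+4)\alpha^2\ell^2)$ on $\mE[\ltwo{v_{t-1}}^2]$, which after the $1/(1-b)=\Theta(\kappa)$ amplification from \Cref{lemma14} would replace the stated $\frac{9\ell^2\beta^2}{1-b}$ by $\Theta\big(\frac{(d_2+4)\ell^2\beta^2}{1-b}\big)$; downstream this survives the $q\cdot(\frac{d_1+4}{S_{2,x}}+\frac{d_2+4}{S_{2,y}})=1-b$ cancellation and would force a dimension-dependent stepsize, degrading the final complexity.

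The fix is exactly what the paper does in \cref{eq: 43}: center $\tu_{t-1,0}$ around $\nabla_y f_{\mu_2}(x_t,y_{t-1})$ and $\nabla_y f_{\mu_2}(x_{t-1},y_{t-1})$, so that the first piece is part of $\wtDelta^\prime_{t-1,0}$ (which does carry the $1/S_{2}$ factors via \cref{eq: 38}), the second is the dimension-free Lipschitz term $\ell^2\alpha^2\mE[\ltwo{v_{t-1}}^2]$, and the third is $\delta^\prime_{t-1}$; this yields $3\Delta^\prime_{t-1}+3\delta^\prime_{t-1}+9\alpha^2\ell^2\mE[\ltwo{v_{t-1}}^2]$ plus smoothing residuals, and hence the exact constants $\frac{6\ell^2\beta^2}{1-b}$ and $\frac{9\ell^2\beta^2}{1-b}$ in the statement. (An explicit bias--variance split of the batch-averaged increment would also work.) Note also that your detour $\mE[\ltwo{u_{t-1}}^2]\le 2\Delta^\prime_{t-1}+2\delta^\prime_{t-1}$ followed by the factor $3$ gives $6\Delta^\prime_{t-1}+6\delta^\prime_{t-1}$ rather than the paper's $3\Delta^\prime_{t-1}+3\delta^\prime_{t-1}$, so even after repairing the dimension issue you would prove the lemma only with doubled constants. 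With the paper's centering in place, the rest of your bookkeeping, including the $\kappa^3$ bound on $\pi_\Delta$ from $1-b=\Theta(1/\kappa)$ and $m=\Theta(\kappa)$, goes through as you describe.
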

\begin{proof}
	We proceed as follows:
	\begin{flalign}
	&\Delta^\prime_t\nonumber\\
	&=\wtDelta^\prime_{t-1,\bar{m}_{t-1}}\nonumber\\
	&=\mE\Big[\ltwo{\nabla_x f_{\mu_1}(\tx_{t-1,\tm_{t-1}},\ty_{t-1,\tm_{t-1}}) - \tv_{t-1,\tm_{t-1}} }^2\Big]\nonumber\\
	&\overset{(i)}{\leq} \mE\Big[\ltwo{\nabla_x f_{\mu_1}(\tx_{t-1,\tm_{t-1}-1},\ty_{t-1,\tm_{t-1}-1}) - \tv_{t-1,\tm_{t-1}-1} }^2\Big] \nonumber\\
	&\quad + \frac{1}{S_{2,x}} \mE\Big[ \ltwo{G_{\mu_1}(\tx_{t-1,\tm_{t-1}},\ty_{t-1,\tm_{t-1}},\nu_{i},\xi_{i}) - G_{\mu_1}(\tx_{t-1,\tm_{t-1}-1},\ty_{t-1,\tm_{t-1}-1},\nu_{i},\xi_{i})}^2 \Big] \nonumber\\
	&\overset{(ii)}{\leq} \mE\Big[\ltwo{\nabla_x f_{\mu_1}(\tx_{t-1,\tm_{t-1}-1},\ty_{t-1,\tm_{t-1}-1}) - \tv_{t-1,\tm_{t-1}-1} }^2\Big] \nonumber \\
	&\quad + \frac{1}{S_{2,x}}\left[2(d_1+4)\ell^2\beta^2\mE[\ltwo{\tu_{t-1,\tm_{t-1}-1}}^2] + 2\mu^2_1(d_1+6)^3\ell^2\right],\label{eq: 34}
	\end{flalign}
	where $(i)$ follows from \Cref{lemma3}, and $(ii)$ follows from \Cref{lemma6}. Applying \cref{eq: 34} recursively yields
	\begin{flalign}
	&\mE\Big[\ltwo{\nabla_x f_{\mu_1}(\tx_{t-1,\tm_{t-1}},\ty_{t-1,\tm_{t-1}}) - \tv_{t-1,\tm_{t-1}} }^2\Big]\nonumber\\
	&\leq \mE\Big[\ltwo{\nabla_x f_{\mu_1}(\tx_{t-1,0},\ty_{t-1,0}) - \tv_{t-1,0} }^2\Big] + \frac{2(d_1+4)\ell^2\beta^2}{S_{2,x}}\sum_{k=0}^{\tm_{t-1}-1}\mE[\ltwo{\tu_{t-1,k}}^2]  \nonumber \\
	&\quad + \frac{2\tm_{t-1}\mu^2_1(d_1+6)^3\ell^2}{S_{2,x}}\nonumber\\
	&\leq \mE\Big[\ltwo{\nabla_x f_{\mu_1}(\tx_{t-1,0},\ty_{t-1,0}) - \tv_{t-1,0} }^2\Big] + \frac{2(d_1+4)\ell^2\beta^2}{S_{2,x}}\sum_{k=0}^{m}\mE[\ltwo{\tu_{t-1,k}}^2] \nonumber \\
	&\quad + \frac{2(m+1)\mu^2_1(d_1+6)^3\ell^2}{S_{2,x}}.\label{eq: 35}
	\end{flalign}
	Similarly, we obtain
	\begin{flalign}
	&\mE\Big[\ltwo{\nabla_y f_{\mu_2}(\tx_{t-1,\tm_{t-1}},\ty_{t-1,\tm_{t-1}}) - \tu_{t-1,\tm_{t-1}} }^2\Big]\nonumber\\
	&\leq \mE\Big[\ltwo{\nabla_y f_{\mu_2}(\tx_{t-1,0},\ty_{t-1,0}) - \tu_{t-1,0} }^2\Big] + \frac{2(d_2+4)\ell^2\beta^2}{S_{2,y}}\sum_{k=0}^{m}\mE[\ltwo{\tu_{t-1,k}}^2] \nonumber \\
	&\quad + \frac{2(m+1)\mu^2_2(d_2+6)^3\ell^2}{S_{2,y}}.\label{eq: 36}
	\end{flalign}
	Combining \cref{eq: 35} and \cref{eq: 36} yields
	\begin{flalign}
	\Delta^\prime_t&\leq \wtDelta^\prime_{t-1,0} + \left(\frac{2(d_1+4)\ell^2\beta^2}{S_{2,x}} + \frac{2(d_2+4)\ell^2\beta^2}{S_{2,y}}\right)\sum_{k=0}^{m}\mE[\ltwo{\tu_{t-1,k}}^2] \nonumber\\
	&\quad+ \frac{2(m+1) \mu^2_1(d_1+6)^3\ell^2}{S_{2,x}} +  \frac{2(m+1) \mu^2_2(d_2+6)^3\ell^2}{S_{2,y}} .\label{eq: 37}
	\end{flalign}
	For $\wtDelta^\prime_{t-1,0}$, we obtain
	\begin{flalign}
	\wtDelta^\prime_{t-1,0}&=\mE[\ltwo{\nabla_x f_{\mu_1}(\tx_{t-1,0},\ty_{t-1,0})-\tv_{t-1,0}}^2] + \mE[\ltwo{\nabla_y f_{\mu_2}(\tx_{t-1,0},\ty_{t-1,0})-\tu_{t-1,0}}^2]\nonumber\\
	&\overset{(i)}{\leq} \mE[\ltwo{\nabla_x f_{\mu_1}(\tx_{t-1,-1},\ty_{t-1,-1})-\tv_{t-1,-1}}^2] + \mE[\ltwo{\nabla_y f_{\mu_2}(\tx_{t-1,-1},\ty_{t-1,-1})-\tu_{t-1,-1}}^2]\nonumber\\
	&\quad + \frac{1}{S_{2,x}}\mE[\ltwo{G(\tx_{t,0},\ty_{t,0},\nu_{i},\xi_{i}) - G(\tx_{t,-1},\ty_{t,-1},\nu_{\mcm_{i}},\xi_{i})}^2] \nonumber\\
	&\quad + \frac{1}{S_{2,y}}\mE[\ltwo{H(\tx_{t,0},\ty_{t,0},\nu_{i},\xi_{i}) - H(\tx_{t,-1},\ty_{t,-1},\nu_{\mcm_{i}},\xi_{i})}^2] \nonumber\\
	&\overset{(ii)}{\leq} \Delta^\prime_{t-1} + \left(\frac{2(d_1+4)\ell^2\alpha^2}{S_{2,x}} + \frac{2(d_2+4)\ell^2\alpha^2}{S_{2,y}}\right) \mE[\ltwo{v_{t-1}}^2] \nonumber\\
	&\quad + \frac{2\mu^2_1(d_1+6)^3\ell^2}{S_{2,x}} + \frac{2\mu^2_2(d_2+6)^3\ell^2}{S_{2,y}} ,\label{eq: 38}
	\end{flalign}
	where $(i)$ follows from \Cref{lemma3} and $(ii)$ follows from \Cref{lemma6}. Substituting \cref{eq: 38} into \cref{eq: 37} yields
	\begin{flalign}
	\Delta^\prime_t &\leq \Delta^\prime_{t-1} + \left(\frac{2(d_1+4)\ell^2\alpha^2}{S_{2,x}} + \frac{2(d_2+4)\ell^2\alpha^2}{S_{2,y}}\right)\mE[\ltwo{v_{t-1}}^2] \nonumber\\
	&\quad + \left(\frac{2(d_1+4)\ell^2\beta^2}{S_{2,x}} + \frac{2(d_2+4)\ell^2\beta^2}{S_{2,y}}\right)\sum_{k=0}^{m}\mE[\ltwo{\tu_{t-1,k}}^2]  \nonumber\\
	&\quad + \frac{2(m+2) \mu^2_1(d_1+6)^3\ell^2}{S_{2,x}} +  \frac{2(m+2) \mu^2_2(d_2+6)^3\ell^2}{S_{2,y}} \nonumber\\
	&\overset{(i)}{\leq} \Delta^\prime_{t-1} + \left(\frac{2(d_1+4)\ell^2\alpha^2}{S_{2,x}} + \frac{2(d_2+4)\ell^2\alpha^2}{S_{2,y}}\right)\mE[\ltwo{v_{t-1}}^2]  \nonumber\\
	&\quad + \frac{2(m+2) \mu^2_1(d_1+6)^3\ell^2}{S_{2,x}} +  \frac{2(m+2) \mu^2_2(d_2+6)^3\ell^2}{S_{2,y}}\nonumber\\
	&\quad + \frac{2\ell^2\beta^2}{1-b}\left(\frac{d_1+4}{S_{2,x}} + \frac{d_2+4}{S_{2,y}}\right)\left[  \mE[\ltwo{\tu_{t,0}}^2] + (m+1)\left( \frac{2\mu^2_2\ell\kappa}{\beta}(d_2+3)^3 + 7 \mu^2_2(d_2+6)^3\ell^2 \right) \right].\label{eq: 39}
	\end{flalign}
	where $(i)$ follows from \Cref{lemma14}. We next bound the term $\mE[\ltwo{\tu_{t-1,0}}^2]$ as follows:
	\begin{flalign}
	&\mE[\ltwo{\tu_{t-1,0}}^2]\nonumber\\
	&=\mE[\ltwo{\tu_{t-1,0} - \nabla_y f_{\mu_2}(x_t, y_{t-1}) + \nabla_y f_{\mu_2}(x_t, y_{t-1}) - \nabla_y f_{\mu_2}(x_{t-1}, y_{t-1}) + \nabla_y f_{\mu_2}(x_{t-1}, y_{t-1}) }^2]\nonumber\\
	&\leq 3\mE[\ltwo{\tu_{t-1,0} - \nabla_y f_{\mu_2}(x_t, y_{t-1})}^2] + 3\mE[\ltwo{\nabla_y f_{\mu_2}(x_t, y_{t-1}) - \nabla_y f_{\mu_2}(x_{t-1}, y_{t-1}) }^2] \nonumber\\
	&\quad + 3\mE[\ltwo{\nabla_y f_{\mu_2}(x_{t-1}, y_{t-1})}^2]\nonumber\\
	&\overset{(i)}{\leq} 3\mE[\ltwo{\tu_{t-1,0} - \nabla_y f_{\mu_2}(x_t, y_{t-1})}^2] + 3\ell^2\mE[\ltwo{x_t-x_{t-1}}^2] + 3\delta^\prime_{t-1} \nonumber\\
	&=3\mE[\ltwo{\tu_{t-1,0} - \nabla_y f_{\mu_2}(\tx_{t-1,0}, \ty_{t-1,0})}^2] + 3\alpha^2\ell^2\mE[\ltwo{v_{t-1}}^2] + 3\delta^\prime_{t-1}\nonumber\\
	&\leq 3\wtDelta^\prime_{t-1,0} + 3\alpha^2\ell^2\mE[\ltwo{v_{t-1}}^2] + 3\delta^\prime_{t-1}\nonumber\\
	&\overset{(ii)}{\leq} 3\Delta^\prime_{t-1} + 3\delta^\prime_{t-1} + \left[3+\frac{6(d_1+4)}{S_{2,x}} +\frac{6(d_2+4)}{S_{2,y}} \right]\alpha^2\ell^2\mE[\ltwo{v_{t-1}}^2] + 6\ell^2\left[ \frac{\mu^2_1(d_1+6)^3}{S_{2,x}} + \frac{\mu^2_2(d_2+6)^3}{S_{2,y}} \right]\nonumber\\
	&\overset{(iii)}{\leq} 3\Delta^\prime_{t-1} + 3\delta^\prime_{t-1} + 9\alpha^2\ell^2\mE[\ltwo{v_{t-1}}^2] + 6\ell^2\left[ \frac{\mu^2_1(d_1+6)^3}{S_{2,x}} + \frac{\mu^2_2(d_2+6)^3}{S_{2,y}} \right]\label{eq: 43}
	\end{flalign}
	where $(i)$ follows from \Cref{lemma9}, and $(ii)$ follows from \cref{eq: 38}, and $(iii)$ follows from the fact that $S_{2,x}\geq 2(d_1+4)$ and $S_{2,y}\geq 2(d_2+4)$. Substituting \cref{eq: 43} into \cref{eq: 39} yields
	\begin{flalign}
	\Delta^\prime_t &\leq \left[ 1 + \frac{6\ell^2\beta^2}{1-b}\left(\frac{d_1+4}{S_{2,x}} + \frac{d_2+4}{S_{2,y}} \right) \right]\Delta^\prime_{t-1} + \frac{6\ell^2\beta^2}{1-b}\left(\frac{d_1+4}{S_{2,x}} + \frac{d_2+4}{S_{2,y}} \right)\delta^\prime_{t-1} \nonumber\\
	&\quad + 2\ell^2\alpha^2\left(\frac{d_1+4}{S_{2,x}} + \frac{d_2+4}{S_{2,y}} \right) \left( 1 + \frac{9\ell^2\beta^2}{1-b}\right)\mE[\ltwo{v_{t-1}}^2] \nonumber\\
	&\quad + \frac{2\ell^2\beta^2}{1-b} \left(\frac{d_1+4}{S_{2,x}} + \frac{d_2+4}{S_{2,y}} \right)\Bigg\{ 6\ell^2\left[ \frac{\mu^2_1(d_1+6)^3}{S_{2,x}} + \frac{\mu^2_2(d_2+6)^3}{S_{2,y}} \right] +  (m+1)\Big( \frac{2\mu^2_2\ell\kappa}{\beta}(d_2+3)^3  \nonumber\\
	&\quad + 7 \mu^2_2(d_2+6)^3\ell^2 \Big)  \Bigg\} + \frac{2(m+2) \mu^2_1(d_1+6)^3\ell^2}{S_{2,x}} +  \frac{2(m+2) \mu^2_2(d_2+6)^3\ell^2}{S_{2,y}} \nonumber\\
	&\overset{(i)}{\leq} \left[ 1 + \frac{6\ell^2\beta^2}{1-b}\left(\frac{d_1+4}{S_{2,x}} + \frac{d_2+4}{S_{2,y}} \right) \right]\Delta^\prime_{t-1} + \frac{6\ell^2\beta^2}{1-b}\left(\frac{d_1+4}{S_{2,x}} + \frac{d_2+4}{S_{2,y}} \right)\delta^\prime_{t-1} \nonumber\\
	&\quad + 2\ell^2\alpha^2\left(\frac{d_1+4}{S_{2,x}} + \frac{d_2+4}{S_{2,y}} \right) \left( 1 + \frac{9\ell^2\beta^2}{1-b}\right)\mE[\ltwo{v_{t-1}}^2] + \pi_\Delta(d_1,d_2,\mu_1,\mu_2),
	\end{flalign}
	where $(i)$ follows from the definition of $\pi_\Delta$.
\end{proof}
\begin{lemma}\label{lemma20}
	Suppose Assumptions \ref{ass2}-\ref{ass3} hold. Let $S_{2,x}\geq 2d_1+8$ and $S_{2,y}\geq 2d_1+8$. Then, we have
	\begin{flalign*}
	\delta^\prime_t &\leq \left(\frac{4}{\beta\mu(m+1)} + \frac{3\ell\beta}{2-\ell\beta} \right)\delta^\prime_{t-1} + \frac{2+2\ell\beta}{2-\ell\beta}\Delta^\prime_{t-1} + \left( \frac{4\ell^2\alpha^2}{\beta\mu(m+1)} + 2\ell^2\alpha^2 + \frac{9\ell^3\beta\alpha^2}{2-\ell\beta} \right)\mE[\ltwo{v_{t-1}}^2]\nonumber\\
	&\quad + \pi_\delta(d_1,d_2,\mu_1,\mu_2),
	\end{flalign*}
	where
	\begin{flalign*}
	\pi_\delta(d_1,d_2,\mu_1,\mu_2) = \frac{2\ell^2(2+2\ell\beta)}{2-\ell\beta}\left( \frac{\mu^2_1(d_1+6)^3}{S_{2,x}} + \frac{\mu^2_2(d_2+6)^3}{S_{2,y}} \right) + \frac{2}{\beta(m+1)}\left(\frac{\mu_2^2}{4\mu}\ell^2(d_2+3)^3  + \mu_2^2\ell d_2 \right).
	\end{flalign*}
	Furthermore, if we let $\beta=\frac{2}{13\ell}$, $m=104\kappa-1$, then we have
	\begin{flalign*}
	\pi_\delta(d_1,d_2,\mu_1,\mu_2) = \frac{5}{2}\mu^2_1\ell^2(d_1+6)^3 + 3 \mu^2_2\ell^2(d_2+6)^3 + \frac{1}{8}\mu_2^2\mu\ell d_2.
	\end{flalign*}
\end{lemma}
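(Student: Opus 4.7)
The plan is to apply Lemma 17 at the shifted index $t{-}1$ (recall $y_{t} = \ty_{t-1,\tm_{t-1}}$ and $\tx_{t-1,0} = x_t$, so $\delta^\prime_t = \mE[\ltwo{\nabla g_{t-1,\mu_2}(\ty_{t-1,\tm_{t-1}})}^2]$) and then to express each of the three terms appearing on the right-hand side of Lemma 17 in terms of $\delta^\prime_{t-1}$, $\Delta^\prime_{t-1}$, and $\mE[\ltwo{v_{t-1}}^2]$, using the outer-loop update $x_t = x_{t-1} - \alpha v_{t-1}$ together with the intermediate bounds already derived inside the proof of Lemma 7.

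First, for the term $\mE[\ltwo{\nabla g_{t-1,\mu_2}(\ty_{t-1,0})}^2] = \mE[\ltwo{\nabla_y f_{\mu_2}(x_t,y_{t-1})}^2]$, I would insert $\nabla_y f_{\mu_2}(x_{t-1},y_{t-1})$, apply Young's inequality with constant $2$, and use the $\ell$-Lipschitz gradient of $f_{\mu_2}$ (Lemma 9) to get
\begin{flalign*}
\mE[\ltwo{\nabla g_{t-1,\mu_2}(\ty_{t-1,0})}^2] \leq 2\delta^\prime_{t-1} + 2\ell^2\alpha^2 \mE[\ltwo{v_{t-1}}^2].
\end{flalign*}
Next, I would bound $\mE[\ltwo{\nabla g_{t-1,\mu_2}(\ty_{t-1,0}) - \tu_{t-1,0}}^2] \leq \wtDelta^\prime_{t-1,0}$ and reuse \cref{eq: 38} from the proof of Lemma 7 to get
\begin{flalign*}
\wtDelta^\prime_{t-1,0} \leq \Delta^\prime_{t-1} + 2\ell^2\alpha^2 \mE[\ltwo{v_{t-1}}^2] + \frac{2\mu_1^2(d_1+6)^3\ell^2}{S_{2,x}} + \frac{2\mu_2^2(d_2+6)^3\ell^2}{S_{2,y}},
\end{flalign*}
under $S_{2,x}\geq 2(d_1+4)$, $S_{2,y}\geq 2(d_2+4)$. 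Finally, I would handle $\mE[\ltwo{\tu_{t-1,0}}^2]$ by reusing \cref{eq: 43} from the proof of Lemma 7, which already gives the bound $3\Delta^\prime_{t-1} + 3\delta^\prime_{t-1} + 9\ell^2\alpha^2\mE[\ltwo{v_{t-1}}^2] + 6\ell^2[\mu_1^2(d_1+6)^3/S_{2,x} + \mu_2^2(d_2+6)^3/S_{2,y}]$.

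Plugging these three bounds into Lemma 17 and collecting terms, the coefficient of $\delta^\prime_{t-1}$ becomes $\frac{2}{\beta\mu(m+1)}\cdot 2 + \frac{\ell\beta}{2-\ell\beta}\cdot 3 = \frac{4}{\beta\mu(m+1)} + \frac{3\ell\beta}{2-\ell\beta}$, the coefficient of $\Delta^\prime_{t-1}$ becomes $1 + \frac{3\ell\beta}{2-\ell\beta} = \frac{2+2\ell\beta}{2-\ell\beta}$, and the coefficient of $\mE[\ltwo{v_{t-1}}^2]$ becomes $\frac{4\ell^2\alpha^2}{\beta\mu(m+1)} + 2\ell^2\alpha^2 + \frac{9\ell^3\beta\alpha^2}{2-\ell\beta}$, matching the statement. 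The residual $\mu_1,\mu_2$-terms combine (via $2\ell^2+\frac{6\ell^3\beta}{2-\ell\beta}=\frac{4\ell^2(1+\ell\beta)}{2-\ell\beta}=\frac{2\ell^2(2+2\ell\beta)}{2-\ell\beta}$) with the last term of Lemma 17 into exactly $\pi_\delta(d_1,d_2,\mu_1,\mu_2)$.

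For the numerical simplification of $\pi_\delta$ under $\beta = 2/(13\ell)$ and $m = 104\kappa - 1$, I would simply substitute, noting that $\ell\beta = 2/13$, hence $\frac{2(2+2\ell\beta)}{2-\ell\beta} = \frac{2\cdot(28/13)}{24/13} = \frac{56}{24} < \frac{5}{2}$, and $\frac{1}{\beta(m+1)} = \frac{13\ell}{2\cdot 104\kappa} = \frac{\mu}{16}$, giving the claimed constants $\tfrac{5}{2}\mu_1^2\ell^2(d_1+6)^3 + 3\mu_2^2\ell^2(d_2+6)^3 + \tfrac{1}{8}\mu_2^2\mu\ell d_2$. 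The main obstacle is purely bookkeeping: one must correctly split the $x$- and $y$-components of $\Delta^\prime$ and $\wtDelta^\prime$ when applying Lemma 17 (which involves only the $y$-gradient), and keep track of algebraic simplifications, but there is no new conceptual ingredient beyond Lemma 17 and the intermediate bounds from the proof of Lemma 7.
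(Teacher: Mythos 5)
Your proposal is correct and follows essentially the same route as the paper's proof: invoke Lemma \ref{lemma17}, split $\mE[\ltwo{\nabla_y f_{\mu_2}(x_t,y_{t-1})}^2]$ via Young's inequality and the $\ell$-Lipschitz gradient of $f_{\mu_2}$, bound the estimation term by $\wtDelta^\prime_{t-1,0}$ using \cref{eq: 38}, control $\mE[\ltwo{\tu_{t-1,0}}^2]$ via \cref{eq: 43}, and collect coefficients (the paper merely derives the bound for $\delta^\prime_{t+1}$ in terms of index $t$ and shifts indices at the end, which is cosmetically different from your working directly at $t-1$). The coefficient bookkeeping and the numerical evaluation of $\pi_\delta$ under $\beta=2/(13\ell)$, $m=104\kappa-1$ all check out.
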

\begin{proof}
	Using the result in \Cref{lemma17}, and recalling the definition in \Cref{sc: resultofzoconcavemaximizer} that $\nabla g_{t,\mu_2}(\ty_{t,\tm_t})=\nabla_yf(x_t,y_t)$ and $\nabla g_{t,\mu_2}(\ty_{t,0})=\nabla_yf_{\mu_2}(x_{t+1},y_t)$, we have
	\begin{flalign}
	\delta^\prime_{t+1} &\leq \frac{2}{\beta\mu(m+1)}\mE[\ltwo{\nabla_yf_{\mu_2}(x_{t+1},y_t)}^2] + \mE[\ltwo{\nabla g_{t,\mu_2}(\ty_{t,0}) - \tu_{t,0} }^2] + \frac{\ell\beta}{2-\ell\beta}\mE[\ltwo{\tu_{t,0}}^2] \nonumber\\
	&\quad  + \frac{2}{\beta(m+1)}\left(\frac{\mu_2^2}{4\mu}\ell^2(d_2+3)^3  + \mu_2^2\ell d_2 \right)\nonumber\\
	&\leq \frac{2}{\beta\mu(m+1)}\mE[\ltwo{\nabla_yf_{\mu_2}(x_{t+1},y_t)}^2] + \wtDelta^\prime_{t,0} + \frac{\ell\beta}{2-\ell\beta}\mE[\ltwo{\tu_{t,0}}^2] \nonumber\\
	&\quad  + \frac{2}{\beta(m+1)}\left(\frac{\mu_2^2}{4\mu}\ell^2(d_2+3)^3  + \mu_2^2\ell d_2 \right) \nonumber\\
	&\leq \frac{4}{\beta\mu(m+1)}\mE[\ltwo{\nabla_yf_{\mu_2}(x_{t+1},y_t) - \nabla_yf_{\mu_2}(x_t,y_t)}^2] + \frac{4}{\beta\mu(m+1)}\mE[\ltwo{\nabla_yf_{\mu_2}(x_t,y_t)}^2] \nonumber\\
	&\quad + \wtDelta^\prime_{t,0} + \frac{\ell\beta}{2-\ell\beta}\mE[\ltwo{\tu_{t,0}}^2] + \frac{2}{\beta(m+1)}\left(\frac{\mu_2^2}{4\mu}\ell^2(d_2+3)^3  + \mu_2^2\ell d_2 \right) \nonumber\\
	&\leq \frac{4\ell^2\alpha^2}{\beta\mu(m+1)}\mE[\ltwo{v_t}^2] + \frac{4}{\beta\mu(m+1)}\delta^\prime_t + \wtDelta^\prime_{t,0} + \frac{\ell\beta}{2-\ell\beta}\mE[\ltwo{\tu_{t,0}}^2] \nonumber\\
	&\quad + \frac{2}{\beta(m+1)}\left(\frac{\mu_2^2}{4\mu}\ell^2(d_2+3)^3  + \mu_2^2\ell d_2 \right) \nonumber\\
	&\overset{(i)}{\leq }\frac{4\ell^2\alpha^2}{\beta\mu(m+1)}\mE[\ltwo{v_t}^2] + \frac{4}{\beta\mu(m+1)}\delta^\prime_t \nonumber\\
	&\quad +  \Delta^\prime_t + 2\ell^2\alpha^2\mE[\ltwo{v_t}^2] + 2\ell^2\left( \frac{\mu^2_1(d_1+6)^3}{S_{2,x}} + \frac{\mu^2_2(d_2+6)^3}{S_{2,y}} \right)  \nonumber\\
	&\quad + \frac{\ell\beta}{2-\ell\beta}\left[ 3\Delta^\prime_t + 3\delta^\prime_t + 9\ell^2\alpha^2\mE[\ltwo{v_t}^2] + 6\ell^2\left( \frac{\mu^2_1(d_1+6)^3}{S_{2,x}} + \frac{\mu^2_2(d_2+6)^3}{S_{2,y}} \right) \right] \nonumber\\
	&\quad  + \frac{2}{\beta(m+1)}\left(\frac{\mu_2^2}{4\mu}\ell^2(d_2+3)^3  + \mu_2^2\ell d_2 \right) \nonumber\\
	&=\left(\frac{4}{\beta\mu(m+1)} + \frac{3\ell\beta}{2-\ell\beta} \right)\delta^\prime_t + \frac{2+2\ell\beta}{2-\ell\beta}\Delta^\prime_t + \left( \frac{4\ell^2\alpha^2}{\beta\mu(m+1)} + 2\ell^2\alpha^2 + \frac{9\ell^3\beta\alpha^2}{2-\ell\beta} \right)\mE[\ltwo{v_t}^2]\nonumber\\
	&\quad + \frac{2\ell^2(2+2\ell\beta)}{2-\ell\beta}\left( \frac{\mu^2_1(d_1+6)^3}{S_{2,x}} + \frac{\mu^2_2(d_2+6)^3}{S_{2,y}} \right) + \frac{2}{\beta(m+1)}\left(\frac{\mu_2^2}{4\mu}\ell^2(d_2+3)^3  + \mu_2^2\ell d_2 \right),\nonumber\\
	&\leq \left(\frac{4}{\beta\mu(m+1)} + \frac{3\ell\beta}{2-\ell\beta} \right)\delta^\prime_t + \frac{2+2\ell\beta}{2-\ell\beta}\Delta^\prime_t + \left( \frac{4\ell^2\alpha^2}{\beta\mu(m+1)} + 2\ell^2\alpha^2 + \frac{9\ell^3\beta\alpha^2}{2-\ell\beta} \right)\mE[\ltwo{v_t}^2]\nonumber\\
	&\quad + \pi_\delta(d_1,d_2,\mu_1,\mu_2),\label{eq: 48}
	\end{flalign}
	where $(i)$ follows from \cref{eq: 38} and \cref{eq: 43}, and from the fact that $S_{2,x}\geq 2d_1+8$ and $S_{2,y}\geq 2d_2+8$. The proof is completed by shifting the index in \cref{eq: 48} from $t$ to $t-1$.
\end{proof}

\subsection{Proof of \Cref{thm2}}
We first restate \Cref{thm2} as follows to include the specifics of the parameters.
\begin{theorem}[Restate of \Cref{thm2} with parameter specifics]\label{thm4}
	Let Assumptions \ref{ass1},\ref{ass2},\ref{ass5},and \ref{ass3} hold and apply ZO-VRGDA in \Cref{al:zosredaboost} to solve the problem in \cref{eq: 1} with the following parameters:
	\begin{flalign*}
	&\zeta=\frac{1}{\kappa}, \quad \alpha=\frac{1}{24(\kappa+1)\ell},\quad\beta=\frac{2}{13\ell},\quad q=\frac{2800\kappa}{13\epsilon(\kappa+1)},\\
	&m=104\kappa-1,\quad S_{2,x}=\frac{5600(d_1+4)\kappa}{\epsilon},\quad S_{2,y}=\frac{5600(d_2+4)\kappa}{\epsilon},\\
	&S_1=\frac{40320\sigma^2\kappa^2}{\epsilon^2},\quad T=\max\{ 1728(\kappa+1)\ell\frac{\Phi(x_0)-\Phi^*}{\epsilon^2}, \frac{810\kappa}{\epsilon^2} \},\\
	&\delta=\frac{\epsilon}{71\kappa\ell\sqrt{d_1+d_2}},\quad \mu_1= \frac{\epsilon}{71\kappa^{2.5}\ell(d_1+6)^{1.5}}, \quad \mu_2= \frac{\epsilon}{71\kappa^{2.5}\ell(d_2+6)^{1.5}}.
	\end{flalign*}
	\Cref{al:zosredaboost} outputs $\hat{x}$ satisfies that
	\begin{flalign*}
	\mE[\ltwo{\nabla\Phi(\hat{x})}]\leq \epsilon
	\end{flalign*}
	with at most $\mco((d_1+d_2)\kappa^{3} \epsilon^{-3})$ function queries.
\end{theorem}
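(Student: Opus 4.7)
The plan is to follow the three-step strategy from the proof sketch, anchored on the per-iteration recursions in Lemmas~\ref{lemma7} and~\ref{lemma20}. First I would substitute the chosen hyperparameters $\beta = 2/(13\ell)$, $m = 104\kappa - 1$, $\alpha = \Theta(1/(\kappa\ell))$, $S_{2,x} = \Theta(d_1\kappa/\epsilon)$, $S_{2,y} = \Theta(d_2\kappa/\epsilon)$, and the smoothing parameters $\mu_1, \mu_2$, picked so that the residual terms $\pi_\Delta, \pi_\delta$ collapse to $\Theta(\kappa^{-2}\epsilon^2)$ and $\Theta(\kappa^{-5}\epsilon^2)$ respectively. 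Under this substitution, Lemma~\ref{lemma7} becomes $\Delta'_t \le (1 + \Theta(\epsilon))\Delta'_{t-1} + \Theta(\epsilon)\delta'_{t-1} + \Theta(\kappa^{-2}\epsilon)\mE[\ltwo{v_{t-1}}^2] + \Theta(\kappa^{-2}\epsilon^2)$, and Lemma~\ref{lemma20} becomes $\delta'_t \le (1/2)\delta'_{t-1} + \Theta(1)\Delta'_{t-1} + \Theta(\kappa^{-2})\mE[\ltwo{v_{t-1}}^2] + \Theta(\kappa^{-5}\epsilon^2)$. These are the two coupled recursions driving the entire analysis.

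Next I would unroll the $\Delta'_t$ recursion across an epoch of length $q = \Theta(\epsilon^{-1})$. Because $(1 + \Theta(\epsilon))^q = \Theta(1)$, $\Delta'_t$ inside an epoch is bounded by $2\Delta'_{\lfloor t\rfloor q}$ plus accumulated contributions from $\delta'$ and $\ltwo{v}^2$, and the reset value $\Delta'_{\lfloor t\rfloor q}$ is itself $\Theta(\kappa^{-2}\epsilon^2)$ by Lemma~\ref{lemma19} together with $S_1 = \Theta(\sigma^2 \kappa^2 \epsilon^{-2})$. Summing over all epochs produces the accumulated bound on $\sum\Delta'_t$; combining it with the telescoped contraction of $\delta'_t$ and with $\delta'_0 = \Theta(\kappa^{-1})$ from the ZO-iSARAH warm start (Theorem~\ref{thm: zo-sarah} at accuracy $\zeta = \kappa^{-1}$), and then solving the resulting two-by-two coupled linear inequality, yields the decoupled estimates $\sum_{t=0}^{T-1}\Delta'_t, \sum_{t=0}^{T-1}\delta'_t \le \Theta(\kappa^{-1}) + \Theta(\kappa^{-2})\sum_{t=0}^{T-1}\mE[\ltwo{v_t}^2]$.

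To close the argument I would invoke the $L$-smoothness of $\Phi$ (Lemma~\ref{lemma1}, with $L = (\kappa+1)\ell$) on the update $x_{t+1} = x_t - \alpha v_t$. Decomposing $\langle \nabla \Phi(x_t), v_t\rangle$ through the identity $\nabla\Phi(x_t) = \nabla_x f(x_t, y^*(x_t))$ and converting $\ltwo{y_t - y^*(x_t)}^2$ into a multiple of $\delta'_t/\mu^2$ via strong concavity and Lemma~\ref{lemma18} introduces only $\Delta'_t, \delta'_t$ and smoothing residuals on the right-hand side. Telescoping and substituting the decoupled bounds from the previous step leaves a leftover $\Theta(\alpha^2)\sum\mE[\ltwo{v_t}^2]$, which is absorbed into the $\alpha/2$ produced by smoothness, yielding $\sum_{t=0}^{T-1}\mE[\ltwo{v_t}^2] \le \Theta(L(\Phi(x_0) - \Phi^*) + \kappa)$. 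Feeding this back into the decoupled bounds and then into $\ltwo{\nabla \Phi(x_t)}^2 \le 6\kappa^2\delta'_t + 6\Delta'_t + 3\ltwo{v_t}^2 + \Theta(\kappa^{-2}\epsilon^2)$ (obtained by the triangle inequality and Lemma~\ref{lemma18}), dividing by $T = \Theta(\kappa\epsilon^{-2}\max\{L(\Phi(x_0) - \Phi^*), \kappa\})$, and applying Jensen's inequality produces $\mE[\ltwo{\nabla \Phi(\hat x)}] \le \epsilon$. The query total $T\cdot m \cdot (S_{2,x} + S_{2,y}) + \lceil T/q\rceil \cdot S_1 \cdot (d_1 + d_2) + \mathcal{O}(d_2\kappa\log\kappa)$ collapses to $\mathcal{O}((d_1 + d_2)\kappa^3\epsilon^{-3})$.

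The hardest part will be the absorption step inside the descent inequality. The bounds on $\sum\Delta'_t$ and $\sum\delta'_t$ both contain $\sum\mE[\ltwo{v_t}^2]$ with coefficients of order $\kappa^{-2}$, but when these enter the smoothness telescoping they are multiplied by $\kappa^2$ from the $\ltwo{\nabla\Phi(x_t) - \nabla_x f(x_t, y_t)}$ term, so the net coefficient becomes $\Theta(\alpha^2)$ times a constant. For this to be strictly dominated by the $\alpha/2$ on the left I need exactly $\alpha = \Theta(1/(\kappa\ell))$, a factor of $\kappa$ smaller than the plain smoothness step $1/L$. Making all constants line up — especially the choice $m = \Theta(\kappa)$, which is what makes the $2/(\beta\mu(m+1))$ contraction coefficient in Lemma~\ref{lemma20} beat the expansion coefficient in Lemma~\ref{lemma7} within a single epoch — is the technical crux that enables the $\epsilon$-independent stepsize and the improved $\epsilon^{-3}$ rate.
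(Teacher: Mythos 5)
Your proposal is correct and follows essentially the same route as the paper's proof: the same per-iteration recursions from Lemmas~\ref{lemma7} and~\ref{lemma20}, the same epoch-wise unrolling with $(1+\Theta(\epsilon))^{q}=\Theta(1)$ and reset via Lemma~\ref{lemma19}, the same decoupling of $\sum_t\Delta'_t$ and $\sum_t\delta'_t$, and the same absorption of $\Theta(\alpha^2)\sum_t\mE[\ltwo{v_t}^2]$ into the $\alpha/2$ descent term before assembling $\mE[\ltwo{\nabla\Phi(\hat x)}^2]$ and counting queries. Your identification of the crux --- that $\alpha=\Theta(1/(\kappa\ell))$ and $m=\Theta(\kappa)$ are exactly what make the absorption and the $\tfrac12$-contraction of $\delta'_t$ go through --- matches the constants the paper verifies in \cref{eq: 62}--\cref{eq: 65}.
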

\begin{proof}
By \Cref{lemma1}, the objective function $\Phi$ is $L$-smooth, which implies that
\begin{flalign}
\Phi(x_{t+1})&\leq \Phi(x_t)-\alpha\langle \nabla_x \Phi(x_t),v_t\rangle + \frac{L\alpha^2}{2}\ltwo{v_t}^2\nonumber\\
&=\Phi(x_t)-\alpha\langle \nabla_x \Phi(x_t)-v_t,v_t\rangle - \alpha\ltwo{v_t}^2 + \frac{L\alpha^2}{2}\ltwo{v_t}^2\nonumber\\
&\overset{(i)}{\leq}\Phi(x_t) + \frac{\alpha}{2}  \ltwo{\nabla_x \Phi(x_t)-v_t}^2 + \frac{\alpha}{2}\ltwo{v_t}^2  - \alpha\ltwo{v_t}^2 + \frac{L\alpha^2}{2}\ltwo{v_t}^2\nonumber\\
&\leq \Phi(x_t) + \frac{\alpha}{2}\ltwo{\nabla_x \Phi(x_t)-v_t}^2 - \left(\frac{\alpha}{2}-\frac{L\alpha^2}{2}\right)\ltwo{v_t}^2\nonumber\\
&\leq \Phi(x_t) + \alpha\ltwo{\nabla_x \Phi(x_t)-\nabla_xf(x_t,y_t)}^2 + \alpha\ltwo{\nabla_xf(x_t,y_t)-v_t}^2 - \left(\frac{\alpha}{2}-\frac{L\alpha^2}{2}\right)\ltwo{v_t}^2\nonumber\\
&\overset{(ii)}{\leq} \Phi(x_t) + \alpha\kappa^2\ltwo{\nabla_yf(x_t,y_t)}^2 + \alpha\ltwo{\nabla_xf(x_t,y_t)-v_t}^2 - \left(\frac{\alpha}{2}-\frac{L\alpha^2}{2}\right)\ltwo{v_t}^2,\label{eq: 16}
\end{flalign}
where $(i)$ follows from the fact that $(-1)\langle \nabla_x \Phi(x_t)-v_t,v_t\rangle\leq \frac{1}{2}\ltwo{\nabla_x \Phi(x_t)-v_t}^2+\frac{1}{2}\ltwo{v_t}^2$, and $(ii)$ follows from the fact that
\begin{flalign*}
\ltwo{\nabla_x \Phi(x_t)-\nabla_xf(x_t,y_t)}^2&=\ltwo{\nabla_x f(x_t,y^*(x_t))-\nabla_xf(x_t,y_t)}^2\leq \ell^2 \ltwo{y^*(x_t)-y_t}^2\\
&\overset{\cref{eq: 15}}{\leq} \frac{\ell^2}{\mu^2}\ltwo{\nabla_y f(x_t, y^*(x_t)) - \nabla_y f(x_t, y_t) }^2 = \kappa^2\ltwo{\nabla_y f(x_t, y_t)}^2.
\end{flalign*}
Taking expectation on both sides of \cref{eq: 16} yields
\begin{flalign}
\mE[\Phi(x_{t+1})]&\leq \mE[\Phi(x_t)] + \alpha\kappa^2\mE[\ltwo{\nabla_yf(x_t,y_t)}^2] + \alpha\mE[\ltwo{\nabla_xf(x_t,y_t)-v_t}^2] - \left(\frac{\alpha}{2}-\frac{L\alpha^2}{2}\right)\mE[\ltwo{v_t}^2]\nonumber\\
&\leq \mE[\Phi(x_t)] + \alpha\kappa^2\delta_t + \alpha\Delta_t - \left(\frac{\alpha}{2}-\frac{L\alpha^2}{2}\right)\mE[\ltwo{v_t}^2].\label{eq: 17}
\end{flalign}
Using the property in \Cref{lemma18}, we obtain the following
\begin{flalign}
\mE[\Phi(x_{t+1})] &\leq \mE[\Phi(x_t)] + \alpha\kappa^2\delta_t + \alpha\Delta_t - \left(\frac{\alpha}{2}-\frac{L\alpha^2}{2}\right)\mE[\ltwo{v_t}^2] \nonumber\\
&\overset{(i)}{\leq} \mE[\Phi(x_t)] + 2\alpha\kappa^2\delta^\prime_t + 2\alpha\Delta^\prime_t - \left(\frac{\alpha}{2}-\frac{L\alpha^2}{2}\right)\mE[\ltwo{v_t}^2] \nonumber\\
&\quad + \frac{\mu_2\alpha(\kappa^2+1)}{2}\ell^2(d_2+3)^3 + \frac{\mu_1\alpha}{2}\ell^2(d_1+3)^3. \label{eq: 50}
\end{flalign}
Rearranging \cref{eq: 50} and taking the summation over $t=\{0,1,\cdots,T-1\}$ yield
\begin{flalign}
	\left(\frac{\alpha}{2}-\frac{L\alpha^2}{2}\right)\sum_{t=0}^{T-1}\mE[\ltwo{v_t}^2] &\leq \Phi(x_0) - \mE[\Phi(x_T)] + 2\alpha\kappa^2\sum_{t=0}^{T-1}\delta^\prime_t + 2\alpha\sum_{t=0}^{T-1}\Delta^\prime_t \nonumber\\
	&\quad + \alpha T\pi(d_1,d_2,\mu_1,\mu_2).\label{eq: 51}
\end{flalign}
Note that in \cref{eq: 51} we define
\begin{flalign}
	\pi(d_1,d_2,\mu_1,\mu_2)=\frac{\mu^2_2(\kappa^2+1)}{2}\ell^2(d_2+3)^3 + \frac{\mu^2_1}{2}\ell^2(d_1+3)^3.\label{eq: 71}
\end{flalign} 

Then we proceed to prove \Cref{thm2}/\Cref{thm4} in the following three steps.

{\em \textbf{Step 1.} We establish the induction relationships for the tracking error and gradient estimation error with respect to the Gaussian smoothed function upon one outer-loop update for ZO-VRGDA. Namely, we develop the relationship between $\delta^\prime_{t}$ and $\delta^\prime_{t-1}$ as well as that between $\Delta^\prime_{t}$ and $\Delta^\prime_{t-1}$, which are captured in \Cref{lemma7} and \Cref{lemma20}.}

{\em \textbf{Step 2.} Based on Step 1, we provide the bounds on the inter-related accumulative errors $\sum_{t=0}^{T-1}\Delta^\prime_t$ and $\sum_{t=0}^{T-1}\delta^\prime_t$ over the entire execution of the algorithm.}

We first consider $\sum_{t=0}^{T-1}\Delta^\prime_t$, for any $(n_T-1)q\leq t^\prime<T-1$. Applying the inequality in \Cref{lemma7} recursively, we obtain the following bound
\begin{flalign}
\Delta^\prime_{t} &\leq \left[ 1 + \frac{6\ell^2\beta^2}{1-b} \left(\frac{d_1+4}{S_{2,x}} + \frac{d_2+4}{S_{2,y}} \right) \right]\Delta^\prime_{t-1} + \frac{6\ell^2\beta^2}{1-b}\left(\frac{d_1+4}{S_{2,x}} + \frac{d_2+4}{S_{2,y}}\right)\delta^\prime_{t-1} \nonumber\\
&\quad + 2\ell^2\alpha^2\left(\frac{d_1+4}{S_{2,x}} + \frac{d_2+4}{S_{2,y}}\right) \left( 1 + \frac{9\ell^2\beta^2}{1-b}\right)\mE[\ltwo{v_{T-2}}^2] + \pi_\Delta(d_1,d_2,\mu_1,\mu_2,S_2)\nonumber\\
&\leq \left[ 1 + \frac{6\ell^2\beta^2}{1-b} \left(\frac{d_1+4}{S_{2,x}} + \frac{d_2+4}{S_{2,y}} \right) \right]^{t-t^\prime}\Delta^\prime_{t^\prime} \nonumber\\
&\quad + \frac{6\ell^2\beta^2}{1-b}\left(\frac{d_1+4}{S_{2,x}} + \frac{d_2+4}{S_{2,y}}\right) \sum_{p=t^\prime}^{t-1} \left[ 1 + \frac{6\ell^2\beta^2}{1-b} \left(\frac{d_1+4}{S_{2,x}} + \frac{d_2+4}{S_{2,y}} \right) \right]^{p-t^\prime} \delta^\prime_p\nonumber\\
&\quad + 2\ell^2\alpha^2\left(\frac{d_1+4}{S_{2,x}} + \frac{d_2+4}{S_{2,y}}\right) \left( 1 + \frac{9\ell^2\beta^2}{1-b}\right) \sum_{p=t^\prime}^{t-1} \left[ 1 + \frac{6\ell^2\beta^2}{1-b} \left(\frac{d_1+4}{S_{2,x}} + \frac{d_2+4}{S_{2,y}} \right) \right]^{p-t^\prime} \mE[\ltwo{v_t}^2] \nonumber\\
&\quad + \pi_\Delta(d_1,d_2,\mu_1,\mu_2,S_2)\sum_{p=t^\prime}^{t-1} \left[ 1 + \frac{6\ell^2\beta^2}{1-b} \left(\frac{d_1+4}{S_{2,x}} + \frac{d_2+4}{S_{2,y}} \right) \right]^{p-t^\prime} \nonumber\\
&\overset{(i)}{\leq} 2\Delta^\prime_{t^\prime} +\frac{6\ell^2\beta^2}{1-b}\left(\frac{d_1+4}{S_{2,x}} + \frac{d_2+4}{S_{2,y}}\right) \sum_{p=t^\prime}^{t-1} \delta^\prime_t \nonumber\\
&\quad + 2\ell^2\alpha^2\left(\frac{d_1+4}{S_{2,x}} + \frac{d_2+4}{S_{2,y}}\right) \left( 1 + \frac{9\ell^2\beta^2}{1-b}\right) \sum_{p=t^\prime}^{t-1} \mE[\ltwo{v_t}^2] \nonumber\\
&\quad + 2\pi_\Delta(d_1,d_2,\mu_1,\mu_2,S_2),\label{eq: 52}
\end{flalign}
where $(i)$ follows from the fact that
\begin{flalign*}
&\left[ 1 + \frac{6\ell^2\beta^2}{1-b} \left(\frac{d_1+4}{S_{2,x}} + \frac{d_2+4}{S_{2,y}} \right) \right]^{p-t^\prime} \nonumber\\
&\leq \left[ 1 + \frac{6\ell^2\beta^2}{1-b} \left(\frac{d_1+4}{S_{2,x}} + \frac{d_2+4}{S_{2,y}} \right) \right]^q\\
& \overset{(ii)}{\leq} 1 + \frac{ \frac{6q\ell^2\beta^2}{1-b} \left(\frac{d_1+4}{S_{2,x}} + \frac{d_2+4}{S_{2,y}} \right) }{ 1 - \frac{6(q-1)\ell^2\beta^2}{1-b} \left(\frac{d_1+4}{S_{2,x}} + \frac{d_2+4}{S_{2,y}} \right) }\overset{(iii)}{\leq} 2,
\end{flalign*}
where $(ii)$ follows from the Bernoulli's inequality \cite{li2013some}
\begin{flalign}
(1+c)^r\leq 1 + \frac{rc}{1-(r-1)c}\quad \text{for\quad} c\in \Big[-1,\frac{1}{r-1} \Big), r>1,\label{eq: 53}
\end{flalign}
and $(iii)$ follows from the fact that $q= (1-b)\left(\frac{d_1+4}{S_{2,x}} + \frac{d_2+4}{S_{2,y}} \right)^{-1}$, $\beta=\frac{2}{13\ell}$, $\left(\frac{d_1+4}{S_{2,x}} + \frac{d_2+4}{S_{2,y}} \right)<1$, and $b=1-\frac{\beta\mu\ell}{2(\mu+\ell)}$, which further implies that
\begin{flalign*}
	&\frac{ \frac{6q\ell^2\beta^2}{1-b} \left(\frac{d_1+4}{S_{2,x}} + \frac{d_2+4}{S_{2,y}} \right) }{ 1 - \frac{6(q-1)\ell^2\beta^2}{1-b} \left(\frac{d_1+4}{S_{2,x}} + \frac{d_2+4}{S_{2,y}} \right) } \leq \frac{ \frac{6q\ell^2\beta^2}{1-b} \left(\frac{d_1+4}{S_{2,x}} + \frac{d_2+4}{S_{2,y}} \right) }{ 1 - \frac{6q\ell^2\beta^2}{1-b} \left(\frac{d_1+4}{S_{2,x}} + \frac{d_2+4}{S_{2,y}} \right) } = \frac{6\ell^2\beta^2}{1-6\ell^2\beta^2}<1.
\end{flalign*}
Letting $t^\prime=(n_T-1)q$ and taking summation of \cref{eq: 52} over $t=\{(n_T-1)q,\cdots,T-1\}$ yield
\begin{flalign}
\sum_{t=(n_T-1)q}^{T-1}\Delta^\prime_t&\leq 2(T-(n_T-1)q)\Delta^\prime_{(n_T-1)q} + \frac{6\ell^2\beta^2}{1-b}\left(\frac{d_1+4}{S_{2,x}} + \frac{d_2+4}{S_{2,y}}\right)\sum_{t=(n_T-1)q}^{T-1}\sum_{p=(n_T-1)q}^{t-1} \delta^\prime_p\nonumber\\
&\quad + 2\ell^2\alpha^2\left(\frac{d_1+4}{S_{2,x}} + \frac{d_2+4}{S_{2,y}}\right) \left( 1 + \frac{9\ell^2\beta^2}{1-b}\right) \sum_{t=(n_T-1)q}^{T-1}\sum_{p=(n_T-1)q}^{t-1} \mE[\ltwo{v_p}^2]\nonumber\\
&\quad + 2(T-(n_T-1)q) \pi_\Delta(d_1,d_2,\mu_1,\mu_2,S_2)\nonumber\\
&\overset{(i)}{\leq} 2(T-(n_T-1)q)\epsilon(S_1,\delta) + \frac{6q\ell^2\beta^2}{1-b}\left(\frac{d_1+4}{S_{2,x}} + \frac{d_2+4}{S_{2,y}}\right)\sum_{t=(n_T-1)q}^{T-2} \delta^\prime_t \nonumber\\
&\quad + 2q\ell^2\alpha^2\left(\frac{d_1+4}{S_{2,x}} + \frac{d_2+4}{S_{2,y}}\right) \left( 1 + \frac{9\ell^2\beta^2}{1-b}\right) \sum_{t=(n_T-1)q}^{T-2} \mE[\ltwo{v_t}^2]\nonumber\\
&\quad + 2(T-(n_T-1)q) \pi_\Delta(d_1,d_2,\mu_1,\mu_2)\nonumber\\
&=2(T-(n_T-1)q)\epsilon(S_1,\delta) + 6\ell^2\beta^2\sum_{t=(n_T-1)q}^{T-2} \delta^\prime_t \nonumber\\
&\quad + 2\ell^2\alpha^2(1-b) \left( 1 + \frac{9\ell^2\beta^2}{1-b}\right) \sum_{t=(n_T-1)q}^{T-2} \mE[\ltwo{v_t}^2]\nonumber\\
&\quad + 2(T-(n_T-1)q) \pi_\Delta(d_1,d_2,\mu_1,\mu_2)\nonumber\\
&\leq 2(T-(n_T-1)q)\epsilon(S_1,\delta) + 6\ell^2\beta^2\sum_{t=(n_T-1)q}^{T-2} \delta^\prime_t + 2\ell^2\alpha^2\left( 1 + 9\ell^2\beta^2 \right) \sum_{t=(n_T-1)q}^{T-2} \mE[\ltwo{v_t}^2]\nonumber\\
&\quad + 2(T-(n_T-1)q) \pi_\Delta(d_1,d_2,\mu_1,\mu_2)\nonumber\\
&\overset{(ii)}{\leq} 2(T-(n_T-1)q)\epsilon(S_1,\delta) + \frac{1}{7}\sum_{t=(n_T-1)q}^{T-2} \delta^\prime_t + 3\ell^2\alpha^2 \sum_{t=(n_T-1)q}^{T-2} \mE[\ltwo{v_t}^2]\nonumber\\
&\quad + 2(T-(n_T-1)q) \pi_\Delta(d_1,d_2,\mu_1,\mu_2), \label{eq: 54}
\end{flalign}
where $(i)$ follows from the fact that $\Delta^\prime_{(n_T-n)q}\leq \epsilon(S_1,\delta)$ for all $n\leq n_T$ (following from \Cref{lemma5}),
\begin{flalign*}
\sum_{t=(n_T-1)q}^{T-1}\sum_{p=(n_T-1)q}^{t-1} \delta^\prime_p\leq q\sum_{t=(n_T-1)q}^{T-2}\delta_t^\prime,
\end{flalign*}
and
\begin{flalign*}
\sum_{t=(n_T-1)q}^{T-1}\sum_{p=(n_T-1)q}^{t-1} \mE[\ltwo{v_t}^2]\leq q \sum_{t=(n_T-1)q}^{T-2} \mE[\ltwo{v_t}^2],
\end{flalign*}
and $(ii)$ follows because $\beta=\frac{2}{13\ell}$.
Applying steps similar to those in \cref{eq: 54} for iterations over $t=\{(n_T-n_t)q,\cdots,(n_T-n_t+1)q-1\}$ yields
\begin{flalign}
\sum_{t=(n_T-n_t)q}^{(n_T-n_t+1)q-1}\Delta^\prime_t &\leq 2q\epsilon(S_1,\delta) + \frac{1}{7} \sum_{t=(n_T-n_t)q}^{(n_T-n_t+1)q-1}\delta^\prime_t + 3\ell^2\alpha^2 \sum_{t=(n_T-n_t)q}^{(n_T-n_t+1)q-1} \mE[\ltwo{v_t}^2]\nonumber\\
&\quad + 2q \pi_\Delta(d_1,d_2,\mu_1,\mu_2). \label{eq: 55}
\end{flalign}
Taking summation of \cref{eq: 55} over $n=\{2,\cdots, n_T\}$ and combing with \cref{eq: 54} yield
\begin{flalign}
	\sum_{t=0}^{T-1}\Delta^\prime_t \leq 2T\epsilon(S_1,\delta) + \frac{1}{7} \sum_{t=0}^{T-1}\delta^\prime_t + 3\ell^2\alpha^2 \sum_{t=0}^{T-1} \mE[\ltwo{v_t}^2] + 2T \pi_\Delta(d_1,d_2,\mu_1,\mu_2).\label{eq: 56}
\end{flalign}
Then we consider the upper bound on $\sum_{t=0}^{T-1}\delta_t^\prime$. Since $m= \frac{16}{\mu\beta}-1$ and $\beta= \frac{2}{13\ell}$, \Cref{lemma20} implies
\begin{flalign}
\delta^\prime_t&\leq \frac{1}{2}\delta^\prime_{t-1} + \frac{5}{4}\Delta^\prime_{t-1} +  3\ell^2\alpha^2\mE[\ltwo{v_{t-1}}^2] + \pi_\delta(d_1,d_2,\mu_1,\mu_2).\label{eq: 57}
\end{flalign}
Applying \cref{eq: 57} recursively from $t$ to $0$ yields
\begin{flalign}
\delta^\prime_{t}\leq \frac{1}{2^t}\delta^\prime_0 + \frac{5}{4}\sum_{p=0}^{t-1}\frac{1}{2^p}\Delta^\prime_p + 3\ell^2\alpha^2\sum_{p=0}^{t-1}\frac{1}{2^p}\mE[\ltwo{v_p}^2] + \pi_\delta(d_1,d_2,\mu_1,\mu_2)\sum_{p=0}^{t-1}\frac{1}{2^p}.\label{eq: 58}
\end{flalign}
Taking the summation of \cref{eq: 58} over $t=\{0,1,\cdots,T-1\}$ yields
\begin{flalign}
\sum_{t=0}^{T-1}\delta^\prime_t&\leq \delta^\prime_0\sum_{t=0}^{T-1}\frac{1}{2^t} + \frac{5}{4}\sum_{t=0}^{T-1}\sum_{p=0}^{t-1}\frac{1}{2^p}\Delta^\prime_p + 3\ell^2\alpha^2\sum_{t=0}^{T-1}\sum_{p=0}^{t-1}\frac{1}{2^p}\mE[\ltwo{v_p}^2] \nonumber\\
&\quad + \pi_\delta(d_1,d_2,\mu_1,\mu_2) \sum_{t=0}^{T-1}\sum_{p=0}^{t-1} \frac{1}{2^p}\nonumber\\
&\leq 2\delta^\prime_0 + \frac{5}{2}\sum_{t=0}^{T-2}\Delta^\prime_t + 6\ell^2\alpha^2\sum_{t=0}^{T-2}\mE[\ltwo{v_t}^2] + 2T\pi_\delta(d_1,d_2,\mu_1,\mu_2).\label{eq: 59}
\end{flalign}

{We then decouple the bounds on $\sum_{t=0}^{T-1}\Delta^\prime_t$ and $\sum_{t=0}^{T-1}\delta^\prime_t$ in Step 2 from each other, and establish their separate relationships with the accumulative gradient estimators $\sum_{i=0}^{T-1}\mE[\ltwo{v_t}^2]$.}

Substituting \cref{eq: 59} into \cref{eq: 56} yields
\begin{flalign*}
\sum_{t=0}^{T-1}\Delta^\prime_t &\leq 2T\epsilon(S_1,\delta) + \frac{2}{7}\delta^\prime_0 + 4\alpha^2\ell^2\sum_{t=0}^{T-2} \mE[\ltwo{v_t}^2] + \frac{5}{14} \sum_{t=0}^{T-2}\Delta^\prime_t \nonumber\\
&\quad + 2T\pi_\Delta(d_1,d_2,\mu_1,\mu_2) + \frac{2}{7}T\pi_\delta(d_1,d_2,\mu_1,\mu_2),
\end{flalign*}
which implies
\begin{flalign}
\sum_{t=0}^{T-1}\Delta^\prime_t &\leq 4T\epsilon(S_1,\delta) + \frac{1}{2}\delta^\prime_0+ 7\alpha^2 \ell^2\sum_{t=0}^{T-2}\mE[\ltwo{v_t}^2] \nonumber\\
&\quad + \frac{1}{2}T\pi_\Delta(d_1,d_2,\mu_1,\mu_2) + 4T\pi_\delta(d_1,d_2,\mu_1,\mu_2). \label{eq: 60}
\end{flalign}
Substituting \cref{eq: 60} into \cref{eq: 59} yields
\begin{flalign}
	\sum_{t=0}^{T-1}\delta^\prime_t &\leq 10T\epsilon(S_1,\delta) + 4\delta^\prime_0 + 24\alpha^2\ell^2\sum_{t=0}^{T-2} \mE[\ltwo{v_t}^2] \nonumber\\
	&\quad + 10T\pi_\Delta(d_1,d_2,\mu_1,\mu_2) + 4T\pi_\delta(d_1,d_2,\mu_1,\mu_2). \label{eq: 61}
\end{flalign}

{\em \textbf{Step 3.} We bound $\sum_{i=0}^{T-1}\mE[\ltwo{v_t}^2]$, and further cancel  out the impact of $\sum_{t=0}^{T-1}\Delta^\prime_t$ and $\sum_{t=0}^{T-1}\delta^\prime_t$ by exploiting Step 2. Then, we obtain the convergence rate of $\mE[\ltwo{\nabla\Phi(\hat{x})}^2]$.}

Substituting \cref{eq: 60} and \cref{eq: 61} into \cref{eq: 51} yields
\begin{flalign}
	&\left(\frac{\alpha}{2}-\frac{L\alpha^2}{2}\right)\sum_{t=0}^{T-1}\mE[\ltwo{v_t}^2] \nonumber\\
	&\leq \Phi(x_0) - \mE[\Phi(x_T)] + (20\kappa^2 + 8)\alpha T\epsilon(S_1,\delta) + (8\kappa^2+1)\alpha\delta^\prime_0 + (48\kappa^2 + 14)\alpha^3\ell^2\sum_{t=0}^{T-1}\mE[\ltwo{v_t}^2] \nonumber\\
	&\quad + (20\kappa^2+1)\alpha T\pi_\Delta(d_1,d_2,\mu_1,\mu_2) + (8\kappa^2+8)\alpha T\pi_\delta(d_1,d_2,\mu_1,\mu_2) + \alpha T\pi(d_1,d_2,\mu_1,\mu_2) \nonumber\\
	&\overset{(i)}{\leq} \Phi(x_0) - \mE[\Phi(x_T)] + 28\kappa^2\alpha T\epsilon(S_1,\delta) + 9\kappa^2\alpha\delta^\prime_0 + 62\alpha^3L^2\sum_{t=0}^{T-1}\mE[\ltwo{v_t}^2]\nonumber\\
	&\quad + 21\kappa^2\alpha T\pi_\Delta(d_1,d_2,\mu_1,\mu_2) + 16\kappa^2\alpha T\pi_\delta(d_1,d_2,\mu_1,\mu_2) + \alpha T\pi(d_1,d_2,\mu_1,\mu_2), \label{eq: 62}
\end{flalign}
where $(i)$ follows from the fact that $L=(1+\kappa)\ell$ and $\kappa>1$. Rearranging \cref{eq: 62}, we have
\begin{flalign}
	&\left(\frac{\alpha}{2}-\frac{L\alpha^2}{2} - 62L^2\alpha^3 \right)\sum_{t=0}^{T-1}\mE[\ltwo{v_t}^2] \nonumber\\
	&\leq \Phi(x_0) - \mE[\Phi(x_T)] + 28\kappa^2\alpha T\epsilon(S_1,\delta) + 9\kappa^2\alpha\delta^\prime_0 \nonumber\\
	&\quad + 21\kappa^2\alpha T\pi_\Delta(d_1,d_2,\mu_1,\mu_2) + 16\kappa^2\alpha T\pi_\delta(d_1,d_2,\mu_1,\mu_2) + \alpha T\pi(d_1,d_2,\mu_1,\mu_2).\label{eq: 63}
\end{flalign}
Since $\alpha=\frac{1}{24L}$, we obtain
\begin{flalign}
	\frac{\alpha}{2}-\frac{L\alpha^2}{2} - 62L^2\alpha^3 = \frac{214}{13824L}\geq \frac{1}{72L}.\label{eq: 64}
\end{flalign}
Substituting \cref{eq: 64} into \cref{eq: 63} and applying Assumption \ref{ass1} yield
\begin{flalign}
	\sum_{t=0}^{T-1}\mE[\ltwo{v_t}^2]&\leq 72L(\Phi(x_0) - \Phi^*) + 84\kappa^2T\epsilon(S_1,\delta) + 27\kappa^2\delta^\prime_0 \nonumber\\
	&\quad + 63\kappa^2T\pi_\Delta(d_1,d_2,\mu_1,\mu_2) + 48\kappa^2T\pi_\delta(d_1,d_2,\mu_1,\mu_2) \nonumber\\
	&\quad + 3T\pi(d_1,d_2,\mu_1,\mu_2). \label{eq: 65}
\end{flalign}

{We then establish the convergence bound on $\mE[\ltwo{\nabla\Phi(\hat{x})}]$ based on the bounds on its estimators $\sum_{i=0}^{T-1}\mE[\ltwo{v_t}^2]$ and the two error bounds $\sum_{t=0}^{T-1}\Delta^\prime_t$, and $\sum_{t=0}^{T-1}\delta^\prime_t$. }

Observe that
\begin{flalign}
\sum_{t=0}^{T-1}\mE[\ltwo{\nabla\Phi(x_t)}^2]&\leq \sum_{t=0}^{T-1}\mE[\ltwo{\nabla\Phi(x_t)-\nabla_x f(x_t,y_t) + \nabla_x f(x_t,y_t) - v_t + v_t }^2]\nonumber\\
&\leq 3\sum_{t=0}^{T-1}\left(\mE[\ltwo{\nabla\Phi(x_t)-\nabla_x f(x_t,y_t)}^2] + \mE[\ltwo{\nabla_x f(x_t,y_t) - v_t}^2] + \mE[\ltwo{v_t}^2]\right)\nonumber\\
&\leq 3\sum_{t=0}^{T-1}\left(\kappa^2\mE[\ltwo{\nabla_y f(x_t,y_t)}^2] + \mE[\ltwo{\nabla_x f(x_t,y_t) - v_t}^2] + \mE[\ltwo{v_t}^2]\right)\nonumber\\
&\leq 3\kappa^2\sum_{t=0}^{T-1}\delta_t + 3\sum_{t=0}^{T-1}\Delta_t + 3\sum_{t=0}^{T-1}\mE[\ltwo{v_t}^2]\nonumber\\
&\overset{(i)}{\leq} 6\kappa^2\sum_{t=0}^{T-1}\delta^\prime_t + 6\sum_{t=0}^{T-1}\Delta^\prime_t + 3\sum_{t=0}^{T-1}\mE[\ltwo{v_t}^2] + 3T\pi(d_1,d_2,\mu_1,\mu_2) \label{eq: 66}
\end{flalign}
where $(i)$ follows from \Cref{lemma18}. Substituting \cref{eq: 60}, \cref{eq: 61} and \cref{eq: 65} into \cref{eq: 66} yields
\begin{flalign}
	&\sum_{t=0}^{T-1}\mE[\ltwo{\nabla\Phi(x_t)}^2] \nonumber\\
	&\leq (60\kappa^2 + 24)T\epsilon(S_1,\delta) + (24\kappa^2 + 3)\delta^\prime_0 + (60\kappa^2 + 3)T\pi_\Delta(d_1,d_2,\mu_1,\mu_2)  \nonumber\\
	&\quad + (24\kappa^2 + 24)T\pi_\delta(d_1,d_2,\mu_1,\mu_2) + (144\kappa^2\alpha^2\ell^2 + 42\alpha^2\ell^2 + 3) \sum_{t=0}^{T-1}\mE[\ltwo{v_t}^2] \nonumber\\
	&\quad + 3T\pi(d_1,d_2,\mu_1,\mu_2)\nonumber\\
	&\overset{(i)}{\leq} 84\kappa^2T\epsilon(S_1,\delta) + 27\kappa^2\delta^\prime_0 + 63\kappa^2T\pi_\Delta(d_1,d_2,\mu_1,\mu_2) + 48\kappa^2 T\pi_\delta(d_1,d_2,\mu_1,\mu_2)\nonumber\\
	&\quad + 4\sum_{t=0}^{T-1}\mE[\ltwo{v_t}^2] + 3T\pi(d_1,d_2,\mu_1,\mu_2) \nonumber\\
	&\overset{(ii)}{\leq} 288L(\Phi(x_0)-\Phi^*) + 420\kappa^2T\epsilon(S_1,\delta) + 135\kappa^2\delta^\prime_0 + 315\kappa^2T\pi_\Delta(d_1,d_2,\mu_1,\mu_2)\nonumber\\
	&\quad + 240\kappa^2T\pi_\delta(d_1,d_2,\mu_1,\mu_2) + 15T\pi(d_1,d_2,\mu_1,\mu_2).\label{eq: 67}
\end{flalign}
where $(i)$ follows from the fact that $\kappa>1$, $L=(\kappa+1)\ell$ and $\alpha=\frac{1}{24L}$, and $(ii)$ follows from \cref{eq: 65}. Recall $L=(1+\kappa)\ell$. Then, \cref{eq: 67} implies that
\begin{flalign}
	&\mE[\ltwo{\nabla\Phi(\hat{x})}^2]\nonumber\\
	&\leq 288(\kappa+1)\ell\frac{\Phi(x_0)-\Phi^*}{T} + 420\kappa^2\epsilon(S_1,\delta) + \frac{135\kappa^2\delta^\prime_0}{T}\nonumber\\
	&\quad + 315\kappa^2\pi_\Delta(d_1,d_2,\mu_1,\mu_2) + 240\kappa^2\pi_\delta(d_1,d_2,\mu_1,\mu_2) + 15\pi(d_1,d_2,\mu_1,\mu_2).\label{eq: 68}
\end{flalign}
Recalling \Cref{lemma19}, we have
\begin{flalign*}
	\epsilon(S_1,\delta) \leq \frac{(d_1+d_2)\ell^2\delta^2}{2} + \frac{4\sigma^2}{S_1} + \frac{\mu_1^2}{2}\ell^2(d_1+3)^3 + \frac{\mu_2^2}{2}\ell^2(d_2+3)^3.
\end{flalign*}
If we let $\delta^\prime_0\leq\frac{1}{\kappa}$, $T=\max\{ 1728(\kappa+1)\ell\frac{\Phi(x_0)-\Phi^*}{\epsilon^2}, \frac{810\kappa}{\epsilon^2} \}$, $S_1=\frac{40320\sigma^2\kappa^2}{\epsilon^2}$, and further let $\delta=\frac{\epsilon}{71\kappa\ell\sqrt{d_1+d_2}}$, $\mu_1= \frac{\epsilon}{71\kappa^{2.5}\ell(d_1+6)^{1.5}}$ and $\mu_2=  \frac{\epsilon}{71\kappa^{2.5}\ell(d_2+6)^{1.5}}$, according to the definition of $\epsilon(S_1,\delta)$ (\Cref{lemma19}), $\pi_\Delta(d_1,d_2,\mu_1,\mu_2)$ (\Cref{lemma7}), $\pi_\delta(d_1,d_2,\mu_1,\mu_2)$ (\Cref{lemma20}) and $\pi(d_1,d_2,\mu_1,\mu_2)$ (\cref{eq: 71}), then we have $420\kappa^2\epsilon(S_1,\delta)\leq \frac{\epsilon^2}{6}$, and
\begin{flalign*}
	315\kappa^2\pi_\Delta(d_1,d_2,\mu_1,\mu_2) + 240\kappa^2 \pi_\delta(d_1,d_2,\mu_1,\mu_2) + 15\pi(d_1,d_2,\mu_1,\mu_2)\leq \frac{\epsilon^2}{2},
\end{flalign*}
which implies
\begin{flalign*}
	\mE[\ltwo{\nabla\Phi(\hat{x})}]\leq\sqrt{\mE[\ltwo{\nabla\Phi(\hat{x})}^2]}\leq \epsilon.
\end{flalign*}
We also let $S_{2,x}=\frac{5600(d_1+4)\kappa}{\epsilon}$, $S_{2,y}=\frac{5600(d_2+4)\kappa}{\epsilon}$ and $q=\frac{2800\kappa}{13\epsilon(\kappa+1)}$. Then, the total sample complexity is given by
\begin{flalign}
&T\cdot (S_{2,x} + S_{2,y}) \cdot m + \left\lceil\frac{T}{q}\right\rceil\cdot S_1 \cdot (d_1+d_2) + T_0 \nonumber\\
&\leq \Theta\left(\frac{\kappa}{\epsilon^2} \cdot \frac{(d_1+d_2)\kappa}{\epsilon} \cdot \kappa \right) + \Theta\left( \frac{\kappa}{\epsilon} \cdot \frac{\kappa^2}{\epsilon^2} \cdot (d_1+d_2) \right) + \Theta\left( d_2 \kappa \log(\kappa) \right)\nonumber\\
&=\mco\left( \frac{(d_1+d_2)\kappa^{3}}{\epsilon^3}\right),\nonumber
\end{flalign}
which completes the proof.
\end{proof}

\section{Proof of \Cref{cor: 0thfinitesum}}\label{sc: 0stfinitesum}
In the finite-sum case, recall that
\begin{flalign*}
f(x,y)\triangleq \frac{1}{n}\sum_{i=1}^{n}F(x,y;\xi_i).
\end{flalign*}
Here we modify \Cref{al:zosredaboost} by replacing the mini-batch update used in line 6 and 7 of \Cref{al:zosredaboost} with the following update using all samples:
\begin{flalign*}
	v_t=\frac{1}{n}\sum_{i=1}^{n}\sum_{j=1}^{d_1}\frac{F(x_t+\delta e_j, y_t, \xi_i) - F(x_t-\delta e_j, y_t, \xi_i)}{2\delta} e_j,\\
	u_t=\frac{1}{n}\sum_{i=1}^{n}\sum_{j=1}^{d_2}\frac{F(x_t, y_t+\delta e_j, \xi_i) - F(x_t, y_t-\delta e_j, \xi_i)}{2\delta} e_j,
\end{flalign*}
where $e_j$ denotes the $j$-th canonical unit basis vector. In this case, if $\text{mod}(k,q)=0$, then we have
\begin{flalign}
\epsilon(S_1,\delta) \leq \frac{(d_1+d_2)\ell^2\delta^2}{2} + \frac{\mu_1^2}{2}\ell^2(d_1+3)^3 + \frac{\mu_2^2}{2}\ell^2(d_2+3)^3.\label{eq: 87}
\end{flalign}
\textbf{Case 1: $n\geq \kappa^2$ }

Substituting \cref{eq: 87} into \cref{eq: 68}, it can be checked easily that under the same parameter settings for $\delta^\prime_0$, $T$, $\delta$, $\mu_1$ and $\mu_2$ in \Cref{thm2}, we have
\begin{flalign*}
\mE[\ltwo{\nabla\Phi(\hat{x})}]\leq\sqrt{\mE[\ltwo{\nabla\Phi(\hat{x})}^2]}\leq \epsilon.
\end{flalign*}
Then, let $S_{2,x}=5600(d_1+4)\kappa\sqrt{n}$, $S_{2,y}=5600(d_2+4)\kappa\sqrt{n}$ and $q=\frac{2800\kappa\sqrt{n}}{13(\kappa+1)}$. Recalling the sample complexity result of ZO-iSARAH in the finite-sum case in \Cref{sc: zoisarah}, we have $T_0=\mathcal{O}\left( d_2(\kappa+n)\log\left( \kappa \right) \right)$. The total sample complexity is given by
\begin{flalign}
&T\cdot (S_{2,x} + S_{2,y}) \cdot m + \left\lceil\frac{T}{q}\right\rceil\cdot S_1 \cdot (d_1+d_2) + T_0 \nonumber\\
&\leq \Theta\left(\frac{\kappa}{\epsilon^2} \cdot (d_1+d_2)\sqrt{n} \cdot \kappa \right) + \Theta\left( \left\lceil\frac{\kappa^{2}}{\sqrt{n}\epsilon^2} \right\rceil \cdot n \cdot (d_1+d_2) \right) + \Theta\left( d_2(\kappa+n) \kappa \log(\kappa) \right)\nonumber\\
&=\mco\left( (d_1+d_2)(\sqrt{n}\kappa^{2}\epsilon^{-2} + n) \right) + \mco( d_2(\kappa^2+\kappa n)  \log(\kappa)).\nonumber
\end{flalign}

\textbf{Case 2: $n\leq \kappa^2$ }

In this case, we let $S_{2,x}=56(d_1+4)+420$, $S_{2,y}=56(d_2+4)+420$ and $q=1$. Then we have
\begin{flalign}
	\Delta^\prime_t\leq \epsilon_\Delta = \frac{(d_1+d_2)\ell^2\delta^2}{2} + \frac{\mu_1^2}{2}\ell^2(d_1+3)^3 + \frac{\mu_2^2}{2}\ell^2(d_2+3)^3, \quad \text{for all\quad} 0\leq t\leq T-1.\label{eq: 88}
\end{flalign}
Given the value of $S_{2,x}$ and $S_{2,y}$, it can be checked that the proofs of \Cref{lemma14} and \Cref{lemma20} still hold. Following from the steps similar to those from \cref{eq: 51} to \cref{eq: 59}, we obtain
\begin{flalign}
\sum_{t=0}^{T-1}\delta^\prime_t \leq 2\delta^\prime_0 + \frac{5}{2}T\epsilon_\Delta + 6\ell^2\alpha^2\sum_{t=0}^{T-2}\mE[\ltwo{v_t}^2] + 2T\pi_\delta(d_1,d_2,\mu_1,\mu_2).\label{eq: 89}
\end{flalign}
Substituting \cref{eq: 88} and \cref{eq: 89} into \cref{eq: 51} yields
\begin{flalign}
\left(\frac{\alpha}{2}-\frac{L\alpha^2}{2}\right)&\sum_{t=0}^{T-1}\mE[\ltwo{v_t}^2] \nonumber\\
&\overset{(i)}{\leq} \Phi(x_0) - \mE[\Phi(x_T)] + 4\alpha\kappa^2\delta^\prime_0 + 7\alpha\kappa^2T\epsilon_\Delta + 12L^2\alpha^3\sum_{t=0}^{T-2}\mE[\ltwo{v_t}^2]  \nonumber\\
&\quad + 4\alpha\kappa^2T\pi_\delta(d_1,d_2,\mu_1,\mu_2) + \alpha T\pi(d_1,d_2,\mu_1,\mu_2),\label{eq: 90}
\end{flalign}
where $(i)$ follows because $L=(1+\kappa)\ell$. Rearranging \cref{eq: 90} yields
\begin{flalign}
&\left(\frac{\alpha}{2}-\frac{L\alpha^2}{2} - 12L^2\alpha^3 \right)\sum_{t=0}^{T-1}\mE[\ltwo{v_t}^2] \nonumber\\
&\leq \Phi(x_0) - \mE[\Phi(x_T)] + 4\alpha\kappa^2\delta^\prime_0 + 7\alpha\kappa^2T\epsilon_\Delta + 4\alpha\kappa^2T\pi_\delta(d_1,d_2,\mu_1,\mu_2) + \alpha T\pi(d_1,d_2,\mu_1,\mu_2).\label{eq: 91}
\end{flalign}
Letting $\alpha=\frac{1}{8L}$, we obtain
\begin{flalign}
	\frac{\alpha}{2}-\frac{L\alpha^2}{2} - 12L^2\alpha^3=\frac{1}{32L}.\label{eq: 92}
\end{flalign}
Substituting \cref{eq: 91} into \cref{eq: 92} and applying Assumption \ref{ass1} yield
\begin{flalign}
\sum_{t=0}^{T-1}\mE[\ltwo{v_t}^2] &\leq 32L (\Phi(x_0) - \Phi^*) + 16\kappa^2\delta^\prime_0 + 28\kappa^2T\epsilon_\Delta + 16\kappa^2T\pi_\delta(d_1,d_2,\mu_1,\mu_2) \nonumber\\
&\quad + 4 T\pi(d_1,d_2,\mu_1,\mu_2).\label{eq: 93}
\end{flalign}
Substituting \cref{eq: 93} and \cref{eq: 88} into \cref{eq: 66} yields
\begin{flalign}
&\sum_{t=0}^{T-1}\mE[\ltwo{\nabla\Phi(x_t)}^2]\nonumber\\
&\leq 6\kappa^2\sum_{t=0}^{T-1}\delta^\prime_t + 6T\epsilon_\Delta + 3\sum_{t=0}^{T-1}\mE[\ltwo{v_t}^2] + 3T\pi(d_1,d_2,\mu_1,\mu_2)\nonumber\\
&\leq 12\kappa^2\delta^\prime_0 + 21\kappa^2T\epsilon_\Delta + 4\sum_{t=0}^{T-1}\mE[\ltwo{v_t}^2] + 12\kappa^2T\pi_\delta(d_1,d_2,\mu_1,\mu_2) + 3T\pi(d_1,d_2,\mu_1,\mu_2)\nonumber\\
&\leq 128L(\Phi(x_0) - \Phi^*) + 76\kappa^2\delta^\prime_0 + 133\kappa^2T\epsilon_\Delta + 76\kappa^2T\pi_\delta(d_1,d_2,\mu_1,\mu_2) + 19T\pi(d_1,d_2,\mu_1,\mu_2).\label{eq: 94}
\end{flalign}
Recall that $L=(1+\kappa)\ell$. Then, \cref{eq: 94} implies
 \begin{flalign}
 \mE[\ltwo{\nabla\Phi(\hat{x})}^2]&\leq 128(\kappa+1)\ell\frac{\Phi(x_0)-\Phi^*}{T} + 133\kappa^2\epsilon_\Delta + \frac{76\kappa^2\delta^\prime_0}{T} + 76\kappa^2\pi_\delta(d_1,d_2,\mu_1,\mu_2) \nonumber\\
 &\quad + 19\pi(d_1,d_2,\mu_1,\mu_2).\nonumber
 \end{flalign}
If we let $\delta^\prime_0\leq \frac{1}{\kappa}$, $T=\max \{ 640(\kappa+1)\ell\frac{\Phi(x_0)-\Phi^*}{\epsilon^2}, \frac{380\kappa}{\epsilon^2} \}$, and let $\mu_1$, $\mu_2$ and $\delta$ follow the same setting in \Cref{thm2}, then we have
\begin{flalign*}
\mE[\ltwo{\nabla\Phi(\hat{x})}]\leq\sqrt{\mE[\ltwo{\nabla\Phi(\hat{x})}^2]}\leq \epsilon.
\end{flalign*}
Recall the sample complexity result of ZO-iSARAH in the finite-sum case in \Cref{sc: zoisarah}. Then, we have $T_0=\mathcal{O}\left( d_2(\kappa+n)\log\left( \kappa \right) \right)$. The total sample complexity is given by
\begin{flalign}
&T\cdot (S_{2,x} + S_{2,y}) \cdot m + \left\lceil\frac{T}{q}\right\rceil\cdot S_1 \cdot (d_1+d_2) + T_0 \nonumber\\
&\leq \Theta\left(\frac{\kappa}{\epsilon^2} \cdot (d_1+d_2) \cdot \kappa \right) + \Theta\left( \left\lceil\frac{\kappa}{\epsilon^2}\right\rceil \cdot n \cdot (d_1+d_2) \right) + \Theta\left( d_2(\kappa+n) \log(\kappa) \right)\nonumber\\
&=\mco\left(  (d_1+d_2)(\kappa^2 + \kappa n)\epsilon^{-2} \right).\nonumber
\end{flalign}

\end{document}